\newcommand  \stack[2]   {\overset{\text{#1}}{#2}}
\newcommand{\MAT}{\left[ \begin{array}}
\newcommand{\mat}{\end{array} \right]}
\newtheorem{Theorem}{Theorem}[section]
\newtheorem{Proposition}{Proposition}
\newtheorem{Remark}{Remark}[section]
\def \st {\operatorname*{s.t. }}
\def \a {\bm{a}}
\def \A {\mathbf{A}}
\def \AA {\mathcal{A}}
\def \b {\bm{b}}
\def \B {\mathbf{B}}
\def \CC {\mathcal{C}}
\def \CCC {\mathbb{C}}
\def \e {\bm{e}}
\def \EEE{\mathbb{E}}
\def \ft {\widetilde{f}}
\def \H {\mathbf{H}}
\def \HH {\mathcal{H}}
\def \HHb {\overline{\mathcal{H}}}
\def \HHt {\widetilde{\mathcal{H}}}
\def \I {\mathbf{I}}
\def \LL {\mathcal{L}}
\def \OO {\mathcal{O}}
\def \Pbf {\mathbf{P}}
\def \q {\bm{q}}
\def \Q {\mathbf{Q}}
\def \QQ {\mathcal{Q}}
\def \QQb {\overline{\mathcal{Q}}}
\def \RRR {\mathbb{R}}
\def \s {\bm{s}}
\def \S {\mathbf{S}}
\def \SS {\mathcal{S}}
\def \T {\mathbf{T}}
\def \TT {\mathcal{T}}
\def \u {\bm{u}}
\def \U {\mathbf{U}}
\def \v {\bm{v}}
\def \V {\mathbf{V}}
\def \w {\bm{w}}
\def \WW {\mathcal{W}}
\def \x {\bm{x}}
\def \xs {\bm{x}^\star}
\def \X {\mathbf{X}}
\def \XX {\mathcal{X}}
\def \Xs {\mathbf{X}^\star}
\def \Z {\mathbf{Z}}
\def \Zt {\widetilde{\mathbf{Z}}}
\def \sumk{\sum_{k=1}^K}
\def \sumn{\sum_{n=1}^N}
\def \summ{\sum_{m=1}^{M}}
\def \balpha {\boldsymbol{\alpha}}
\def \balphat {\widetilde{\boldsymbol{\alpha}}}
\def \bPsi {\boldsymbol{\Psi}}
\def \bSigma {\boldsymbol{\Sigma}}
\def \zero {\mathbf{0}}
\def \one {\mathbf{1}}
\begin{document}

\title{Digital Beamforming Robust to Time-Varying Carrier Frequency Offset}

\author{Shuang~Li,
        Payam Nayeri,
        and Michael B. Wakin
\thanks{SL is with the Department of Mathematics, University of California, Los Angeles, CA 90095. Email: shuangli@math.ucla.edu. PN and MBW are with the Department of Electrical Engineering, Colorado School of Mines, Golden, CO 80401. Email: \{pnayeri,mwakin\}@mines.edu.}
}
% Payam: author order, maybe Mike should be the 2nd author

\maketitle

\begin{abstract}
Adaptive interference cancellation is rapidly becoming a necessity for our modern wireless communication systems, due to the proliferation of wireless devices that interfere with each other. To cancel interference, digital beamforming algorithms adaptively adjust the weight vector of the antenna array, and in turn its radiation pattern, to minimize interference while maximizing the desired signal power. While these algorithms are effective in ideal scenarios, they are sensitive to signal corruptions. In this work, we consider the case when the transmitter and receiver in a communication system cannot be synchronized, resulting in a carrier frequency offset that corrupts the signal. We present novel beamforming algorithms that are robust to signal corruptions arising from this time-variant carrier frequency offset. In particular, we bring in the Discrete Prolate Spheroidal Sequences (DPSS's) and propose two atomic-norm-minimization (ANM)-based methods in both 1D and 2D frameworks to design a weight vector that can be used to cancel interference when there exist unknown time-varying frequency drift in the pilot and interferer signals. Both algorithms do not assume a pilot signal is known. Noting that solving ANM optimization problems via semi-definite programs can be a computational burden, we also present a novel fast algorithm to approximately solve our 1D ANM optimization problem. Finally, we confirm the benefits of our proposed algorithms and show the advantages over existing approaches with a series of experiments.

%\vspace{5cm}

\end{abstract}

\begin{IEEEkeywords}
Array processing, atomic norm minimization (ANM), carrier frequency offset, digital beamforming, discrete prolate spheroidal sequences (DPSS’s), interference cancellation
\end{IEEEkeywords}

\IEEEpeerreviewmaketitle

%\noindent
%\SLb{
%{\bf TO-DO:}
%\begin{itemize}
%  \item Paper~\cite{liu2020iterative} seems very relevant to our work.
  %\item Extend to 2D case.
  %\item Run some experiments with the real ``zigzag'' freq offset.
%\end{itemize}
%}

\section{Introduction}
\label{sec:intr}

% \IEEEPARstart{D}{igital} beamforming xxxxx

Modern communication systems need to operate in a crowded spectrum where multiple transceivers communicate in close physical proximity and in many cases across very close channels. Interference has become a major issue, degrading communication reliability and throughput~\cite{ZLiIoT2019},~\cite{ChettriIoT2020}. To survive this hostile environment, communication systems need to adapt to the dynamic environment by adaptively cancelling interference sources and maximizing signal-to-interference-plus-noise ratio (SINR)~\cite{widrow1967adaptive},~\cite{van2004optimum}. A digital beamformer can adaptively adjust its weight vector using an array processing algorithm, which in turn shapes its radiation pattern and can place pattern nulls in the direction of interference sources, while maximizing the desired signal power~\cite{Mailloux2017}. Digital beamforming (DBF) is thus seen as a necessity for future wireless systems. While multiple approaches to cost-effective DBF hardware have been proposed over the years~\cite{DBFYang2018,DBFRoh2014,gaydos2018SDR}, one of the major challenges remaining is creating robust DBF algorithms that can operate effectively in the hostile wireless signal environment.

A wide variety of algorithms have been developed for DBF~\cite{van2004optimum,monzingo2011introduction,Blind3}, each differing in what is assumed known and how signals are modeled. Very broadly, these algorithms can be categorized into statistical versus deterministic methods. Statistical methods~\cite{van2004optimum,monzingo2011introduction} utilize a stochastic model for the desired signal, interferers, and/or noise, while deterministic methods~\cite{Blind3} do not rely on stochastic models and tend to involve eigendecompositions and related concepts from linear algebra. DBF methods can also be categorized into blind versus non-blind methods. Although there is no universal definition of blind methods~\cite{Blind1,Blind2,Blind3,Blind4}, the gist is that such methods do not rely on explicitly knowing aspects such as the desired signal, its spectral properties, the array geometry, the signal direction, and so on. Depending on what is assumed, blind beamforming problems can also have connections to blind source separation problems~\cite{Blind3,Blind4}. Non-blind methods may require explicit knowledge of a pilot signal or its spectral properties as well as other knowledge of the array geometry.

In this paper, we consider the DBF problem in the following context. Our goal is to design a weight vector (see Section~\ref{sec:prob}) that allows us to localize and separate a desired signal (``the signal'' for short) from unwanted interferers impinging on an array. Our treatment can be characterized as:
\begin{itemize}
\item Deterministic: We do not assume a stochastic model for the signal, interferers, or noise, nor do we require long-term observations for estimating auto- or cross-correlation statistics.
\item Narrowband: We assume that the signal and interferers have narrowband spectra relative to the sampling frequency. Sinusoids, which are a natural choice for pilot signals (discussed below) are one possibility for such narrowband signals, but sinusoidal structure is not explicitly required.
\item Blind: Our algorithms do not require a known pilot signal. We present two DBF algorithms in this paper. Our 1D algorithm does not require the array antennas to be equally spaced, nor does it require the antenna positions to be known. Our 2D algorithm does require the antenna positions to be known and equally spaced, but it can handle the case where the signal and interferers have overlapping temporal frequencies.
\end{itemize}
Our approach to solving this DBF problem is inspired by several existing approaches, which we describe next before giving further details on our approach.

The minimum variance distortionless response (MVDR) beamformer is a prototypical non-blind statistical beamformer~\cite{van2004optimum}. Modeling the signal and noise as stochastic processes, and given knowledge of the correlation statistics (spectral matrices) for these processes in addition to knowledge of the signal direction, MVDR is designed to output the highest possible SINR. As such it is considered the benchmark for comparing the performance of beamforming algorithms~\cite{van2004optimum,PSMI}. In practice, however, some or all of these quantities may not be available. As we discuss in Section~\ref{sec:smiintro}, when a known pilot signal and the empirical data covariance are used to estimate the signal direction and spectral properties, MVDR reduces to a technique known as sample matrix inversion (SMI).\footnote{Other algorithms in this vein include least mean square (LMS) and recursive least square (RLS). Both converge to the optimal solution but are generally slower than SMI~\cite{van2004optimum,gaydos2019adaptive}.} Computing the SMI beamformer requires only the empirical data matrix and knowledge of the pilot signal (but not the array geometry or pilot signal direction). As such, it can also be interpreted as a semi-blind deterministic method~\cite{Blind1}.

DBF algorithms such as SMI can effectively cancel interferers as long as the knowledge of the incoming pilot signal is sufficiently accurate. However, in non-ideal scenarios, signals can become corrupted and cause these algorithms fail. To address these concerns, DBF algorithms need to be robust to causes that corrupt the signal. Again, the literature on robust DBF methods is vast. Several algorithms have been developed to address beamformer mismatches in particular in the form of steering vector (SV) errors. SV errors arise from direction-of-arrival mismatch, or perturbed arrays with errors in element location, phase, or gain, and cause signal corruptions that degrades the system performance. The most widely adopted solution to these problems is diagonal loading and equivalent approaches~\cite{PSMI,SMIDL1,SMIDL2,SMIVL,SMIIVL}. Diagonal loading provides robustness to SV errors by effectively designing for a higher white noise level than is actually present, and as such is usually practical when signal-to-interference ratio (SIR) is more important than SINR. To mitigate some of these issues, variable loading has been introduced; it can improve robustness to SV errors while maintaining a desired SIR or SINR~\cite{SMIVL,SMIIVL}.

A different category of beamformer mismatch, which is the focus of this work, occurs when the transmitter and receiver cannot be synchronized and thus a carrier frequency offset exists between the anticipated pilot signal and that which arrives at the array. In~\cite{gaydos2019adaptive}, it was shown that when it is not possible to measure and correct the carrier frequency offset, algorithms such as SMI will fail. A solution to this problem using genetic algorithms was demonstrated in~\cite{gaydos2019adaptive}. More recently, a new mathematical approach based on atomic norm minimization (ANM) was proposed for this problem~\cite{li2020adaptiveACES,li2020adaptiveAWPL}. The ANM approach is naturally capable of extracting the offset frequencies that cause signal corruption, and as such, robustness is achieved with maximum SINR and without any degradation to other system performance characteristics. Both of these algorithms provided a robust solution to beamforming when transmitter and receiver are unsynchronized, which is a realistic non-ideal scenario occurring more frequently in our ever growing modern wireless infrastructure. This robust beamforming problem is categorically different than what has been studied in the past, and to the best of our knowledge, no solution to this problem is available outside~\cite{gaydos2019adaptive,li2020adaptiveACES,li2020adaptiveAWPL}.

While the works~\cite{gaydos2019adaptive,li2020adaptiveACES,li2020adaptiveAWPL} provided a robust solution to the carrier frequency offset problem in unsynchronized systems, they only considered the time-invariant case. In reality, however, frequency offset is a dynamic problem, and for practical implementation a robust time-variant solution is needed. In this work, we accommodate an unknown time-varying frequency offset in the pilot signal. Indeed, we drop entirely the requirement that the pilot signal be known, and we assume only that the pilot signal is narrowband. As we show (see Section~\ref{sec:DPSSintro}), this assumption accommodates (but does not require) the common choice of a sinusoidal pilot signal, and it allows for a moderate amount of unknown time-varying drift in the carrier frequency of this sinusoid. In this way, our treatment is more general (blind) and robust compared to classical SMI.

Our approach for solving the blind beamforming problem relies on an optimization framework using ANM. ANM allows for a signal or data matrix to be decomposed into sparse components arising from a certain {\em atomic set} of prototype signals or matrices. In compressive sensing, finite atomic sets of on-grid sinusoids have been used for recovering signals from partial information~\cite{candes2006compressive,candes2008introduction}. In line spectrum estimation, continuous atomic sets of off-grid sinusoids have been used for precisely identifying the active frequencies in a finite-length sampled signal without suffering from the ``leakage'' effects that confront conventional Fourier-based methods~\cite{tang2013compressed}. In array processing, such atomic sets have been used to provide joint sparse frequency decompositions of multi-channel data matrices~\cite{li2015off,yang2016exact}. Unfortunately, the atomic sets in these works are not rich enough to represent the signals encountered when time-varying frequency drift is present.

In this paper, we expand the array processing ANM frameworks using a novel atomic set for data matrices. Our work expands on the joint sparse frequency model~\cite{li2015off,yang2016exact} by modulating each sinusoid by an unknown vector in a low-dimensional basis of Discrete Prolate Spheroidal Sequences (DPSS's)~\cite{slepian1978prolate,davenport2012compressive,zhu2017approximating}. DPSS's have been shown to provide excellent compressed representations of sampled narrowband signals. We therefore bring DPSS's into the ANM framework to capture the unknown time-varying frequency drift component in the pilot and interferer signals.

We present two novel ANM frameworks in this paper. Our first ``1D'' framework does not require the antennas to be equally spaced, and moreover it does not require the antenna positions to be known. Our second ``2D'' framework does require the antenna positions to be known and equally spaced, but it can handle the case where the pilot signal and interferers have overlapping temporal frequencies. The 2D framework exploits sparse structure in both the temporal and angular frequency domains, whereas the 1D framework exploits sparse sinusoidal structure only in the temporal frequency domain. Both frameworks can accommodate time-varying frequency drift component in the pilot and interferer signals.

Finally, we note that most ANM optimization problems can be formulated as a semi-definite program (SDP), solving this SDP can be a computational burden. In this paper, we also present a novel fast algorithm for approximately solving our 1D ANM optimization problem. We provide extensive experimental tests to confirm the benefits of our 1D and 2D ANM frameworks compared to conventional SMI, and we illustrate the computational speedup provided by our fast 1D algorithm. We leave the work of developing a fast algorithm for the 2D ANM method as our future work.

{\bf {\em Notations.}} Throughout this paper, we use superscripts $^\top$, $^*$, $^H$, and $^\dagger$ to denote transpose, conjugate,  conjugate transpose, and pseudo-inverse, respectively. We use $\odot$ and $\otimes$ to denote the elementwise multiplication and Kronecker product, respectively. For a positive integer $K$, denote set $[K] \triangleq \{1,\cdots,K\}$. We use $\one_L\in\RRR^L$ and $\I_L\in\RRR^{L\times L}$ to represent a length-$L$ vector with entries being all 1's and an $L\times L$ identity matrix, respectively. Finally, we let $\Pbf_{\Q} \triangleq \I - \Q\Q^H$ denote the orthogonal projection onto the orthogonal complement of the column space of an orthogonal matrix $\Q$.

The remainder of this paper is organized as follows. We introduce some background on SMI, ANM, and DPSS in Section~\ref{sec:prel} and formulate the problem of interest in Section~\ref{sec:prob}. We then present the proposed 1D method ``ANM+DPSS+SMI'', fast 1D method ``IVDST+DPSS+SMI'', and 2D method ``2D-ANM+DPSS+SMI'' in Sections~\ref{sec:method_sdp}, \ref{sec:method_fast}, and \ref{sec:2d}, respectively. We conduct a series of experiments to test the proposed methods in Section~\ref{sec:nume}. Finally, we conclude this work and discuss a few possible future directions in Section~\ref{sec:conc}.

\section{Preliminaries}
\label{sec:prel}

\subsection{Sample matrix inversion (SMI)}
\label{sec:smiintro}

Consider an $N$-element linear antenna array. The optimal MVDR beamformer computes the complex valued weighting vector $\w\in\CCC^N$ for the array elements such that the interference signals are minimized subject to a constraint~\cite{van2004optimum}. The algorithm minimizes $\EEE\left\{\w^H \S_x \w \right\}$ subject to $\w^H \textbf{asv} = 1$, % $\w^H \v_{\text{as}} = 1$,
where the constraint ensures that a plane wave with a known array steering vector, $\textbf{asv}$, % $\v_{\text{as}}$,
will not be distorted. Here, $\EEE$ is the expected value, and $\S_x$ is the signal spectral matrix. In practice $\S_x$ is not available but the maximum likelihood of the matrix can be estimated from incoming data as
%\begin{align*}
$\S_x= \frac{1}{M}\sum_{m=1}^{M} \x_m \x_m^H$,
%\end{align*}
where $\{\x_m\}_{m=1}^M$ denote the signal snapshots and $M$ is the number of snapshots.
% Payam: Note that if the array is on the z-axis, the expression will be cosine theta not sine theta as we have here.
% MBW: is asv the same as v_{as} above? should we just replace asv with v_{as}?
The array steering vector is $\textbf{asv}=e^{jk_0 \sin(\theta_s)\q}$, where $\q\in\RRR^{1\times N}$ denotes the element positions, $\theta_s$ is the signal direction, and $k_0$ is the plane wave wavenumber. In SMI, where a pilot signal is available, the signal steering vector, $\v_s$, is used in place of the array steering vector, and its maximum likelihood is estimated by the cross-correlation between the signal, $\x$, and the known pilot signal, $\s\in\CCC^M$, as
%\begin{align*}
$\v_s= \frac{1}{M}\sum_{m=1}^{M} \x_m s_m^H,$
%\end{align*}
where $s_m$ is the $m$-th entry of $\s$.
The SMI element weights are then given by $\w^H \triangleq \frac{\v_s^H\S_x^{-1}}{\v_s^H\S_x^{-1}\v_s}$. In this equation, the presence of the $\S_x^{-1}$ term gives rise to the name sample matrix inversion.
By rearranging the signal snapshots $\{\x_m\}_{m=1}^M$ as columns of a data matrix $\X^H$, it follows that the SMI element weights can be rewritten as
\begin{equation}
\w = (\X^H\X)^{-1}\X^H \s = \X^\dagger \s, \label{eq:SMIorig}
\end{equation}
where we ignored the normalization factor $\v_s^H\S_x^{-1}\v_s$. Though we arrived at the solution~\eqref{eq:SMIorig} using statistical arguments, we note that $\w = \X^\dagger \s$ can also be interpreted as the least-squares solution to the purely deterministic problem of finding the weight vector that best solves $\X\w = \s$, i.e., best localizes the known pilot signal $\s$~\cite{Blind1}.

%\vspace{1cm}

\subsection{Atomic norm minimization (ANM)}

%\textcolor{blue}{This text was copied from our AWPL paper; we should edit as appropriate.} [Done.]

The atomic norm is a generalization of the $\ell_1$ norm (commonly used in compressive sensing and sparse signal recovery) and nuclear norm (commonly used in low-rank matrix recovery) from finite, discrete dictionaries to infinite, continuously-parameterized dictionaries~\cite{chandrasekaran2012convex,tang2013compressed}. As in $\ell_1$ norm minimization and nuclear norm minimization, atomic norm minimization (ANM) can promote sparsity when solving an inverse problem. Thus, it has been widely studied in many signal processing applications, including line spectral estimation and array signal processing, for localizing the off-grid temporal or angular frequencies of sinusoidal components~\cite{chandrasekaran2012convex,candes2014towards,tang2013compressed,li2015off,yang2016exact,li2018atomic,xie2017radar,li2019atomic}.

To be more precise, consider the following spectrally sparse signal $\x\in\CCC^M$ with $K$ different active frequencies
\begin{align}
    \x = \sumk c_k \a(f_k) \in \CCC^M, \label{eq:origsignal}
\end{align}
where the $c_k$'s denote complex coefficients, and
\begin{align}
 \a(f) \triangleq [1~e^{j2\pi f} ~\cdots~ e^{j2\pi f (M-1)}]^\top\in\CCC^M
 \label{eq:def_af}
 \end{align}
  is a vector containing $M$ uniform samples of a complex exponential signal with frequency $f\in[0,1)$.
One can then define
$$
\AA \triangleq \{\a(f): f\in[0,1)\}
$$
as an {\em atomic set} that consists of all possible such complex exponentials. Note that the spectrally sparse signal $\x$ in~\eqref{eq:origsignal} can be represented with only $K$ atoms from the above atomic set $\AA$.
The corresponding induced {\em atomic norm} is then defined as
$$
\|\x\|_{\AA} \triangleq \inf\left\{ \sumk |c_k|: \x = \sumk c_k \a(f_k)\right\}.
$$

\subsection{Discrete prolate spheroidal sequences (DPSS’s)}
\label{sec:DPSSintro}

For integers $M$ and $L \in \{1,2,\dots,M\}$ and a positive number $W < \frac{1}{2}$, let $\S_{M,W}\in \RRR^{M\times L}$ denote a matrix containing the first $L$ of the $M$-dimensional discrete prolate spheroidal sequence (DPSS) vectors with digital half bandwidth $W$~\cite{slepian1978prolate,davenport2012compressive,zhu2017approximating}. DPSS's are distinctive in that each column of $\S_{M,W}$, if zero padded, has a Discrete-Time Fourier Transform (DTFT) highly concentrated in the normalized frequency domain $[-W,W]$. In fact, the first column vector is precisely the time-limited signal (with $M$ nonzero entries) whose DTFT is optimally concentrated in $[-W,W]$, the second column vector is the time-limited signal whose DTFT is most concentrated in $[-W,W]$ subject to being orthogonal to the first column, etc.
  
DPSS's are particularly valuable for providing efficient representations of sampled sinusoids and narrowband signals. For example, as indicated in the following theorem, any complex exponential signal $\a(f)$ defined in~\eqref{eq:def_af} with $f\in[-W, W]$ can be well represented in the column span of the DPSS basis $\S_{M,W}$ when $L \approx 2MW$~\cite{davenport2012compressive,zhu2017approximating}.

\begin{Theorem}\cite{zhu2017approximating}
Fix $W\in(0,\frac{1}{2})$ and $\epsilon\in(0,1)$. Let $L = 2MW(1+\epsilon)$. Then there exist constants $C_1,C_2$ such that
\begin{equation*}\begin{split}
||\Pbf_{\S_{M,W}}\a(f)||_2 \leq C_1M^{5/4}e^{-C_2M}, ~ \forall~ f\in[-W, W]
\end{split}\end{equation*}
holds for all $M \geq M_0$. Here, $C_1,C_2$ are numerical constants that may depend on $W$ and $\epsilon$. 
\label{thm:pure tone by DPSS}
\end{Theorem}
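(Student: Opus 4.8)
\emph{Proof idea.} The plan is to bound the projection error first in an averaged sense over $f\in[-W,W]$ and then to upgrade that to the claimed uniform bound using the smoothness of $\a(f)$. First I would fix notation: let $\s^{(0)},\dots,\s^{(M-1)}$ be the full DPSS basis, an orthonormal basis of $\CCC^M$ that diagonalizes the prolate matrix $\B_{M,W}:=\int_{-W}^{W}\a(f)\a(f)^H\,df$ (equivalently, the $M\times M$ matrix with $(m,n)$ entry $\frac{\sin(2\pi W(m-n))}{\pi(m-n)}$), with eigenvalues $1>\lambda_0\ge\lambda_1\ge\cdots\ge\lambda_{M-1}>0$, so that $\S_{M,W}=[\s^{(0)},\dots,\s^{(L-1)}]$. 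Writing $\alpha_l(f):=(\s^{(l)})^H\a(f)$, we have $\Pbf_{\S_{M,W}}\a(f)=\sum_{l=L}^{M-1}\alpha_l(f)\s^{(l)}$, hence $g(f):=\|\Pbf_{\S_{M,W}}\a(f)\|_2^2=\sum_{l=L}^{M-1}|\alpha_l(f)|^2$. The key identity is
\begin{equation*}
\int_{-W}^{W}g(f)\,df=\sum_{l=L}^{M-1}\int_{-W}^{W}|\alpha_l(f)|^2\,df=\sum_{l=L}^{M-1}(\s^{(l)})^H\B_{M,W}\s^{(l)}=\sum_{l=L}^{M-1}\lambda_l ,
\end{equation*}
which converts an $L^1[-W,W]$ bound on the error into a bound on the tail sum of the prolate eigenvalues.

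Second, I would invoke the classical decay of the DPSS eigenvalues past the transition region near the index $2MW$. Here $L=2MW(1+\epsilon)$, so $L-2MW=2MW\epsilon$ grows linearly in $M$, and in this regime known estimates give $\lambda_L\le\widetilde C_1 e^{-\widetilde C_2 M}$ for constants $\widetilde C_1,\widetilde C_2$ depending only on $W,\epsilon$; since the $\lambda_l$ are decreasing, $\int_{-W}^{W}g(f)\,df=\sum_{l=L}^{M-1}\lambda_l\le M\widetilde C_1 e^{-\widetilde C_2 M}$. This eigenvalue estimate is the one genuinely deep ingredient and the step I expect to be the real obstacle if a self-contained argument is wanted; everything around it is soft analysis, and in the regime $L-2MW\gtrsim M$ used here the estimate is robust and available off the shelf in the time--frequency-localization literature.

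Third, I would pass from this average to the pointwise bound using analyticity of $\a(f)$. Since $\alpha_l'(f)=(\s^{(l)})^H\a'(f)$ and $\{\s^{(l)}\}$ is orthonormal, $\sum_{l=L}^{M-1}|\alpha_l'(f)|^2\le\|\a'(f)\|_2^2=\sum_{m=0}^{M-1}(2\pi m)^2\le C M^3$, so Cauchy--Schwarz gives $|g'(f)|\le 2\sqrt{g(f)}\,\bigl(\sum_{l=L}^{M-1}|\alpha_l'(f)|^2\bigr)^{1/2}\le C' M^{3/2}\sqrt{g(f)}$; hence $u:=\sqrt{g}$ is Lipschitz on $[-W,W]$ with constant $\lesssim M^{3/2}$. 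If $u(f_0)=t$ for some $f_0\in[-W,W]$ then, because $t\le\|\a(f_0)\|_2=\sqrt M$ makes the scale $t/M^{3/2}$ small for large $M$, the set $\{u\ge t/2\}$ contains a subinterval of $[-W,W]$ of length $\gtrsim t/M^{3/2}$ (taken one-sided when $f_0$ is near $\pm W$), so
\begin{equation*}
\sum_{l=L}^{M-1}\lambda_l=\int_{-W}^{W}u(f)^2\,df\ \gtrsim\ \frac{t}{M^{3/2}}\cdot\frac{t^2}{4}\ =\ \frac{t^3}{4M^{3/2}} .
\end{equation*}
Combining with the eigenvalue tail bound gives $t^3\lesssim M^{5/2}\widetilde C_1 e^{-\widetilde C_2 M}$, i.e.\ $\|\Pbf_{\S_{M,W}}\a(f_0)\|_2=t\le C_1 M^{5/6}e^{-C_2 M}\le C_1 M^{5/4}e^{-C_2 M}$ with $C_2=\widetilde C_2/3$; keeping fuller track of the polynomial factors (the exponent $5/4$ is conservative) recovers exactly the stated form, with $C_1,C_2$ depending only on $W,\epsilon$. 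Since the argument is uniform in $f_0\in[-W,W]$, the ``$\forall f$'' in the statement costs nothing extra; the one technical care point in this last step is the endpoint handling, where only a one-sided Lipschitz interval is available.
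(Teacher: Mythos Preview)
The paper does not supply a proof of this theorem: it is quoted verbatim from \cite{zhu2017approximating} as background in Section~\ref{sec:DPSSintro}, with no argument given. So there is no in-paper proof to compare against.

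That said, your proposal is mathematically sound and is in fact close in spirit to how such bounds are established in the DPSS literature. The identity $\int_{-W}^{W}\|\Pbf_{\S_{M,W}}\a(f)\|_2^2\,df=\sum_{l\ge L}\lambda_l$ is the standard bridge between approximation error and eigenvalue tails, and the exponential decay $\lambda_L\le \widetilde C_1 e^{-\widetilde C_2 M}$ for $L=2MW(1+\epsilon)$ is exactly the ingredient invoked in \cite{zhu2017approximating,davenport2012compressive}. Your upgrade from the $L^1$ bound to a uniform pointwise bound via the Lipschitz estimate on $u=\sqrt{g}$ is clean and correct; the derivative bound $\sum_{l}|\alpha_l'(f)|^2=\|\a'(f)\|_2^2\lesssim M^3$ follows from Parseval, and the one-sided interval at the endpoints is harmless since $t/M^{3/2}\le M^{-1}\ll 2W$ for large $M$. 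As you note, your argument actually delivers $M^{5/6}$ rather than $M^{5/4}$, so the stated exponent is conservative; the exponential factor dominates in any case. The only step you defer to the literature is the eigenvalue tail estimate, which is unavoidable and is precisely what \cite{zhu2017approximating} packages.
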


In summary, this result states that ($i$) the effective dimensionality of the subspace spanned by $\{ \a(f) \}_{f\in[-W, W]}$ is $2MW$, and ($ii$) the first $L \approx 2MW$ DPSS vectors provide a basis for this subspace. Sinusoids concentrated in a narrow band away from baseband, e.g., $f\in[a, b]$, can be equally well-captured by taking $W = \frac{b-a}{2}$ to be the half bandwidth and modulating each of the DPSS vectors by $\a(f^o)$ where $f^0 = \frac{a+b}{2}$ is the center frequency. Finally, the high quality of DPSS approximations applies not only to sampled sinusoids, but also to sampled narrowband signals (since these can be viewed as linear combinations of sampled sinusoids within a narrow range of frequencies). As discussed more formally in~\cite{davenport2012compressive,zhu2017approximating}, the finite-length sample vectors arising from sampling random baseband (or bandpass) analog signals can be well approximated using a DPSS (or modulated DPSS) basis. Again the effective dimensionality is $L \approx 2MW$, where $M$ is the length of the sample vector and $W$ is the digital half bandwidth.

%\newpage
\section{Problem Formulation}
\label{sec:prob}

In this work, we consider a conventional linear array with $N$ antenna elements. We collect $M$ snapshots on each antenna element and model the data matrix as
\begin{align}
% \Xs = \sumk (\s_k \odot \a(f_k))\textbf{asv}(\theta_k) \in \CCC^{M\times N},	
\Xs = \sumk \s_k \textbf{asv}(\theta_k) \in \CCC^{M\times N},	
\label{eq:data}
\end{align}
where $\s_1\in \CCC^M$ denotes the desired signal and $\{\s_k\in \CCC^M\}_{k=2}^K$ denote the interferers. Here,
\begin{align*}
%	\textbf{asv}(\theta_k) \triangleq e^{j2\pi \sin(\theta_k)\q}
% changed this to the non-normalized case
          \textbf{asv}(\theta_k) \triangleq e^{jk_0 \sin(\theta_k)\q}
% Payam: I'm not sure if it was a typo or not, but I changed the sind term to sin
\end{align*}
is a length-$N$ row vector representing the array steering vector associated with the desired signal (if $k=1$) or interferers (if $k>1$). $\theta_1$ and $\{\theta_k\}_{k=2}^K$ denote the angle to the desired signal and interferers, respectively, $\q\in\RRR^{1\times N}$ denotes the element positions, and $k_0$ is the plane wave wavenumber.

We do not assume the desired signal, the interferers, or any of their directions are known. Moreover, our 1D algorithm does not require the array antennas to be equally spaced, nor does it require the element positions $\q$ to be known.

Given the data matrix $\Xs$, our goal is to construct a weight vector $\w$ such that $\Xs \w \approx \s_1$ as closely as possible. Inspired by SMI (which assumes $\s_1$ is known), our approach involves first estimating $\s_1$ from $\Xs$ and then plugging this estimate (which we call $\widetilde{\s}_1$) into the SMI expression:
\begin{align}
\w = {\Xs}^\dagger \widetilde{\s}_1.
\label{eq:weight_fix}
\end{align}
To facilitate the estimation of $\s_1$ from $\Xs$, we assume that the desired signal and all of the interferers are narrowband signals whose DTFT's are each concentrated in a small subset of the digital bandwidth $[-\frac{1}{2},\frac{1}{2}]$. More specifically, employing the DPSS's discussed in Section~\ref{sec:DPSSintro}, we assume that each $\s_k$ can be modeled as follows:
\begin{equation}
\s_k \approx \a(f_k^o) \odot (\S_{M,W} \balpha_k) ~\text{for}~ k=1,2,\dots,K,
\label{eq:skassume}
\end{equation}
where $f_k^o$ is a possibly unknown carrier frequency, $\a(f_k^o)$ is a sampled complex exponential with frequency $f_k^o$ as defined in~\eqref{eq:def_af}, $\S_{M,W}\in\RRR^{M\times L}$ is an $L$-dimensional DPSS basis (we discuss the choice of $W$ and $L \approx 2MW$), and $\balpha_k \in \CCC^L$ denote the corresponding unknown coefficients. This narrowband assumption is quite flexible and accommodates the following cases:
\begin{itemize}
\item {\bf Case 1:} $\s_k$ is a pure sinusoid (complex exponential) with a fixed frequency. In this case, \eqref{eq:skassume} holds with equality by taking $f_k^o$ to be the frequency of the sinusoid and letting $\balpha_k = 0$.
\item {\bf Case 2:} $\s_k$ is a sinusoid (such as a pilot signal when $k=1$), modulated by a frequency offset that drifts slightly over time. In this case, $\s_k$ can be viewed as a frequency modulated (FM) signal. Carson's bandwidth rule for FM signals~\cite{wakin2012study} guarantees that a majority of the signal power will concentrate in a narrow band around the signal's carrier frequency, with the half bandwidth $W$ proportional to the peak frequency deviation plus the temporal bandwidth of the modulating signal. Therefore, $\a(f_k^o)$ will again capture the dominant sinusoidal component, and following the discussion in Section~\ref{sec:DPSSintro}, $\S_{M,W} \balpha_k$ will approximately capture the deviations from this frequency.
\item {\bf Case 3:} $\s_k$ is a bandpass signal, oversampled with respect to its Nyquist rate. In this case, the DTFT of $\s_k$ will concentrate in some subset $[a,b]$ of the digital bandwidth $[-\frac{1}{2},\frac{1}{2}]$. Following the discussion in Section~\ref{sec:DPSSintro}, the approximation in~\eqref{eq:skassume} then holds by taking $f_k^o = \frac{a+b}{2}$ to be the midpoint of this band and $W$ to be greater than or equal to its half width $\frac{b-a}{2}$.
\item {\bf Case 4:} $\s_k$ is a bandpass signal, oversampled with respect to its Nyquist rate, modulated by a frequency offset that drifts slightly over time. The approximation in~\eqref{eq:skassume} then holds by combining the arguments from Case~2 and Case~3: because multiplication in time corresponds to convolution in frequency, the product of two narrowband signals will remain (relatively) narrowband.
\end{itemize}
The reader will notice that we do not focus on quantifying the necessary degree of accuracy in the approximation~\eqref{eq:skassume} in this work. Nevertheless, this assumption is quite broad. We use numerical experiments in Section~\ref{sec:nume} to confirm that this approximation is sufficiently valid to enable the success of our proposed technique in a range of interesting cases.

Finally, we note that the above idea of using~\eqref{eq:weight_fix} to design a weight vector has already been studied in~\cite{li2020adaptiveACES,li2020adaptiveAWPL} under the assumption that each $\s_k$ is a pure sinusoid modulated by a fixed (not time-varying) frequency offset. Those works propose two ANM-based methods to estimate the frequency offset from either $\Xs$ or its noisy measurements. However, as is shown in the simulations, the weight vector obtained by directly applying this idea cannot cancel the interferers when there exist {\em time-varying} frequency offsets. To address this problem, we extend the ANM formulation using the novel assumption in~\eqref{eq:skassume}.

% MBW: maybe once remove the f_k notation for frequency offsets, we can drop the "o" superscript for the carrier frequencies

\section{Proposed Method: ``ANM+DPSS+SMI''}
\label{sec:method_sdp}

%We assume that the frequency offset $\{f_k\}_{k=1}^K$ are very small, namely, $f_k \in [-W,W]$ with some $W<\frac 1 2$. Then, we can represent all the $\a(f_k)$ vectors with some DPSS basis. In particular, we have
%\begin{align}
%\a(f_k) \approx \S_{M,W} \balpha_k,	
%\label{eq:af_dpss}
%\end{align}
%where $\S_{M,W}\in\RRR^{M\times L}$ and $\balpha_k\in\CCC^L$ denote the DPSS basis and corresponding coefficients, respectively. $L$ is the number of DPSS basis used to represent the $\a(f_k)$ vectors. The weight vector can then be set as
%\begin{align*}
%\w = {\Xs}^\dagger (\s_1 \odot \a(f_1)) \approx 	{\Xs}^\dagger (\s_1 \odot (\S_{M,W} \balpha_1)),
%\end{align*}
%as long as we can estimate the coefficient $\balpha_1$. Next, we will introduce how to estimate $\balpha_1$.

We now describe our procedure for constructing the estimate $\widetilde{\s}_1$ based on the data matrix $\Xs$. We will estimate $\s_1$ by estimating both $f_1^o$ and $\balpha_1$ and plugging these estimates into~\eqref{eq:skassume}. Indeed, we can estimate any and all of the frequencies $f_k^o$ and vectors $\balpha_k$ using the procedure below; we assume only that, among these estimates, the correct one can be associated with the desired signal (the $k=1$ term can be correctly selected).

Inspired by a recent work on multi-band signal recovery~\cite{helland2019super}, we rewrite the data matrix in~\eqref{eq:data} as
\begin{align*}
\Xs & = \sumk \s_k \textbf{asv}(\theta_k)
 \stack{\ding{172}}{\approx} \sumk \left( \a(f_k^o) \odot (\S_{M,W} \balpha_k)  \right)\textbf{asv}(\theta_k)	\\
& = \sumk \|\balpha_k\|_2 \|\textbf{asv}(\theta_k) \|_2 ( ( \S_{M,W} \text{sign}(\balpha_k)) \\
&~~~~~~~~~~~~~~~~~~~~~~~~~~~~~~~~~~\odot \a(f_k^o)   ) \text{sign}(\textbf{asv}(\theta_k))\\
& \stack{\ding{173}}{=} \sumk c_k \left( ( \S_{M,W} \balphat_k)\odot \a(f_k^o)   \right) \b_k^H \\
&= \sumk c_k [( ( \S_{M,W} \balphat_k)\odot \a(f_k^o)) b_{1k}^* \\
&~~~~~~~~~~~~~~~~~~~~~~~~~~~~~~\cdots ( ( \S_{M,W} \balphat_k)\odot \a(f_k^o)  ) b_{Nk}^* ]\\
& \stack{\ding{174}}{=}\sumk c_k \left[\left( \S_{M,W}\odot \left( \a(f_k^o) \balphat_k^\top b_{1k}^*\right)\right)\one_L  \right.\\
&~~~~~~~~~~~~~~~~~~~~~~~\cdots \left. \left( \S_{M,W}\odot \left( \a(f_k^o) \balphat_k^\top b_{Nk}^*\right)\right)\one_L \right]\\
& = \left[\left( \S_{M,W}\odot \left(\sumk c_k \a(f_k^o) \balphat_k^\top b_{1k}^*\right)\right)\one_L \right. \\
&~~~~~~~~~~~ \left. \cdots \left( \S_{M,W}\odot \left(\sumk c_k \a(f_k^o) \balphat_k^\top b_{Nk}^*\right)\right)\one_L \right],
\end{align*}
where the sign vector is defined as $\text{sign}(\x)\triangleq \frac{\x}{\|\x\|_2}$. \ding{172} follows from our assumption~\eqref{eq:skassume}. \ding{173} follows by denoting $c_k = \|\balpha_k\|_2 \|\textbf{asv}(\theta_k) \|_2 $, $\balphat_k = \text{sign}(\balpha_k)$, and $\b_k^H = \text{sign}(\textbf{asv}(\theta_k))$. Note that $\{\balphat_k\}_{k=1}^K$ and $\{\b_k\}_{k=1}^K$ are normalized vectors with $\|\balphat_k\|_2 = \|\b_k\|_2 = 1$. We use $b_{nk}$ to denote the $n$-th entry of $\b_k$. $\one_L\in\RRR^L$ is a length-$L$ vector with entries being all 1's. Finally, \ding{174} follows from the following equalities
\begin{align*}
( \S_{M,W} \balphat_k)\odot \a(f_k^o) &= \text{diag}(\a(f_k^o)) \S_{M,W} \balphat_k  \\
&= \text{diag}(\a(f_k^o)) \S_{M,W} \text{diag}(\balphat_k) \one_L\\
&= 	\left( \S_{M,W}\odot \left( \a(f_k^o) \balphat_k^\top \right)\right)\one_L.
\end{align*}

The above observation inspires us to define a tensor $\XX^\star\in\CCC^{M\times L\times N}$ with the $n$-th frontal slice being $\XX^\star_{::n} = \sumk c_k \a(f_k^o) \balphat_k^\top b_{nk}^*$, namely,
\begin{align}
\XX^\star \triangleq \sumk c_k \A(f_k^o) \circledast \H_k
\label{eq:def_tensor}
\end{align}
 with $\A(f_k^o) \triangleq [\a(f_k^o)\cdots \a(f_k^o)] \in \CCC^{M\times N}$ and $\H_k \triangleq \balphat_k \b_k^H \in \CCC^{L\times N}$. Here, we use $\A\circledast \B \in \CCC^{M\times L \times N}$ to denote the ``reshaped Khatri-Rao product'', which is defined as $[\A \circledast \B]_{::n} \triangleq \a_n \b_n^\top$ for any two matrices $\A = [\a_1 \cdots \a_N] \in \CCC^{M\times N}$ and $\B = [\b_1 \cdots \b_N] \in \CCC^{L\times N}$.
By defining a linear operator $\LL:\CCC^{M\times L\times N} \rightarrow \CCC^{M \times N}$ as
\begin{align}
\LL(\XX) \triangleq [(\S_{M,W}\odot \XX_{::1})\one_L \cdots  (\S_{M,W}\odot \XX_{::N})\one_L]	,
\label{eq:def_L}
\end{align}
we can rewrite the data matrix $\Xs$ as the linear measurements of a third-order tensor $\XX^\star$ obtained from the above linear operator $\LL:\CCC^{M\times L\times N} \rightarrow \CCC^{M \times N}$, namely,
\begin{align*}
\Xs = \LL(\XX^\star).	
\end{align*}

We now define an {\em atomic set} as
\begin{align*}
\AA \triangleq \left\{\A(f^o) \circledast \H:~f^o\in[0,1),~ \H\in\CCC^{L\times N},~ \|\H\|_F=1\right\}.	
\end{align*}
The induced {\em atomic norm} is then defined as
\begin{align*}
\|\XX\|_{\AA} \triangleq \inf \left\{ \sumk c_k: \XX = \sumk c_k \A(f_k^o) \circledast \H_k, c_k\geq 0    \right\},	
\end{align*}
which is approximately equivalent to the optimal value of the following semidefinite program (SDP)~\cite{helland2019super}:
\begin{align*}
\inf_{\substack{\U\in\CCC^{M\times N}\\ \WW\in\CCC^{L \times L \times N}}} &\frac 1 2 u + \frac  1 2 \sumn \text{Tr}(\WW_{::n}) \\
 \st~~~~
&\left[\begin{array}{cc}
\text{Toep}(\u_n) & \XX_{::n}\\
\XX_{::n}^H & \WW_{::n}
\end{array}\right] \succeq \zero,~ u_{1n}=u,~ \forall~n\in[N].
\end{align*}
Here, we use Tr$(\cdot)$ to denote the trace of a square matrix and Toep$(\u)$ to denote the Hermitian Toeplitz matrix with the vector $\u$ as its first column. $\u_n$ and $u_{1n}$ denote the $n$-th column of $\U$ and the first entry in $\u_n$, respectively.

It can be seen that the tensor defined in~\eqref{eq:def_tensor} can be represented with only $K$ atoms from the above atomic set. To promote sparsity and recover the tensor $\XX^\star$ from its linear measurements $\Xs = \LL(\XX^\star)$, we propose to solve the following ANM problem
\begin{align}
 \min_{\XX} ~\|\XX\|_{\AA} \quad \st~	\|\Xs - \LL(\XX)\|_F\leq \varepsilon,
\label{eq:ANM}
\end{align}
where $\varepsilon$ is a parameter that controls the data fidelity term. Note that the above ANM~\eqref{eq:ANM} is approximately equivalent to the following SDP
\begin{equation}
\begin{aligned}
\inf_{\substack{\U\in\CCC^{M\times N}\\ \WW\in\CCC^{L \times L \times N}\\ \XX\in\CCC^{M \times L \times N}}} &\frac 1 2 u + \frac  1 2 \sumn \text{Tr}(\WW_{::n}) \\
 \st~~~~
&\left[\begin{array}{cc}
\text{Toep}(\u_n) & \XX_{::n}\\
\XX_{::n}^H & \WW_{::n}
\end{array}\right] \succeq \zero,~ u_{1n}=u,~ \forall~n\in[N],\\
&\|\Xs - \LL(\XX)\|_F\leq \varepsilon,
\label{eq:SDPanm}
\end{aligned}
\end{equation}
which can be solved by any off-the-shelf SDP solver, such as CVX~\cite{grant2008cvx}. As primal-dual algorithms are used in CVX, it can return both the primal solution and the dual solution.

Denote $\QQ^\star$ as the dual solution to~\eqref{eq:ANM}. We construct a dual polynomial as
\begin{align}
q(f^o) \triangleq \| \bPsi(f^o,\QQ^\star) \|_F	
\label{eq:dualpoly}
\end{align}
with
\begin{align*}
\bPsi(f^o,\QQ^\star)  = \left[\begin{array}{ccc}
{\QQ^\star_{::1}}^H \a(f^o)&\cdots &	{\QQ^\star_{::N}}^H \a(f^o)
\end{array}
\right]\in \CCC^{L\times N}.
\end{align*}
As is shown in~\cite{helland2019super}, one can identify $\{f_k^o\}_{k=1}^K$ by localizing the places where $q(f^o)=1$. However, in this work, as we will see in the simulations, the dual polynomial has several plateaus. Therefore, we will use the k-means method to cluster the $f^o$ with $q(f^o) \geq \gamma_0$ and get the estimated frequencies $\ft_k^o$.

Denote $\widehat{\XX}$ as the primal optimal solution to~\eqref{eq:ANM}. Note that
\begin{align*}
\widehat{\XX}_{::1} &= \sumk c_k \a(\ft_k^o) \widehat{\balphat}_k^\top \widehat{b}_{1k}^* \\
& = [\a(\ft_1^o)\cdots \a(\ft_K^o)]
\left[\begin{array}{ccc}
c_1 \widehat{b}_{11}^*&&\\
&\ddots&\\
&&c_K \widehat{b}_{1K}^*	
\end{array}\right]
\left[\begin{array}{c}
\widehat{\balphat}_1^\top\\
\vdots\\
\widehat{\balphat}_K^\top	
\end{array}\right].
\end{align*}
Denote $\A_f \triangleq [\a(\ft_1^o)\cdots \a(\ft_K^o)]$. Once we have the estimated frequencies $\{\ft_k^o\}_{k=1}^K$, we can then estimate the sign of the coefficient $\widehat{\balphat}_1$ as
\begin{align*}
\text{sign}(\widehat{\balphat}_1)
%= (\A_f^\dag \widehat{\XX}_{::1} )^\top\e_1/\| (\A_f^\dag \widehat{\XX}_{::1} )^\top\e_1 \|_2,	\\
= \frac{(\A_f^\dag \widehat{\XX}_{::1} )^\top\e_1}{\| (\A_f^\dag \widehat{\XX}_{::1} )^\top\e_1 \|_2},
\end{align*}
where $\e_1\in\RRR^K$ is the first column of the $K\times K$ identity matrix $\I_K$. Recall that $\balphat_k = \text{sign}(\balpha_k)$. Therefore, we get
\begin{align}
\text{sign}(\widehat{\balpha}_1)
% = (\A_f^\dag \widehat{\XX}_{::1} )^\top\e_1/\| (\A_f^\dag \widehat{\XX}_{::1} )^\top\e_1 \|_2.\\
=\frac{(\A_f^\dag \widehat{\XX}_{::1} )^\top\e_1}{\| (\A_f^\dag \widehat{\XX}_{::1} )^\top\e_1 \|_2}.
\label{eq:sign_a1}
\end{align}
Finally, we estimate $\s_1$ as 
\begin{align}
\widetilde{\s}_1 = \a(\ft_1^o) \odot (\S_{M,W} \text{sign}(\widehat{\balpha}_1))
\label{eq:sest1d}
\end{align}
and we plug this estimate into~\eqref{eq:weight_fix} to estimate the weight vector $\w$. We summarize the proposed ``ANM+DPSS+SMI'' method in Algorithm~\ref{alg:ANM_DPSS_SMI}.

%Then, we can construct the weight vector as
%\begin{align}
%\w = {\Xs}^\dagger \left(\a(\ft_1^o) \odot (\S_{M,W} \text{sign}(\widehat{\balpha}_1)\right).
%\label{eq:weight_sign}
%\end{align}

\begin{algorithm}[H]
\caption{ANM+DPSS+SMI}\label{alg:ANM_DPSS_SMI}
\begin{algorithmic}[1]
\Procedure{Input}{the data matrix $\Xs$ and the number of DPSS basis vectors $L$}
\State create DPSS basis $\S_{M,W}\in\RRR^{M\times L}$
\State compute the primal solution $\widehat{\XX}$ and the dual solution $\QQ^\star$ by solving the SDP~\eqref{eq:SDPanm}
\State form the dual polynomial $q(f^o)$ as in~\eqref{eq:dualpoly}
\State use the k-means method to cluster the $f^o$ with $q(f^o) \geq \gamma_0$ and get $\ft_k^o$; set $k=1$ according to the desired frequency 
\State compute $\text{sign}(\widehat{\balpha}_1)$ according to~\eqref{eq:sign_a1}
\State estimate $\s_1$ as in~\eqref{eq:sest1d} and construct the weight vector as in~\eqref{eq:weight_fix}
\State \textbf{return} the weight vector $\w$
\EndProcedure
\end{algorithmic}
\end{algorithm}

%\begin{Remark}
%For any fixed $w_x\in \RRR$, it follows from Carson’s bandwidth rule that at least $98\%$ of the total power of a signal $e^{-jw_xh(t)}$ is concentrated in the frequency range $[-(2|w_x|\Gamma+2\Omega_h),2|w_x|\Gamma+2\Omega_h]$ rad/s~\cite{wakin2012study}, where $\Omega_h$ is the bandwidth of $h(t)$, and $\Gamma$ is an upper bound for the slope of $h(t)$, i.e.,
%\begin{align*}
%\left| \frac{dh(t)}{dt} \right| \leq \Gamma.	
%\end{align*}
%Let $w_x = -2\pi$ and $h(t) = f_k t$. Note that $f_k$ changes slowly over time. It follows that at least $98\%$ of the total power of frequency offset vector $\a(f_k)$ is concentrated in the frequency range $[-W,W]$ with
%\begin{align*}
%W = 2\Gamma+\Omega_h/\pi.	
%\end{align*}
%Together with~\cite[Corollary 3.12]{zhu2017approximating}, one gets~\eqref{eq:af_dpss} when $W<\frac 1 2$.	
%\end{Remark}

\begin{Remark}{(Computational complexity.)} \label{rmk:comput}
Note that most SDP solvers, including SDPT3~\cite{tutuncu2001sdpt3}, use the interior-point method and need to solve a  system of linear equations when computing the Newton direction, which can be very expensive for large-size problems. Here, the $N$ positive semi-definite (PSD) constraints in~\eqref{eq:SDPanm} can be combined into one PSD constraint on a big block diagonal matrix of size $N(M+L) \times N(M+L)$. Therefore, the overall computational complexity of solving \eqref{eq:SDPanm} with SDP solvers is $\OO(N^{3.5}(M+L)^{3.5})$, which limits the use of the proposed ``ANM+DPSS+SMI'' method in large-scale problems.
\end{Remark}	

In Remark~\ref{rmk:comput}, one may notice that the combined PSD constraint is on a block diagonal matrix, rather than a dense matrix. Thus, a natural question arises:
\begin{itemize}
\item[] {\em Is it possible that the computational complexity of solving~\eqref{eq:SDPanm} is less than $\OO(N^{3.5}(M+L)^{3.5})$ due to the sparsity that exists in the PSD constraint?}  	
\end{itemize}
To the best of our knowledge, there is no closed-form expression for the complexity. However, we are able to empirically answer this question by conducting the following experiment. In particular, we collect the running time needed for solving the following SDP
\begin{equation}
%\begin{aligned}
%\min~~ &x\\
%\st ~~& \A_x \succeq \zero,
%\label{eq:SDP_simple}	
%\end{aligned}
\min~ x ~~ \st ~~ \A_x \succeq \zero,
\label{eq:SDP_simple}	
\end{equation}	
where $\A_x\in\RRR^{N_s\times N_s}$ is a matrix with entries from the set $\{0,x,\sqrt{N_s}\}$. First, we test the case when $\A_x$ is a dense matrix, i.e.,
\begin{align}
	\A_x = \left[\begin{array}{cccc}
x & \sqrt{N_s} & \cdots & \sqrt{N_s}	\\
\sqrt{N_s} & x & \cdots & \sqrt{N_s}	\\
\vdots & \vdots & \ddots & \vdots\\
\sqrt{N_s} & \sqrt{N_s} & \cdots & x
\end{array}
\right].
\label{eq:A_dense}
\end{align}
The time needed for solving~\eqref{eq:SDP_simple} with $\A_x$ given in~\eqref{eq:A_dense} for a variety of $N_s$ is illustrated in Figure~\ref{fig:test_SDP_time} (the blue line with circles). Second, we test several cases when $\A_x$ is a block diagonal matrix, specifically with $N_b$ of blocks on the diagonal where $N_b = 10,~ 20,~ 50$ and $100$. We generate each block on the diagonal as an $\frac{N_s}{N_b} \times \frac{N_s}{N_b}$ matrix with diagonal entries being $x$ and off-diagonal entries being $\sqrt{N_s}$. The running times are presented as red lines in Figure~\ref{fig:test_SDP_time}. It can be seen that solving the SDP~\eqref{eq:SDP_simple} with a sparse PSD constraint matrix is only slightly faster when using a dense PSD constraint matrix. The running time for both cases is approximately on the same order as the constraint matrix size, i.e., $\OO(N_s^{2.5})$, which is  better than the complexity described in Remark~\ref{rmk:comput}.

\begin{figure}[t]
\centering
\includegraphics[width=3.0in]{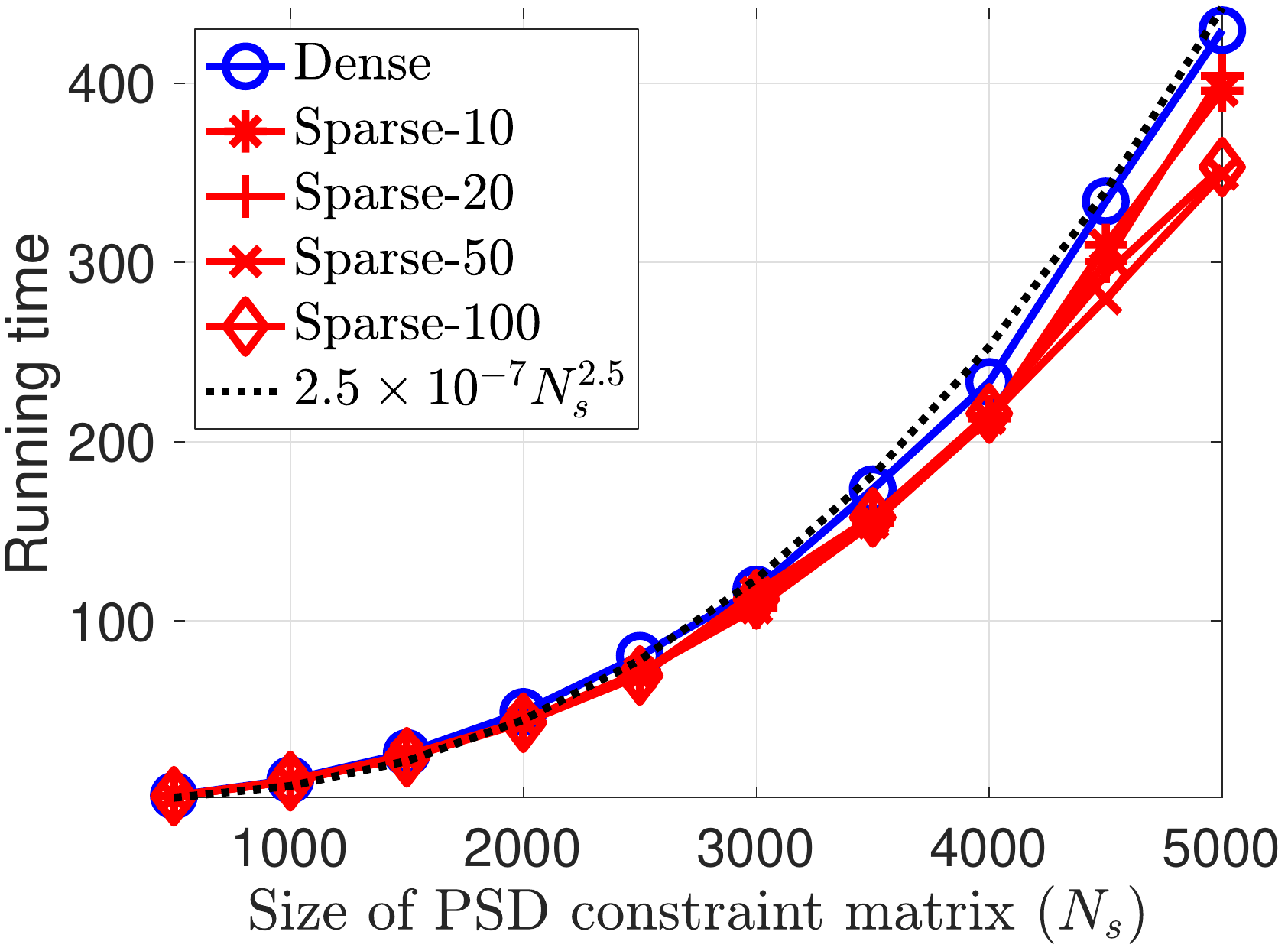}
\caption{The time (in seconds) needed for solving SDP~\eqref{eq:SDP_simple} with varying size of PSD constraint matrix (denoted as $N_s$).}
\label{fig:test_SDP_time}
\end{figure}

We conclude that the answer to the above question is ``no'', namely, that sparsity in the PSD constraint matrix will not significantly reduce the computational complexity of solving an SDP problem. To reduce the computational complexity, we introduce a fast version of the ``ANM+DPSS+SMI'' method in next section.

\section{Proposed Method: ``IVDST+DPSS+SMI''}
\label{sec:method_fast}

Recall that a key step in the proposed ``ANM+DPSS+SMI'' method is to compute the dual solution. Though CVX can return the dual solution by solving the primal SDP~\eqref{eq:SDPanm}, one can also directly solve the dual problem of ~\eqref{eq:SDPanm} to get the dual solution.
With some elementary calculations, we obtain the dual problem of~\eqref{eq:SDPanm} as follows\footnote{To simplify the derivation, we replace the inequality constraint $\|\Xs - \LL(\XX)\|_F\leq \varepsilon$ with an equality constraint $\Xs = \LL(\XX)$, i.e., we set $\varepsilon=0$.
%Please refer to Appendix~\ref{apdx:dual_SDP} for more details on how to get the dual problem~\eqref{eq:dual_SDP}.
}
\begin{equation}
\begin{aligned}
\sup_{\substack{\QQ\in\CCC^{M \times L \times N} \\ \HH\in\CCC^{M \times M \times N}}} &\langle \Xs, \LL(\QQ) \rangle_{\RRR} \\
 \st~~~~	
&\left[\begin{array}{cc}
\HH_{::n} & -\QQ_{::n}\\
-\QQ_{::n}^H & \I_L
\end{array}\right] \succeq \zero,\\
&\sum_{m = 1}^{M-j} \HH_{m(m+j)n} = 0,~\forall~j\in[M-1],~\forall~n\in[N],\\
&\sumn \summ \HH_{mmn} = 1.
\label{eq:dual_SDP}
\end{aligned}
\end{equation}

Note that the above dual problem~\eqref{eq:dual_SDP} is also an SDP. Therefore, it can be solved by any off-the-shelf SDP solver just as the primal SDP~\eqref{eq:SDPanm}. However, as is discussed in Remark~\ref{rmk:comput}, solving SDP problems directly with SDP solvers can result in high computational complexity.
To reduce the computational complexity, the authors in~\cite{bhaskar2013atomic} propose a fast method based on Alternating
Direction Method of Multipliers (ADMM)~\cite{bertsekas1989parallel,boyd2011distributed} to solve the SDP. In~\cite{hansen2019fast}, the authors reformulate the SDP as a conic program with much fewer dual variables to further reduce the computational cost. A fast iterative Vandermonde decomposition and shrinkage-thresholding (IVDST) algorithm based on the accelerated proximal gradient
(APG) technique is developed in~\cite{wang2018ivdst} to solve the SDP. Recently, this algorithm is also extended to the two-dimensional (2D) framework in a problem of 2D grid-free compressive beamforming~\cite{liu2020iterative}.

Inspired by the IVDST algorithm, we propose a faster method, named ``IVDST+DPSS+SMI'', to speed up the ``ANM+DPSS+SMI'' method introduced in Section~\ref{sec:method_sdp}. Rather than solving the primal SDP~\eqref{eq:SDPanm}, we focus on its dual problem~\eqref{eq:dual_SDP} in this section. Note that the dual problem~\eqref{eq:dual_SDP} can be rewritten as the following unconstrained problem
\begin{align*}
\min_{\QQ,\HH}~ h(\QQ) + g(\QQ,\HH)	
\end{align*}
with
\begin{align*}
h(\QQ) &\triangleq - \langle \Xs, \LL(\QQ) \rangle_{\RRR},\\
g(\QQ,\HH) & \triangleq g_1(\QQ,\HH) + g_2(\HH) + g_3(\HH),\\
g_1(\QQ,\HH) & \triangleq \lambda_1 \sumn \CC(\Z(\QQ_{::n},\HH_{::n})), \\
g_2(\HH) & \triangleq \lambda_2\left( \sumn \summ \HH_{mmn} - 1\right)^2,\\
g_3(\HH) & \triangleq \lambda_3 \sumn \sum_{j=1}^{M-1}\left(\sum_{m = 1}^{M-j} \HH_{m(m+j)n}\right)^2,
\end{align*}
where $\{\lambda_i\}_{i=1}^3$ are regularization parameters. $\Z(\QQ_{::n},\HH_{::n})$ is a matrix defined as
\begin{align*}
\Z(\QQ_{::n},\HH_{::n}) \triangleq \left[\begin{array}{cc}
\HH_{::n} & -\QQ_{::n}\\
-\QQ_{::n}^H & \I_L
\end{array}\right].	
\end{align*}
$\CC(\Z)$ is an indicator function used to enforce the PSD constraint and is defined as
\begin{align*}
\CC(\Z) \triangleq \begin{cases}
 0, \quad &\text{if}~\Z~\text{is PSD,}\\
 \infty,	 &\text{otherwise.}
 \end{cases}	
\end{align*}

Next, we introduce how to use the idea of APG to estimate the dual solution $\QQ^\star$, which is then used to design a weight vector $\w$.

\begin{description}[leftmargin=*]
\item[(1) {\em Initialization:}] According to the Schur complement condition, $\left[\begin{array}{cc}
\HH_{::n} & -\QQ_{::n}\\
-\QQ_{::n}^H & \I_L
\end{array}\right] \succeq \zero$ and $\I_L \succ \zero$ imply that
\begin{align*}
\HH_{::n} - 	\QQ_{::n}\QQ_{::n}^H \succeq \zero.
\end{align*}
Therefore, we initialize $\QQ_1=\QQ_0\in \CCC^{M\times L \times N}$ as a complex Gaussian random tensor, and initialize $\HH_1 = \HH_0\in \CCC^{M \times M \times N}$ as
\begin{align}
\{\HH_1\}_{::n} = \{\HH_0\}_{::n} = \{\QQ_0\}_{::n}\{\QQ_0\}_{::n}^H, \quad \forall~ n\in[N].	
\label{eq:init_H}
  \end{align}

\item[(2) {\em Smoothing:}] In each iteration, we add a momentum term to accelerate the convergence of the gradient vector. In particular, the updates in the $i$-th iteration are given as
\begin{equation}
\begin{aligned}
  &\QQb_i = \QQ_i + \frac{t_{i-1}-1}{t_i}(\QQ_i - \QQ_{i-1}), \\ &\HHb_i = \HH_i + \frac{t_{i-1}-1}{t_i}(\HH_i - \HH_{i-1}),
  \label{eq:QHbar}
\end{aligned}
\end{equation}
where $t_i = \frac{1+\sqrt{4 t_{i-1}^2 + 1}}{2}$ with $t_0 = 1$.

\item[(3) {\em Gradient descent:}] After smoothing, we update the parameters $\QQ$ along the gradient descent direction of $h(\QQ)$ with an appropriate small stepsize $\eta$. $\HH$ is fixed in this step since $h(\QQ)$ only depends on $\QQ$. Then, the updates in the $i$-th iteration are given as
\begin{equation}
\begin{aligned}
  \QQ_g &= \QQb_i - \eta \nabla h(\QQ)
  = \QQb_i + \eta \LL^*(\Xs), \\
   \HH_g &= \HHb_i,
   \label{eq:QHg}
  \end{aligned}
  \end{equation}
where $\LL^*: \CCC^{M \times N} \rightarrow \CCC^{M\times L\times N}$ denotes the adjoint operator of the linear operator $\LL$ defined in~\eqref{eq:def_L}. In particular, we have
\begin{align}
[\LL^*(\Xs)]_{::n} = \S_{M,W} \odot (\xs_n \one_L^H ), \quad \forall~n\in[N],
\label{eq:def_L*}	
\end{align}
 where $\xs_n$ denotes the $n$-th column of $\Xs$.

\item[(4) {\em Proximal mapping:}] In this step, we use the following proximal operator
\begin{align*}
&(\QQ_{i+1},\HH_{i+1}) = \text{prox}_{\eta g}(\QQ_g,\HH_g)	\\
=& \arg \min_{\QQ,\HH} \left\{g(\QQ,\HH) + \frac{1}{2\eta} \|(\QQ,\HH)-(\QQ_g,\HH_g)\|_F^2 \right\}.
\end{align*}
Note that there is no analytical solution to the above optimization problem, therefore, we use alternating projection method to approximate its solution. In particular, for any $n\in[N]$, we first update $\HH$ with
\begin{equation}
\begin{aligned}
	\HHt_{mmn} &= \frac{\{\HH_g\}_{mmn}}{\sumn \summ \{\HH_g\}_{mmn}},\\
  \HHt_{m(m+j)n} &= \{\HH_g\}_{m(m+j)n}- \frac{1}{M-j}\sum_{m=1}^{M-j} \{\HH_g\}_{m(m+j)n}, \\ &~~~~~~~~~~~~~~~~~~~~~~~~~~~~~~~~~~~~~~~\forall~j\in[M-1],
  \label{eq:Htilde}
\end{aligned}
\end{equation}
which ensures
\begin{align*}
&\sumn \summ \HHt_{mmn} = 1, \text{ and }
\\
&\sum_{m = 1}^{M-j} \HHt_{m(m+j)n} = 0,~\forall~j\in[M-1],~\forall~n\in[N],
\end{align*}
as is required in the constraints of~\eqref{eq:dual_SDP}.

Next, to promote the PSD constraint, we evaluate
\begin{align}
	\Z_n  = \left[\begin{array}{cc}
\HHt_{::n} & -\{\QQ_g\}_{::n}\\
-\{\QQ_g\}_{::n}^H & \I_L
\end{array}\right], \quad \forall~ n\in[N]
\label{eq:Zn}
\end{align}
and compute its eigen-decomposition
\begin{align*}
	(\V_n,\bSigma_n) = \text{eig}(\Z_n),\quad \forall~ n\in[N].
\end{align*}
To make $\{\Z_n\}_{n=1}^N$ PSD, we only keep the positive eigenvalues and the corresponding eigenvectors. Then, we get
\begin{align}
	\Zt_n = \V_n \max(\bSigma_n,0)\V_n^H, ~ \forall~n\in[N].
	\label{eq:Ztn}
\end{align}
With the obtained PSD matrices $\{\Zt_n\}_{n=1}^N$, we update $\HH$ and $\QQ$ with
\begin{equation}
\begin{aligned}
	\{\HH_{i+1}\}_{::n} &= \Zt_n(1:M,1:M), \\
	\{\QQ_{i+1}\}_{::n} &= -\Zt_n(1:M,M+1:M+L), \quad \forall~n\in[N].
	\label{eq:QHi1}
\end{aligned}
\end{equation}

\item[(5) {\em Computing weight:}] Denoting $I$ as the number of iterations, then $\QQ_I$ is an estimation of the dual solution $\QQ^\star$. Given $\QQ_I$, we can then construct a dual polynomial as in~\eqref{eq:dualpoly} and use the k-means method to cluster the $f^o$ with $q(f^o) \geq \gamma_0$ and get the estimated frequencies $\ft_k^o$. As is shown in~\eqref{eq:sign_a1}, to compute $\text{sign}(\widehat{\balpha}_1)$, one needs to first estimate $\widehat{\XX}_{::1}$, which is the primal solution to the SDP~\eqref{eq:SDPanm}. However, since we focus on solving the dual SDP~\eqref{eq:dual_SDP}, we do not have the primal solution. Fortunately, we can use $[\LL^*(\Xs)]_{::1}$ as a surrogate to $\widehat{\XX}_{::1}$. Then, we can compute $\text{sign}(\widehat{\balpha}_1)$ as
\begin{align}
\text{sign}(\widehat{\balpha}_1)
=\frac{(\A_f^\dag [\LL^*(\Xs)]_{::1} )^\top\e_1}{\| (\A_f^\dag [\LL^*(\Xs)]_{::1})^\top\e_1 \|_2}
\label{eq:sign_a1_fast}
\end{align}
with $\A_f = [\a(\ft_1^o)\cdots \a(\ft_K^o)]$. Finally, $\widetilde{\s}_1$ can be computed as in~\eqref{eq:sest1d} and the weight vector $\w$ can be computed using~\eqref{eq:weight_fix}.
\end{description}

We summarize the proposed ``IVDST+DPSS+SMI'' method in Algorithm~\ref{alg:IVDST_DPSS_SMI}.
\begin{algorithm}[H]
\caption{IVDST+DPSS+SMI}\label{alg:IVDST_DPSS_SMI}
\begin{algorithmic}[1]
\Procedure{Input}{the data matrix $\Xs$, the number of DPSS basis vectors $L$, the stepsize $\eta$, and the number of iterations $I$}
\State create DPSS basis $\S_{M,W}\in\RRR^{M\times L}$
\State compute $\LL^*(\Xs)$ according to~\eqref{eq:def_L*}
\State initialize $\QQ_1=\QQ_0$ as a complex Gaussian random tensor, and initialize $\HH_1 = \HH_0$ as in~\eqref{eq:init_H}
\For{$ i = 1, ..., I$}
\State {\em smoothing:} compute $\QQb_i$ and $\HHb_i$ via~\eqref{eq:QHbar}
\State {\em gradient descent:} compute $\QQ_g$ and $\HH_g$ via~\eqref{eq:QHg}
\State {\em proximal mapping:} compute $\HHt$ via~\eqref{eq:Htilde}, evaluate $\Z_n$ via~\eqref{eq:Zn} and compute its eigen-composition to get $(\V_n,\bSigma_n)$, then compute $\Zt_n$ via~\eqref{eq:Ztn}, finally, update $\QQ_{i+1}$ and $\HH_{i+1}$ via~\eqref{eq:QHi1}

\EndFor
\State \textbf{return} dual solution $\QQ_I$

\State given $\QQ_I$, form the dual polynomial $q(f^o)$ as in~\eqref{eq:dualpoly}
\State use the k-means method to cluster the $f^o$ with $q(f^o) \geq \gamma_0$ and get $\ft_k^o$; set $k=1$ according to the desired frequency
\State compute $\text{sign}(\widehat{\balpha}_1)$ according to~\eqref{eq:sign_a1_fast}
\State estimate $\s_1$ as in~\eqref{eq:sest1d} and construct the weight vector as in~\eqref{eq:weight_fix}
\State \textbf{return} the weight vector $\w$
\EndProcedure
\end{algorithmic}
\end{algorithm}

\begin{Remark}{(Computational complexity.)} \label{rmk:comput_fast}
Compared with the proposed ``ANM+DPSS+SMI'' method, the most expensive step in the ``IVDST+DPSS+SMI'' method is computing the eigen-decomposition of $\Z_n,~\forall~n\in[N]$. Therefore, the overall complexity is on the order of $\OO(N(M+L)^2)$.
\end{Remark}	

\section{Extension to 2D Case}
\label{sec:2d}

In this section, we take the structure of $\textbf{asv}(\theta)$ into consideration and define a {\em two dimensional (2D) atomic set} as
\begin{equation}
\begin{aligned}
\AA_{2D} \triangleq \left\{\A(f^o) \circledast (\balphat \textbf{asv}(\theta)):~f^o\in[0,1),~ \balphat \in\CCC^{L}, \right.\\
\left.  \|\balphat\|_2=1,~\theta\in[-90^\circ, 90^\circ] \right\}.
\label{eq:atomicset2D}	
\end{aligned}
\end{equation}
The induced {\em 2D atomic norm} is then defined as
\begin{equation}
\begin{aligned}
\|\XX\|_{\AA_{2D}} \triangleq \inf \left\{ \sumk c_k: \right. ~~~~~~~~~~~~~~~~~~~~~~~~~~~~~~~~~~~\\
~~~~\left.\XX = \sumk c_k \A(f_k^o) \circledast (\balphat_k \textbf{asv}(\theta_k)), c_k\geq 0    \right\},	
\label{eq:atomicnorm2D}
\end{aligned}
\end{equation}
As in Section~\ref{sec:method_sdp}, one can solve the following 2D-ANM problem
\begin{align}
 \min_{\XX} ~\|\XX\|_{\AA_{2D}} \quad \st~	\|\Xs - \LL(\XX)\|_F\leq \varepsilon
\label{eq:ANM2D}
\end{align}
to construct the weight vector.

Denote $\langle \QQ,\XX \rangle_{\RRR} = \text{Re}\{\langle \QQ,\XX \rangle\}$ as the real inner product between two tensors.
The {\em dual norm} of $\|\XX\|_{\AA_{2D}}$ is then defined as % \note{MBW: some asv notation below needs to be updated?}
\begin{align*}
\|\QQ\|_{\AA_{2D}}^* =& \sup_{\|\XX\|_{\AA_{2D}}\leq 1} \langle \QQ,\XX \rangle_{\RRR} \\
=& \sup_{\substack{ f^o\in[0,1),\|\balphat\|_2=1\\ \theta\in[-90^\circ, 90^\circ] }} \langle \QQ,\A(f^o) \circledast (\balphat \textbf{asv}(\theta)) \rangle_{\RRR} \\	
=& \sup_{\substack{ f^o\in[0,1),\|\balphat\|_2=1\\ \theta\in[-90^\circ, 90^\circ] }} \sumn \langle \QQ_{::n},\a(f^o) e^{jk_0 \sin(\theta)\q_n }  \balphat ^\top  \rangle_{\RRR} \\	
=& \sup_{\substack{ f^o\in[0,1),\|\balphat\|_2=1\\ \theta\in[-90^\circ, 90^\circ] }}  \left\langle \balphat ^* , \sumn \QQ_{::n}^H\a(f^o) e^{jk_0 \sin(\theta)\q_n }   \right\rangle_{\RRR} \\
=& \sup_{\substack{ f^o\in[0,1)\\ \theta\in[-90^\circ, 90^\circ] }}  \left\|\sumn \QQ_{::n}^H\a(f^o) e^{jk_0 \sin(\theta)\q_n }   \right\|_2\\
=&  \sup_{\substack{ f^o\in[0,1)\\ \theta\in[-90^\circ, 90^\circ] }} q_{2D}(f^o, \theta),
\end{align*}
where
\begin{align}
	q_{2D}(f^o, \theta) \triangleq \left\|\sumn \QQ_{::n}^H\a(f^o) e^{jk_0\sin(\theta)\q_n }   \right\|_2
	\label{eq:dualpoly2D}
\end{align}
is defined as the {\em dual polynomial} and $\q_n$ denotes the $n$-th element position.

Next, we show that the above 2D atomic norm defined in~\eqref{eq:atomicnorm2D} can be approximated by the optimal value of the following semidefinite program (SDP):
\begin{equation}
\begin{aligned}
\inf_{\substack{\TT \in\CCC^{(2M-1)\times (2L-1)\times N}\\ \bm{t}\in\CCC^N}} & \frac{1}{2MN} \sumn \text{Tr}(\SS(\TT_{::n})) + \frac{1}{2N} \sumn t_n \\
 \st~~~~~~~~~
&\left[\begin{array}{cc}
\SS(\TT_{::n}) & \x_n\\
\x_n^H & t_n
\end{array}\right] \succeq \zero,\\
 &\x_n = \text{vec}(\XX_{::n}^\top),~ \forall~n\in[N],
 \label{eq:SDP_2D}
\end{aligned}
\end{equation}
where $\SS(\T)\in\CCC^{ML \times ML}$ is a block Toeplitz matrix generated by $\T\in\CCC^{(2M-1)\times (2L-1)}$. To be more precise, denote $T_{m,l}$ with $-M<m<M$ and $-L<l<L$ as the $(m,l)$-th entry of $\T$. Denote $\T_m\in\CCC^{L\times L}$ as a Toeplitz matrix constructed from the $m$-th row of $\T$, i.e.,
\begin{align*}
\T_m = 	\left[\begin{array}{cccc}
T_{m,0} & T_{m,-1} & \cdots & T_{m,-(L-1)}\\
T_{m,1} & T_{m,0} & \cdots & T_{m,-(L-2)}\\	
\vdots & \vdots & \ddots & \vdots\\
T_{m,L-1} & T_{m,L-2} & \cdots & T_{m,0}\\
\end{array}\right].
\end{align*}
Then, the block Toeplitz matrix $\SS(\T)$ is defined as
\begin{align*}
\SS(\T) = 	\left[\begin{array}{cccc}
\T_0 & \T_{-1} & \cdots & \T_{-(M-1)}\\
\T_1 & \T_0 & \cdots & \T_{-(M-2)}\\	
\vdots & \vdots & \ddots & \vdots\\
\T_{M-1} & \T_{M-2} & \cdots & \T_0\\
\end{array}\right].
\end{align*}

Denote SDP$(\XX)$ as the objective value corresponding to the optimal solution of the above SDP~\eqref{eq:SDP_2D}. The following proposition shows that SDP$(\XX)$ is a lower bound of $\|\XX\|_{\AA_{2D}}$.
%\footnote{We have omitted the proof due to limited space in this submission. We would like to include this proof in a revised or final version.}
\begin{Proposition}
With the 2D-atomic set and 2D-atomic norm defined in~\eqref{eq:atomicset2D} and~\eqref{eq:atomicnorm2D}, we have
\begin{align*}
\text{SDP}(\XX) \leq \|\XX\|_{\AA_{2D}}	
\end{align*}
for any $\XX\in \CCC^{M\times L \times N}$. 	
\end{Proposition}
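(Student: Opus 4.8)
The plan is to prove the inequality by a primal (weak-duality–style) argument: for an arbitrary atomic representation of $\XX$ I would exhibit a point that is feasible for the SDP~\eqref{eq:SDP_2D} and whose objective value equals the sum of the atomic coefficients, and then pass to the infimum over all such representations. The degenerate case $\XX=\zero$ is immediate, so assume $\XX\ne\zero$.

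First I would fix a decomposition $\XX=\sumk c_k\,\A(f_k^o)\circledast(\balphat_k\,\textbf{asv}(\theta_k))$ with $c_k>0$ and $\|\balphat_k\|_2=1$. Unfolding the reshaped Khatri--Rao product slice by slice and using $\text{vec}(\bm u\bm v^\top)=\bm v\otimes\bm u$, one obtains $\x_n=\text{vec}(\XX_{::n}^\top)=\sumk c_k\,\omega_{nk}\,\bm z_k$, where $\bm z_k\triangleq\a(f_k^o)\otimes\balphat_k$ and $\omega_{nk}$ is the $n$-th entry of $\textbf{asv}(\theta_k)$, which satisfies $|\omega_{nk}|=1$. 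As the candidate SDP point I would take $t_n\triangleq\sumk c_k$ for every $n\in[N]$ and let $\TT_{::n}$ be the generator for which $\SS(\TT_{::n})=\sumk c_k\,\bm z_k\bm z_k^H=\sumk c_k\,\big(\a(f_k^o)\a(f_k^o)^H\big)\otimes\big(\balphat_k\balphat_k^H\big)$.

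Then I would verify feasibility. For the structural requirement --- that $\SS(\TT_{::n})$ lie in the range of $\SS(\cdot)$ --- the key point is that $\a(f_k^o)\a(f_k^o)^H$ is a Hermitian Toeplitz matrix, so the $(m,m')$ block of $\bm z_k\bm z_k^H$ equals $e^{j2\pi f_k^o(m-m')}\balphat_k\balphat_k^H$ and depends on $(m,m')$ only through $m-m'$; hence the conic combination over $k$ has exactly the block structure produced by $\SS(\cdot)$, and a valid generator $\TT_{::n}$ exists. For the PSD constraint I would write
\[
\left[\begin{array}{cc}\SS(\TT_{::n}) & \x_n\\ \x_n^H & t_n\end{array}\right]=\sumk c_k\left[\begin{array}{c}\bm z_k\\ \overline{\omega_{nk}}\end{array}\right]\left[\begin{array}{c}\bm z_k\\ \overline{\omega_{nk}}\end{array}\right]^H,
\]
the block entries matching because $|\omega_{nk}|=1$; the right-hand side is a conic combination of rank-one Hermitian PSD matrices, hence PSD.

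Finally I would compute the objective. Since $\text{Tr}(\bm z_k\bm z_k^H)=\|\a(f_k^o)\|_2^2\,\|\balphat_k\|_2^2=M$, one gets $\tfrac{1}{2MN}\sumn\text{Tr}(\SS(\TT_{::n}))=\tfrac12\sumk c_k$ and $\tfrac{1}{2N}\sumn t_n=\tfrac12\sumk c_k$, so the objective equals $\sumk c_k$. Hence $\text{SDP}(\XX)\le\sumk c_k$ for every atomic representation, and taking the infimum (with a routine $\varepsilon$-perturbation if the infimum defining $\|\XX\|_{\AA_{2D}}$ is not attained) gives $\text{SDP}(\XX)\le\|\XX\|_{\AA_{2D}}$. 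I expect the main obstacle to be the structural step: checking that the constructed matrix really lies in the range of $\SS(\cdot)$, which demands careful index bookkeeping between the Kronecker/Khatri--Rao description of $\x_n$ and $\bm z_k\bm z_k^H$ and the block-Toeplitz parametrization of $\SS$. The rank-one factorization for the PSD constraint and the trace computation for the objective are then routine.
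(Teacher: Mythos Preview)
Your proposal is correct and follows essentially the same approach as the paper: fix an atomic decomposition, exhibit the feasible point $t_n=\sumk c_k$, $\SS(\TT_{::n})=\sumk c_k\bm z_k\bm z_k^H$, verify PSD via the rank-one factorization, and compute the objective. Your rank-one vectors $[\bm z_k;\overline{\omega_{nk}}]$ differ from the paper's $[\omega_{nk}\bm z_k;1]$ only by a unimodular scalar, and you add some extra bookkeeping (the $\XX=\zero$ case, the structural check, the $\varepsilon$-perturbation for a non-attained infimum) that the paper omits.
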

\begin{proof}
Let
\begin{align*}
\XX = \sumk c_k \A(f_k^o) \circledast (\balphat_k \textbf{asv}(\theta_k))	
\end{align*}
with $f^o_k\in[0,1),~ \balphat_k \in\CCC^{L}, ~\|\balphat_k\|_2=1,~\theta_k\in[-90^\circ, 90^\circ]$. Then, for any $n\in[N]$, we have
\begin{align*}
\XX_{::n} = \sumk c_k \a(f_k^o) \balphat_k ^\top \textbf{asv}_n(\theta_k),		
\end{align*}
and
\begin{align*}
\x_n = \text{vec}(\XX_{::n}^\top) = \sumk c_k \a(f_k^o) \otimes 	\balphat_k \textbf{asv}_n(\theta_k),
\end{align*}
where $\textbf{asv}_n(\theta_k)$ denotes the $n$-th entry of $\textbf{asv}(\theta_k)$ and $\otimes$ denotes the Kronecker product.

For $t_n = \sumk c_k$ and a block Toeplitz matrix
\begin{align*}
\SS(\TT_{::n}) 
\!=\! \sumk \!c_k \!\left[ \a(f_k^o) \!\otimes 	\balphat_k \textbf{asv}_n(\theta_k) \right]	\left[ \a(f_k^o) \otimes 	\balphat_k \textbf{asv}_n(\theta_k) \right]^H\!\!,	
\end{align*}
we have
\begin{align*}
&\left[\begin{array}{cc}
\SS(\TT_{::n}) & \x_n\\
\x_n^H & t_n
\end{array}\right] \\
=& \sumk c_k
\left[\begin{array}{c}
\a(f_k^o) \otimes 	\balphat_k \textbf{asv}_n(\theta_k) \\
1
\end{array}\right]	
\left[\begin{array}{c}
\a(f_k^o) \otimes 	\balphat_k \textbf{asv}_n(\theta_k) \\
1
\end{array}\right]^H\\
\succeq & \zero.
\end{align*}
Thus, the above $\SS(\TT_{::n})$ and $t_n$ with any $n\in[N]$ are feasible to the SDP~\eqref{eq:SDP_2D}. It follows that
\begin{align*}
\text{SDP}(\XX) \leq & \frac{1}{2MN} \sumn \text{Tr}(\SS(\TT_{::n})) + \frac{1}{2N} \sumn t_n \\
=& \frac{1}{2MN} \sumn \sumk c_k \|\a(f_k^o) \otimes 	\balphat_k \textbf{asv}_n(\theta_k) \|_2^2 + \frac{1}{2} \sumk c_k 	\\
=& \sumk c_k = \|\XX\|_{\AA_{2D}}.
\end{align*}	
\end{proof}

Then, we propose the following SDP to approximate the solution of 2D-ANM in~\eqref{eq:ANM2D}:
\begin{equation}
\begin{aligned}
\inf_{\substack{\TT \in\CCC^{(2M-1)\times (2L-1)\times N}\\ \bm{t}\in\CCC^N\\ \XX\in\CCC^{M \times L \times N}}} & \frac{1}{2MN} \sumn \text{Tr}(\SS(\TT_{::n})) + \frac{1}{2N} \sumn t_n \\
 \st~~~~~~~~~
&\left[\begin{array}{cc}
\SS(\TT_{::n}) & \x_n\\
\x_n^H & t_n
\end{array}\right] \succeq \zero,\\
 &\x_n = \text{vec}(\XX_{::n}^\top),~ \forall~n\in[N],\\
&\|\Xs - \LL(\XX)\|_F\leq \varepsilon.
\label{eq:SDPanm2D}
\end{aligned}
\end{equation}
Again, solving the above SDP with CVX can return both the primal solution and the dual solution.
Given the dual solution, one can construct a dual polynomial as in~\eqref{eq:dualpoly2D} and identify $\{f_k^o,\theta_k\}_{k=1}^K$ by localizing the places where $q_{2D}(f^o,\theta)=1$, similar as in Section~\ref{sec:method_sdp}, and finally construct a weight vector.

We summarize the proposed ``2D-ANM+DPSS+SMI'' method in Algorithm~\ref{alg:2DANM_DPSS_SMI}.
\begin{algorithm}[H]
\caption{2D-ANM+DPSS+SMI}\label{alg:2DANM_DPSS_SMI}
\begin{algorithmic}[1]
\Procedure{Input}{the data matrix $\Xs$ and the number of DPSS basis vectors $L$}
\State create DPSS basis $\S_{M,W}\in\RRR^{M\times L}$
\State compute the primal solution $\widehat{\XX}$ and the dual solution $\QQ^\star$ by solving the SDP~\eqref{eq:SDPanm2D}
\State form the dual polynomial $q_{2D}(f^o)$ as in~\eqref{eq:dualpoly2D}
\State use the k-means method to cluster the $f^o$ with $q_{2D}(f^o,\theta) \geq \gamma_0$ and get $\ft_k^o$; set $k=1$ according to the desired frequency
\State compute $\text{sign}(\widehat{\balpha}_1)$ according to~\eqref{eq:sign_a1}
\State estimate $\s_1$ as in~\eqref{eq:sest1d} and construct the weight vector as in~\eqref{eq:weight_fix}
\State \textbf{return} the weight vector $\w$
\EndProcedure
\end{algorithmic}
\end{algorithm}

\begin{Remark}{(Advantages and disadvantages of ``2D-ANM+DPSS+SMI''.)} Unlike the 1D-based methods,
in the above proposed ``2D-ANM+DPSS+SMI'' method, one needs to know the element positions $\q\in\RRR^{1\times N}$, and the element positions need to be equispaced. However, the ``2D-ANM+DPSS+SMI'' method can successfully identify the frequency components in the signals even when there exist two identical frequencies from different directions while the 1D method must have a certain separation between all frequencies. See Figure~\ref{fig:test_ANM_DPSS_SMI_2D_dual}.
\end{Remark}

\section{Numerical Simulations}
\label{sec:nume}

In this section, we conduct a series of experiments to test the proposed three methods and compare them with the ``ANM+SMI'' method proposed in~\cite{li2020adaptiveAWPL} and the classical SMI method~\cite{van2004optimum}. We simulate a uniform linear array with $N = 4$ elements and half-wavelength element spacing, namely, the element position vector $\q = -(N-1)d/2:d:(N-1)d/2$ with $d = 0.5$. We assume one desired signal and two interferers; thus, $K=3$. The angles to the desired signal and two interferers are set as $\theta_1 = -20^\circ$, $\theta_2 = -60^\circ$, and $\theta_3 = 20^\circ$.

In all experiments, we generate each of $\s_1, \s_2, \s_3$ as a sinusoid (complex exponential) modulated by a frequency offset that drifts slightly over time. This corresponds to Case 2 in Section~\ref{sec:prob} and allows for an illustrative and challenging set of experiments. The ground truth sinusoid frequencies are given by $f_1^o=0.1$, $f_2^o = 0.3$ and $f_3^o = 0.5$. We consider four different types of frequency offsets: (a) static frequency offset, (b) linear frequency offset, (c) zigzag frequency offset, and (d) random frequency offset, which are shown in Figure~\ref{fig:test_ANM_IVDST_DPSS_SMI_freqdrift}. The three colors in each plot represent the time-varying offset relative to the ground truth frequencies $f_1^o$, $f_2^o$, and $f_3^o$. We take $M=120$ uniform time samples at each element and formulate the data matrix $\Xs$ as in~\eqref{eq:data}.

For the proposed ``IVDST+DPSS+SMI'' method, we set the stepsize as $\eta = 4$ and decrease it by multiplying a factor of $0.99$ after every $50$ iterations. The maximum number of iterations is set as $I=200$. We present the values of $L$ used to generate the DPSS basis in Table~\ref{TB:L}. 

The dual polynomials obtained from the ``ANM+SMI'' method, the ``ANM+DPSS+SMI'' method, and the ``IVDST+DPSS+SMI'' method are shown in Figure~\ref{fig:test_ANM_IVDST_DPSS_SMI_dual}. The corresponding radiation pattern (array factor) is illustrated in Figure~\ref{fig:test_ANM_IVDST_DPSS_SMI_radi}. Figure~\ref{fig:test_ANM_IVDST_DPSS_SMI_radi} also includes the radiation pattern for ``SMI'' which corresponds to the classical SMI method but accounting only for the sinusoidal component of $\s_1$ and not its time-varying offset; that is, setting $\w = (\X^H\X)^{-1}\X^H \s = \X^\dagger \a(f_1^o)$. (Our methods do not assume explicit knowledge of $f_1^o$.)

From Figures~\ref{fig:test_ANM_IVDST_DPSS_SMI_dual} and~\ref{fig:test_ANM_IVDST_DPSS_SMI_radi}, it can be seen that the ``ANM+SMI'' method~\cite{li2020adaptiveAWPL} works very well when the frequency offsets are static. However, our proposed methods ``IVDST+DPSS+SMI'' and ``ANM+DPSS+SMI'' significantly outperform the ``ANM+SMI'' and ``SMI'' methods when the frequency offsets are time-varying. Moreover, based on the dual polynomials shown in Figure~\ref{fig:test_ANM_IVDST_DPSS_SMI_dual}, it is much easier to cluster the frequencies with the fast ``IVDST+DPSS+SMI'' method, as the dual polynomials obtained from the ``ANM+DPSS+SMI'' method are sometimes relatively flat. To show the efficiency of the proposed ``IVDST+DPSS+SMI'' method, we present the time used by ``IVDST+DPSS+SMI'' and ``ANM+DPSS+SMI'' in Table~\ref{TB:time}. As expected, the ``IVDST+DPSS+SMI'' method runs much faster than the SDP-based ``ANM+DPSS+SMI'' method. 

\begin{figure}[ht]
\begin{minipage}{0.48\linewidth}
\centering
\includegraphics[width=1.67in]{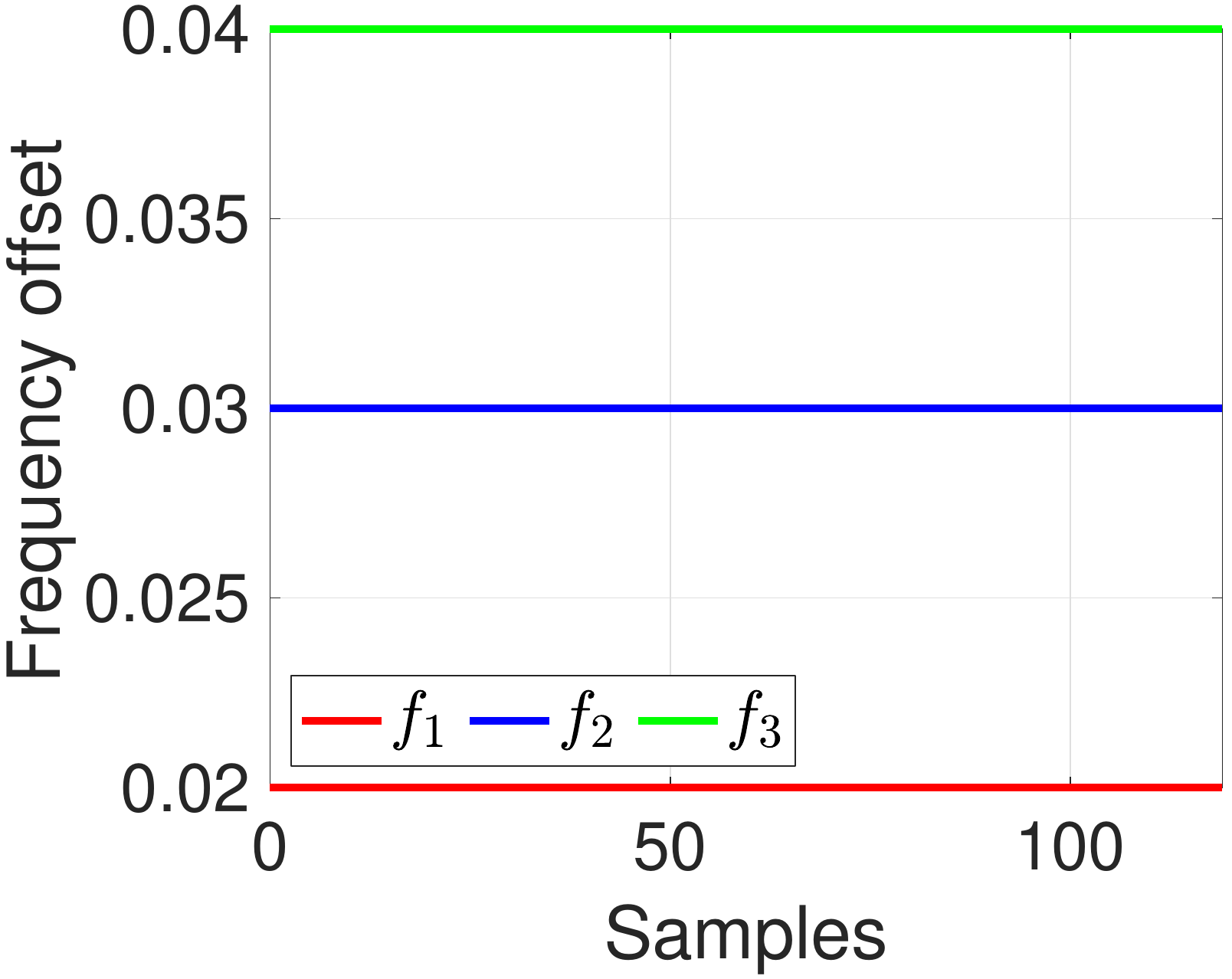}
\centerline{\footnotesize{(a) Static frequency offset}}
\end{minipage}
~
\begin{minipage}{0.48\linewidth}
\centering
\includegraphics[width=1.67in]{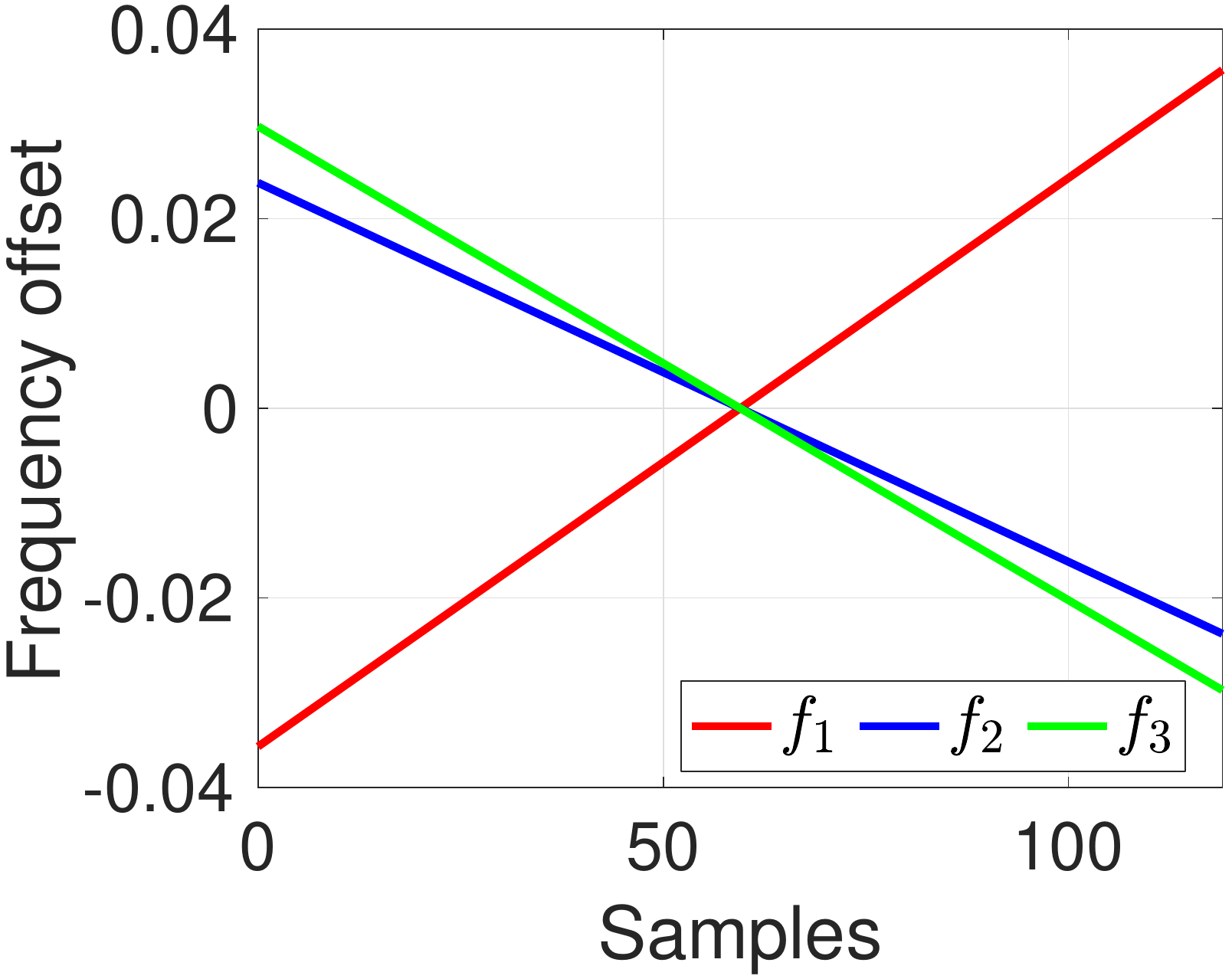}
\centerline{\footnotesize{(b) Linear frequency offset}}
\end{minipage}
\\
\begin{minipage}{0.48\linewidth}
\centering
\includegraphics[width=1.67in]{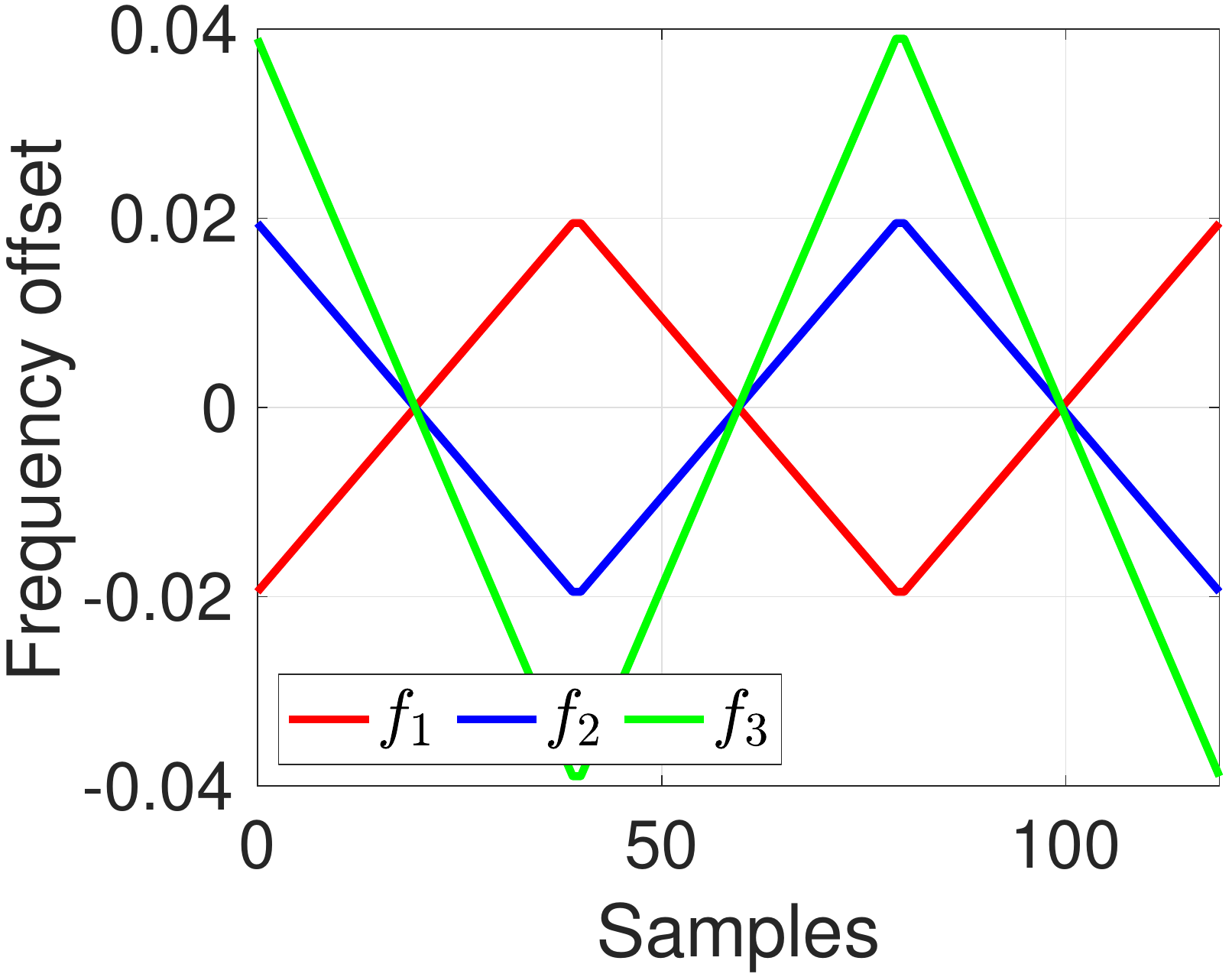}
\centerline{\footnotesize{(c) Zigzag frequency offset}}
\end{minipage}
~
\begin{minipage}{0.48\linewidth}
\centering
\includegraphics[width=1.67in]{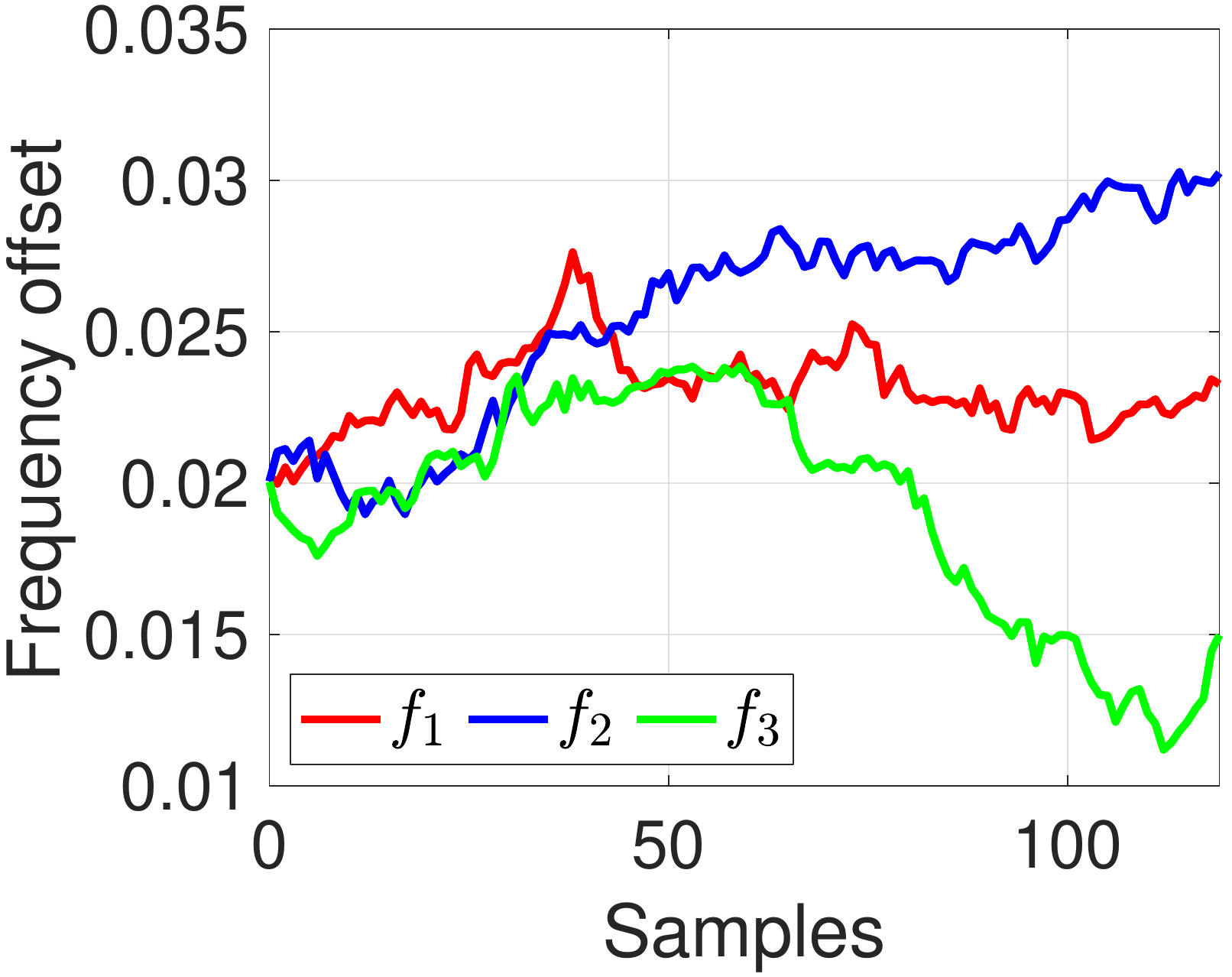}
\centerline{\footnotesize{(d) Random frequency offset}}
\end{minipage}
\caption{Four types of frequency offsets used in the experiments. ($M=120$)}
\label{fig:test_ANM_IVDST_DPSS_SMI_freqdrift}
\end{figure}

\begin{figure}[ht]
\begin{minipage}{0.48\linewidth}
\centering
\includegraphics[width=1.67in]{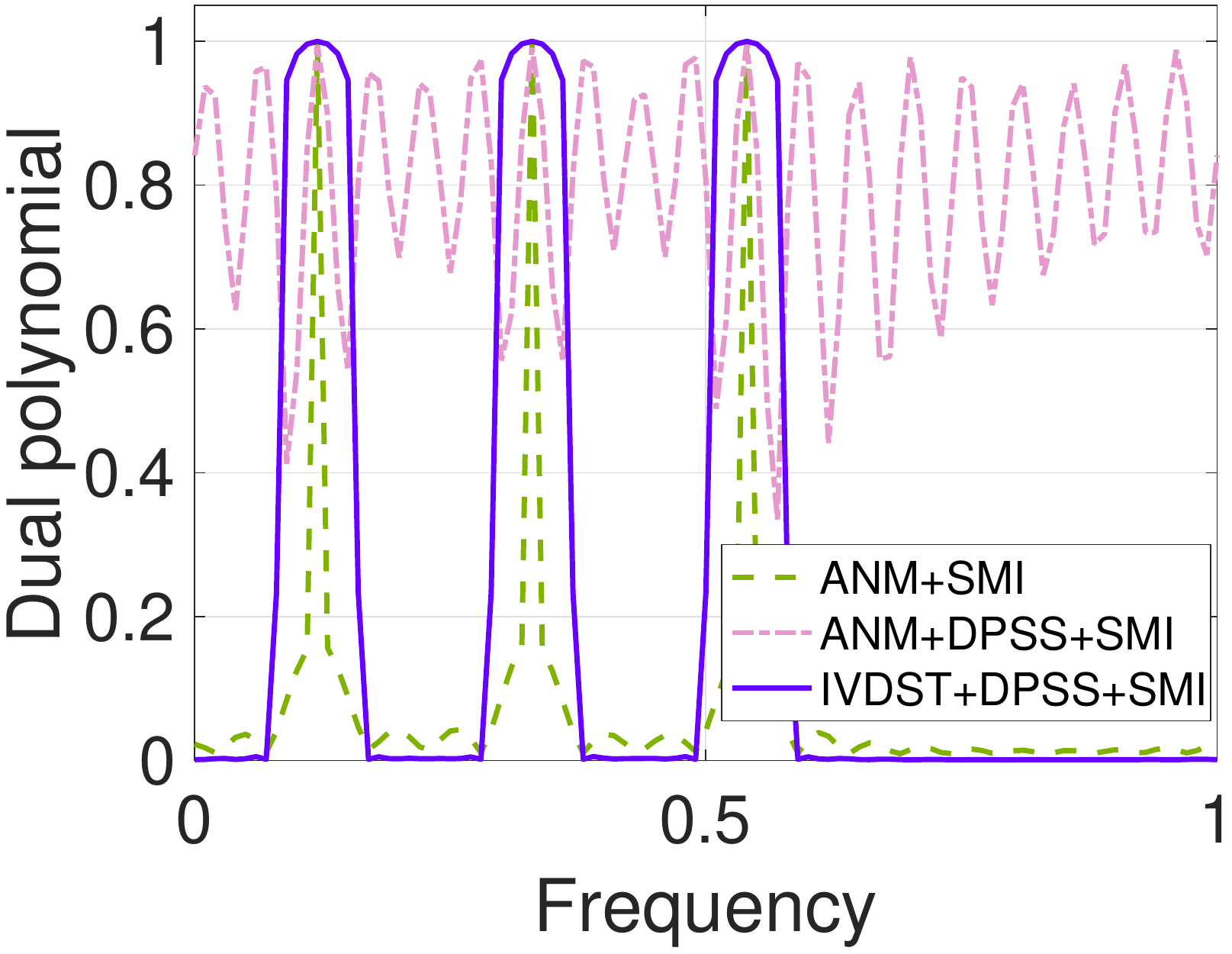}
\centerline{\footnotesize{(a) Static frequency offset}}
\end{minipage}
~
\begin{minipage}{0.48\linewidth}
\centering
\includegraphics[width=1.67in]{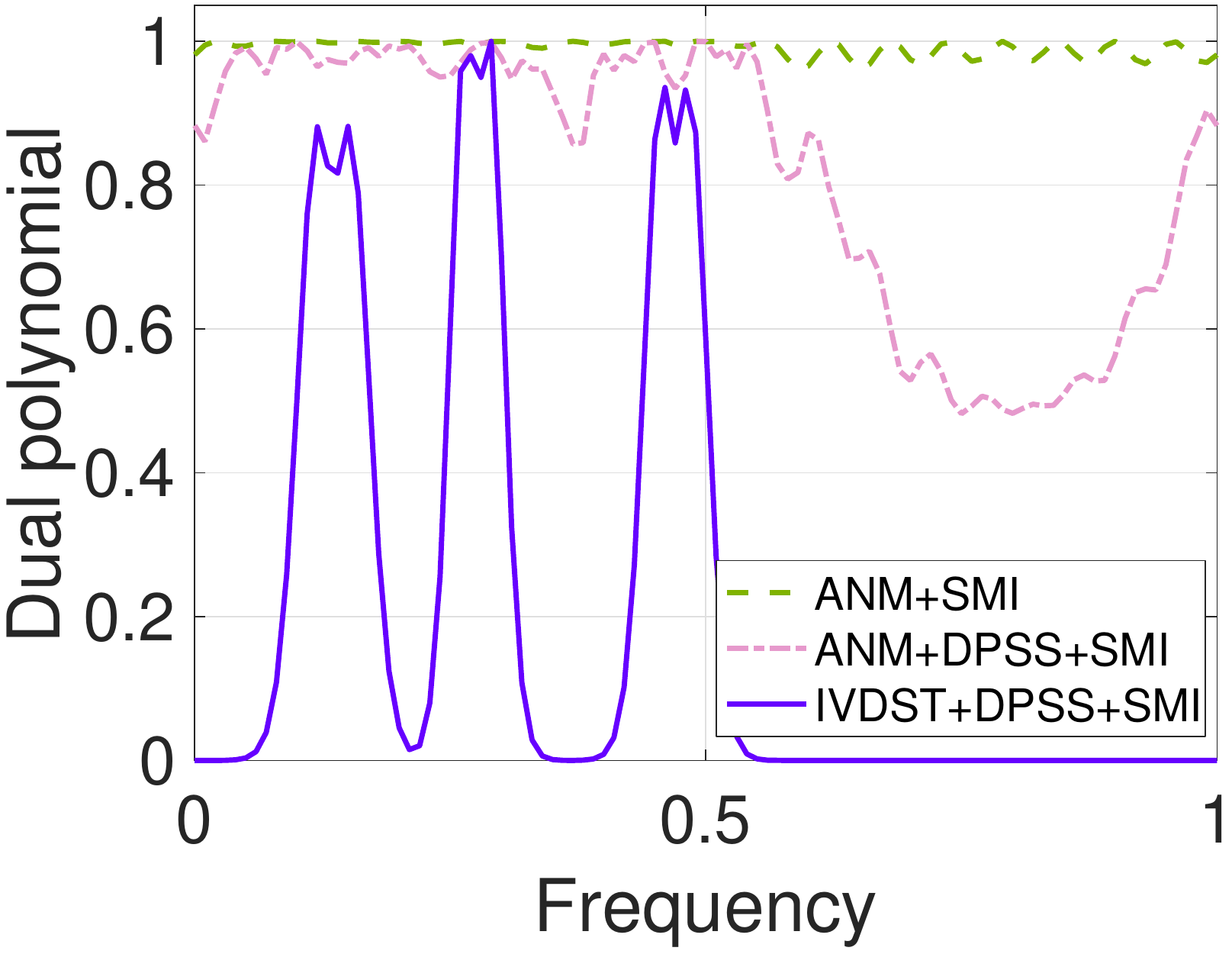}
\centerline{\footnotesize{(b) Linear frequency offset}}
\end{minipage}
\\
\begin{minipage}{0.48\linewidth}
\centering
\includegraphics[width=1.67in]{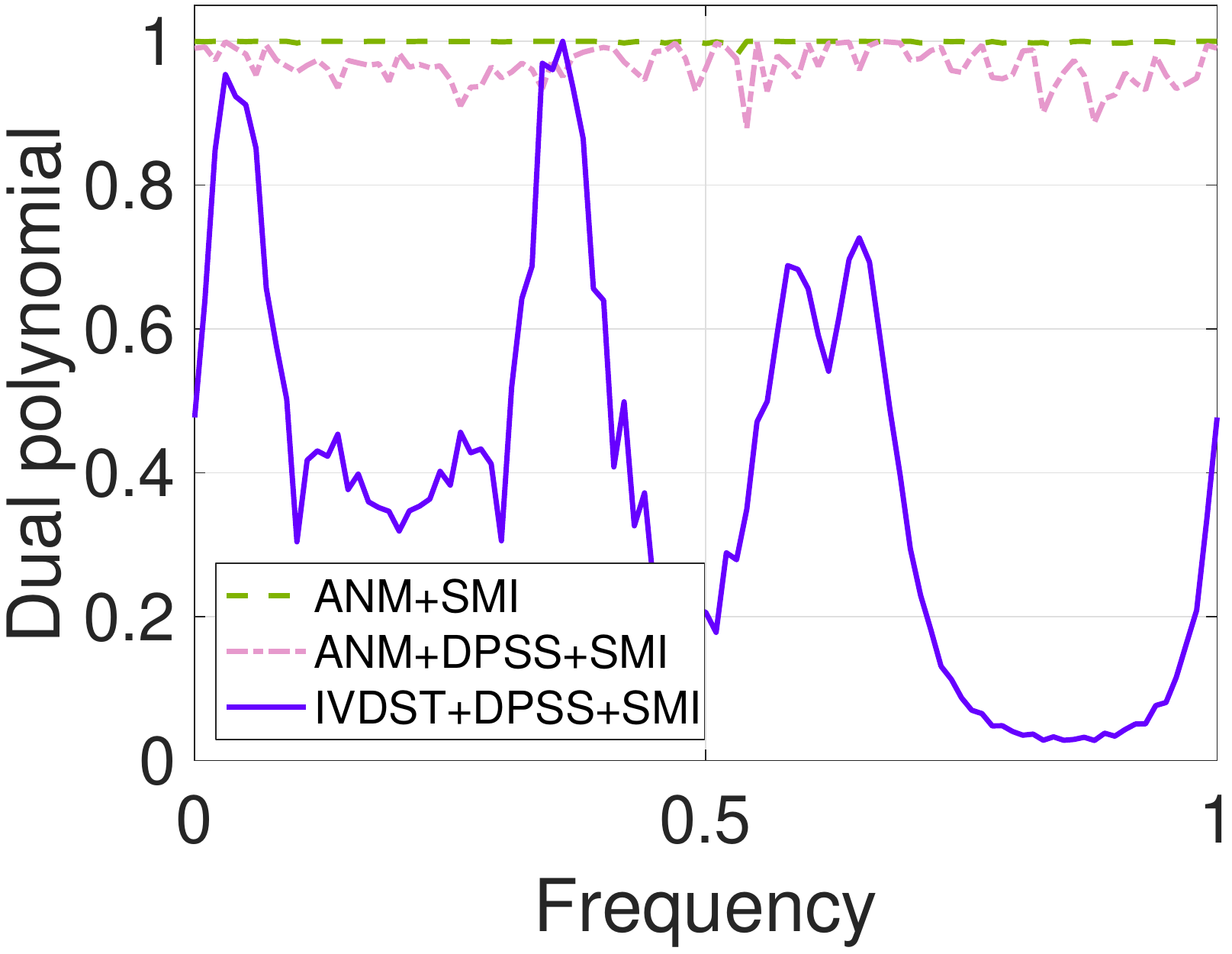}
\centerline{\footnotesize{(c) Zigzag frequency offset}}
\end{minipage}
~
\begin{minipage}{0.48\linewidth}
\centering
\includegraphics[width=1.67in]{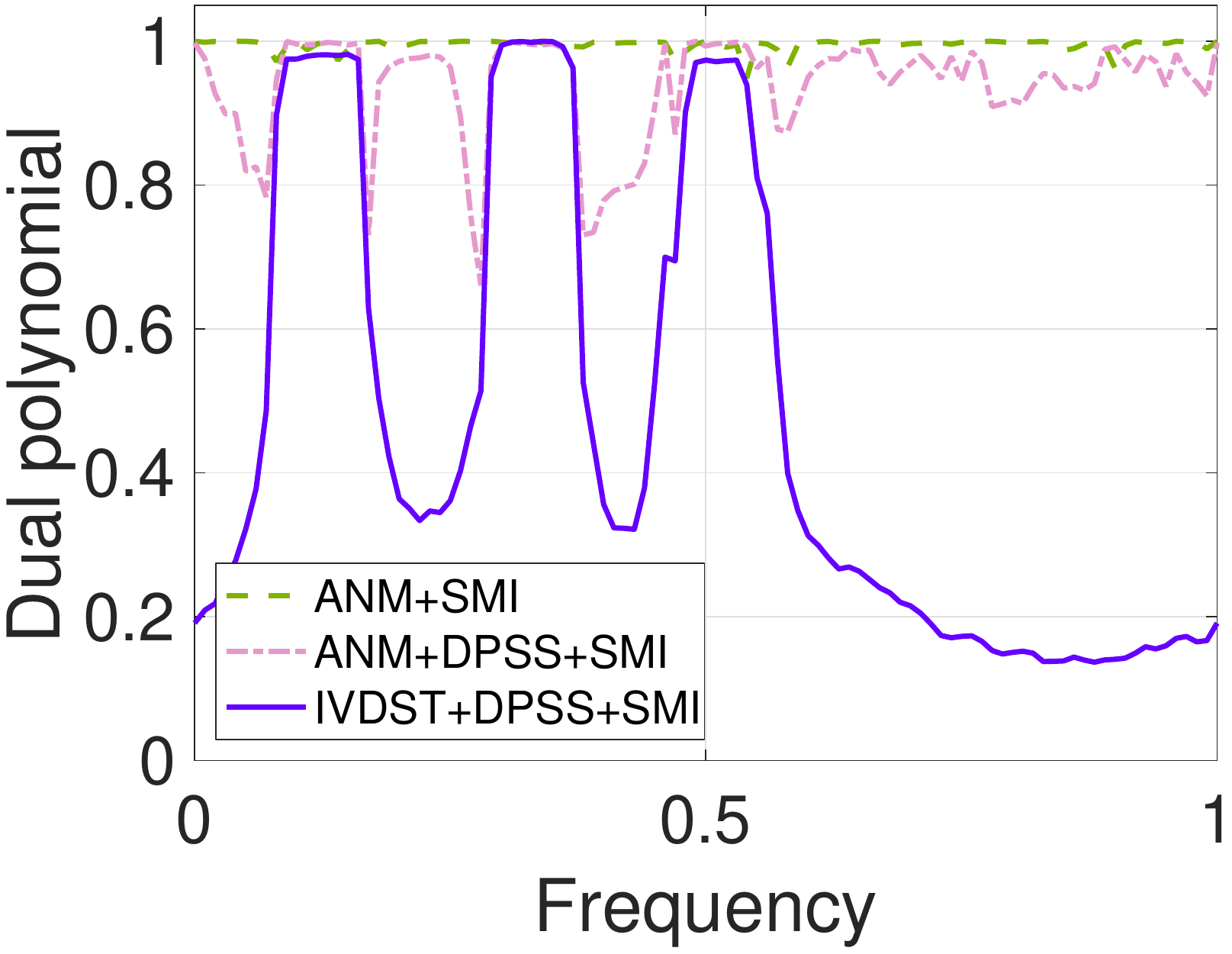}
\centerline{\footnotesize{(d) Random frequency offset}}
\end{minipage}
\caption{Dual polynomial obtained from the three methods in the four types of frequency offsets. ($M=120$)}
\label{fig:test_ANM_IVDST_DPSS_SMI_dual}
\end{figure}

\begin{figure}[ht]
\begin{minipage}{0.48\linewidth}
\centering
\includegraphics[width=1.67in]{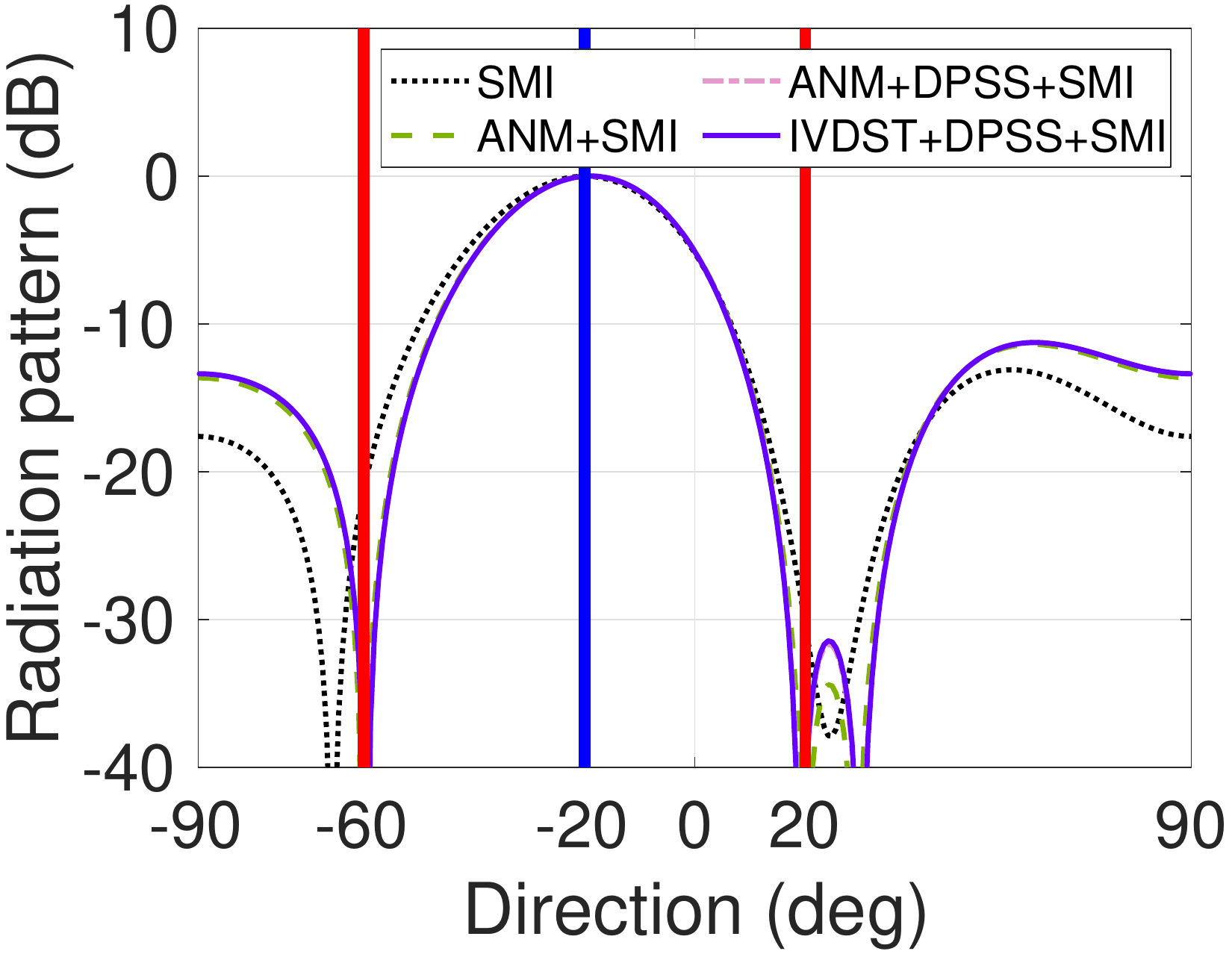}
\centerline{\footnotesize{(a) Static frequency offset}}
\end{minipage}
~
\begin{minipage}{0.48\linewidth}
\centering
\includegraphics[width=1.67in]{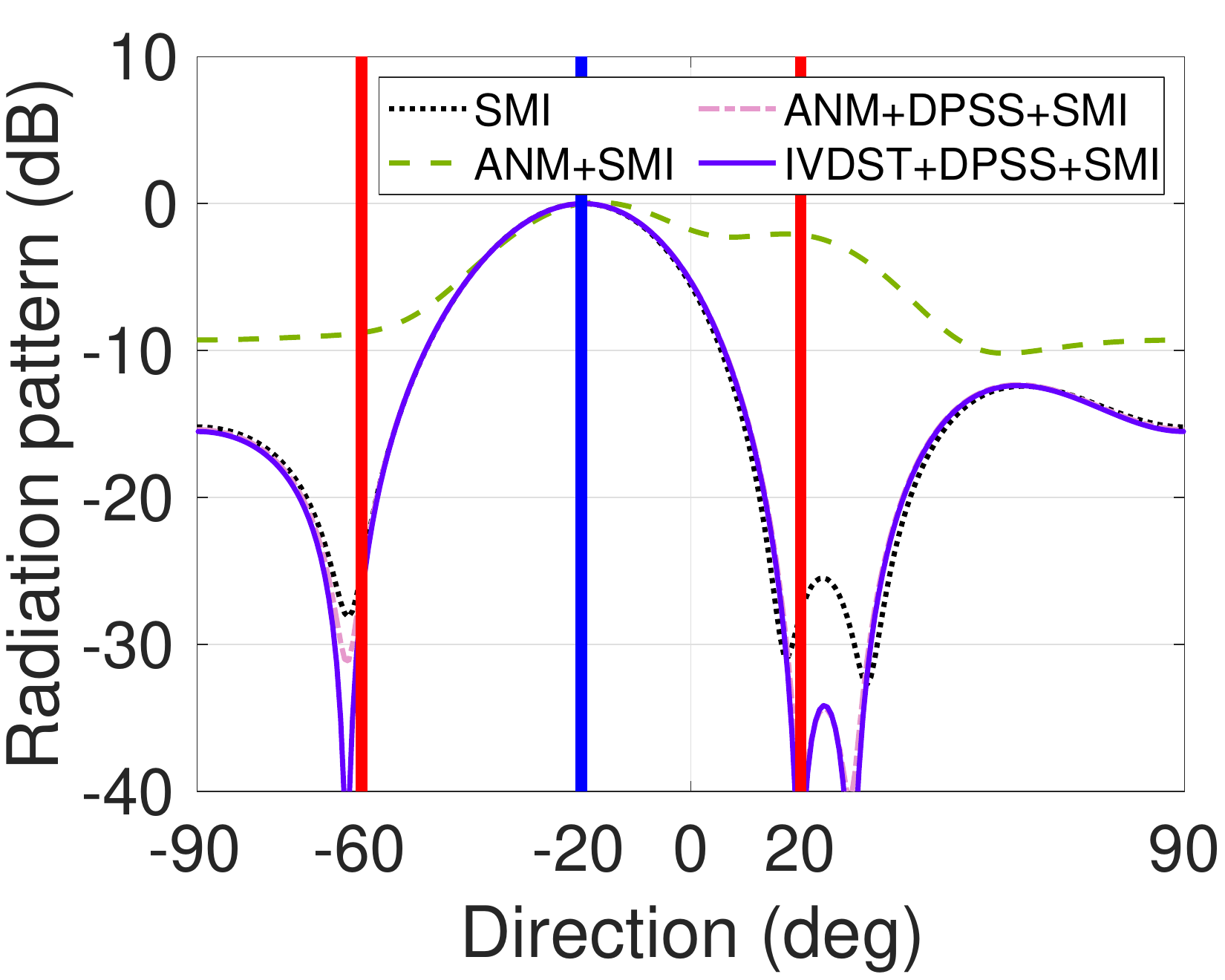}
\centerline{\footnotesize{(b) Linear frequency offset}}
\end{minipage}
\\
\begin{minipage}{0.48\linewidth}
\centering
\includegraphics[width=1.67in]{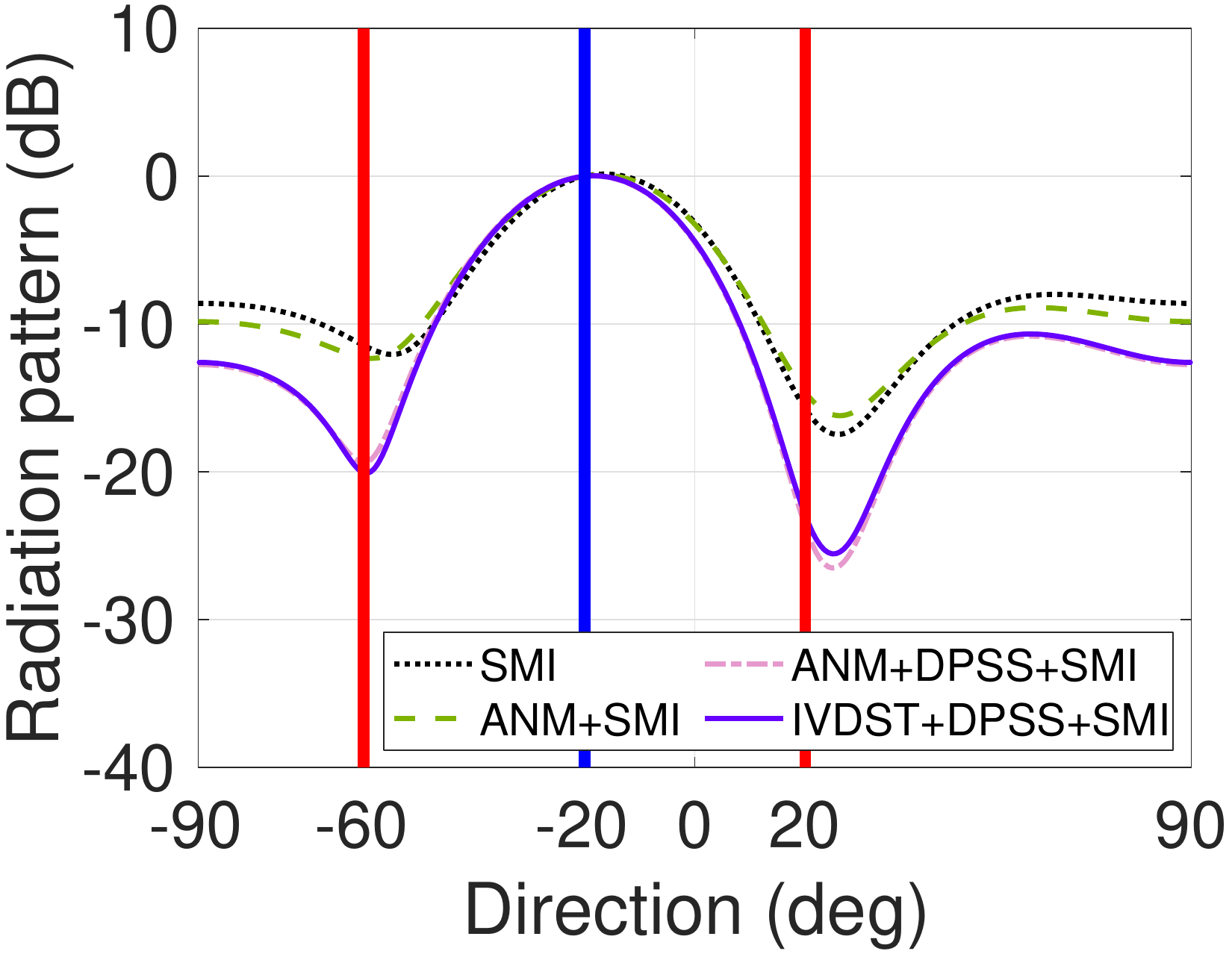}
\centerline{\footnotesize{(c) Zigzag frequency offset}}
\end{minipage}
~
\begin{minipage}{0.48\linewidth}
\centering
\includegraphics[width=1.67in]{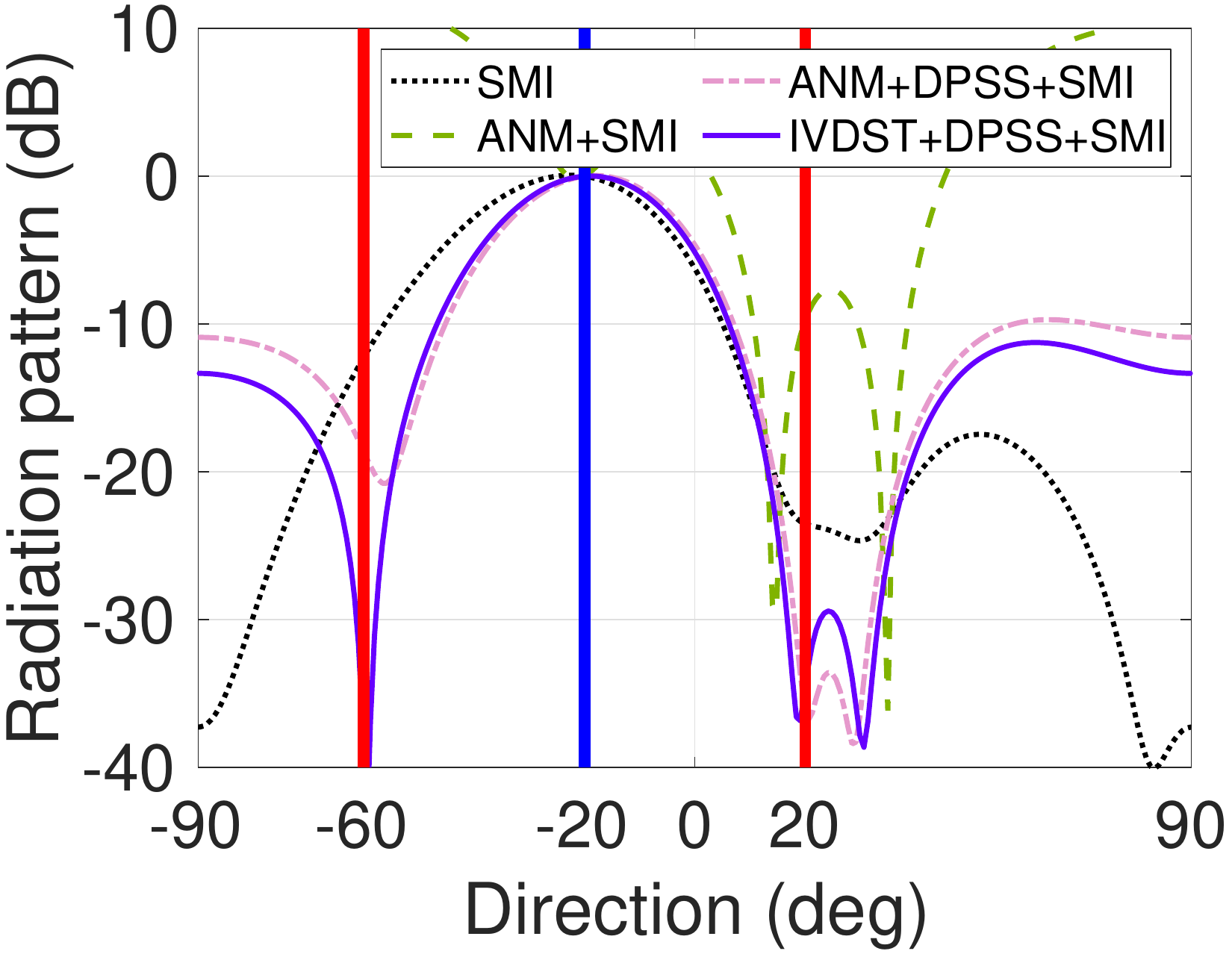}
\centerline{\footnotesize{(d) Random frequency offset}}
\end{minipage}
\caption{Interference cancellation with DBF in the four types of frequency offsets. ($M=120$)}
\label{fig:test_ANM_IVDST_DPSS_SMI_radi}
\end{figure}

\begin{table}[ht]
\caption{The value of $L$ used to generate the DPSS basis.} \label{TB:L}
\begin{center}
\begin{tabular}{|c|c|c|c|c|}
\hline
    &  Static & Linear& Zigzag & Random \\
\hline
      IVDST+DPSS+SMI &7&2&2&13\\
\hline
     ANM+DPSS+SMI &7&10&10&13\\
\hline
     2D-ANM+DPSS+SMI &4 & 4&5 & 4\\
\hline
\end{tabular}
\end{center}
\end{table}

\begin{table}[ht]
\caption{Running time (in seconds) of the ``IVDST+DPSS+SMI'' method and the ``ANM+DPSS+SMI'' method used in the first experiment.} \label{TB:time}
\begin{center}
\begin{tabular}{|c|c|c|c|c|}
\hline
    &  Static & Linear& Zigzag & Random \\
\hline
      IVDST+DPSS+SMI &20.25&19.81&19.07&19.57\\
\hline
     ANM+DPSS+SMI &470.23&506.52&479.42&941.30\\
\hline
\end{tabular}
\end{center}
\end{table}

To show that the proposed ``IVDST+DPSS+SMI'' method is suitable for problems with larger size, we repeat the above experiment with $M=300$, $(f_1^o, f_2^o, f_3^o) = (0.2,0.24,0.3)$, and $L = 2$. Other parameters are set the same as in the above experiment.  Since the SDP-based methods run slowly in this case, we compare only the ``IVDST+DPSS+SMI'' method with the SMI method. We present the time-varying frequency offsets, the dual polynomials, and the radiation pattern in Figures~\ref{fig:test_ANM_IVDST_DPSS_SMI_freqdrifth}-\ref{fig:test_ANM_IVDST_DPSS_SMI_radih}. It can be seen that proposed ``IVDST+DPSS+SMI'' method still works very well in this high-dimensional case and significantly outperforms SMI.

\begin{figure}[ht]
\begin{minipage}{0.48\linewidth}
\centering
\includegraphics[width=1.67in]{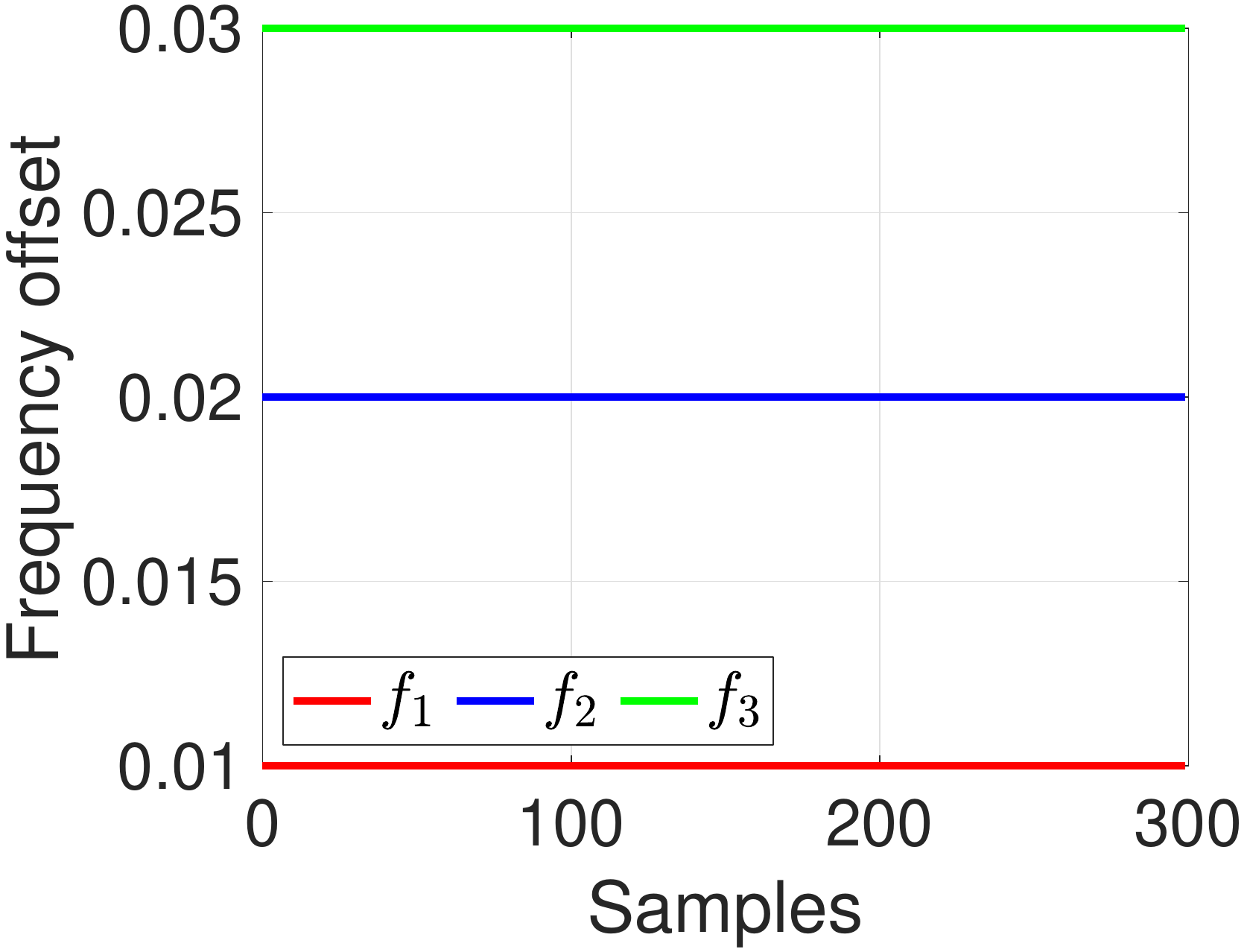}
\centerline{\footnotesize{(a) Static frequency offset}}
\end{minipage}
~
\begin{minipage}{0.48\linewidth}
\centering
\includegraphics[width=1.67in]{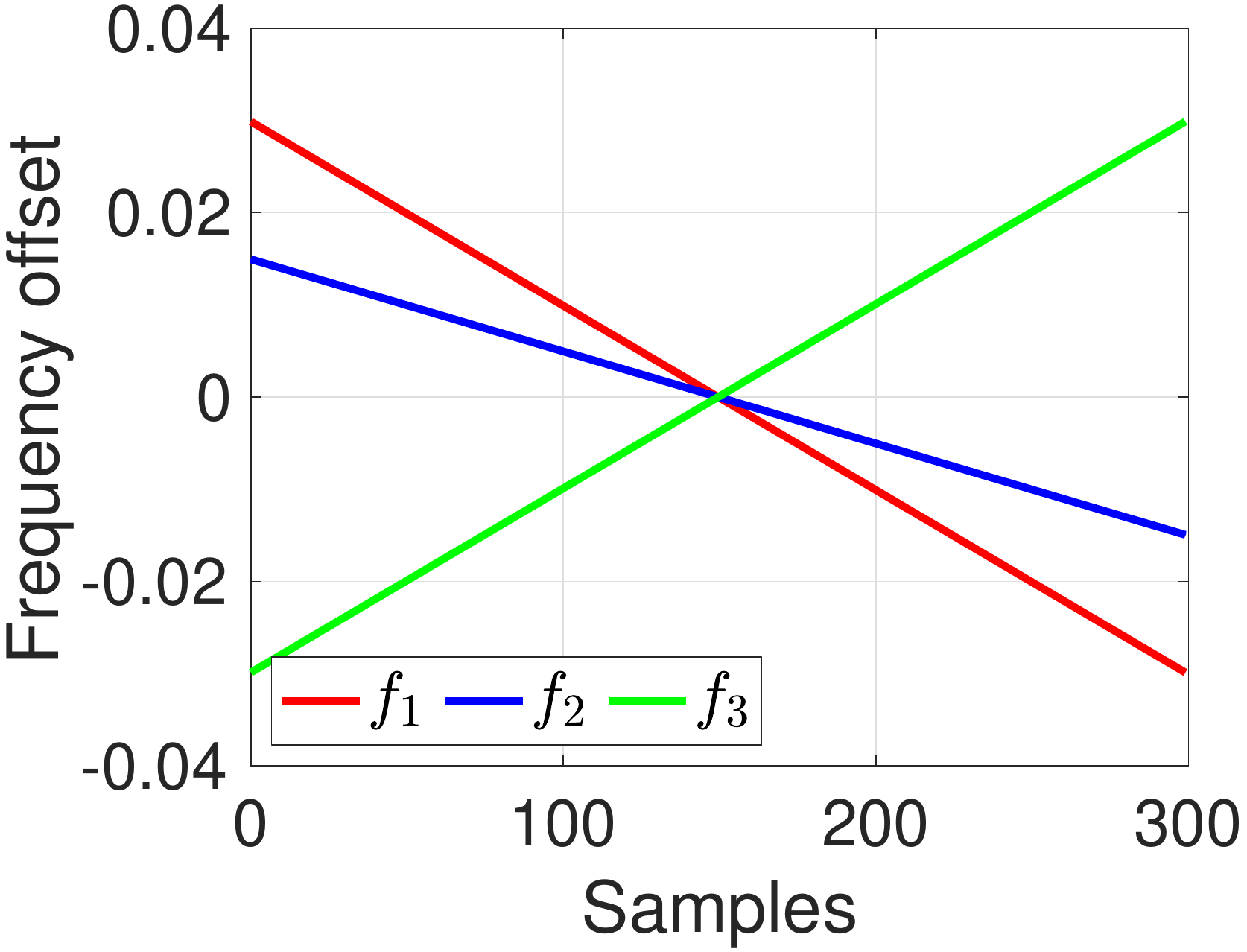}
\centerline{\footnotesize{(b) Linear frequency offset}}
\end{minipage}
\\
\begin{minipage}{0.48\linewidth}
\centering
\includegraphics[width=1.67in]{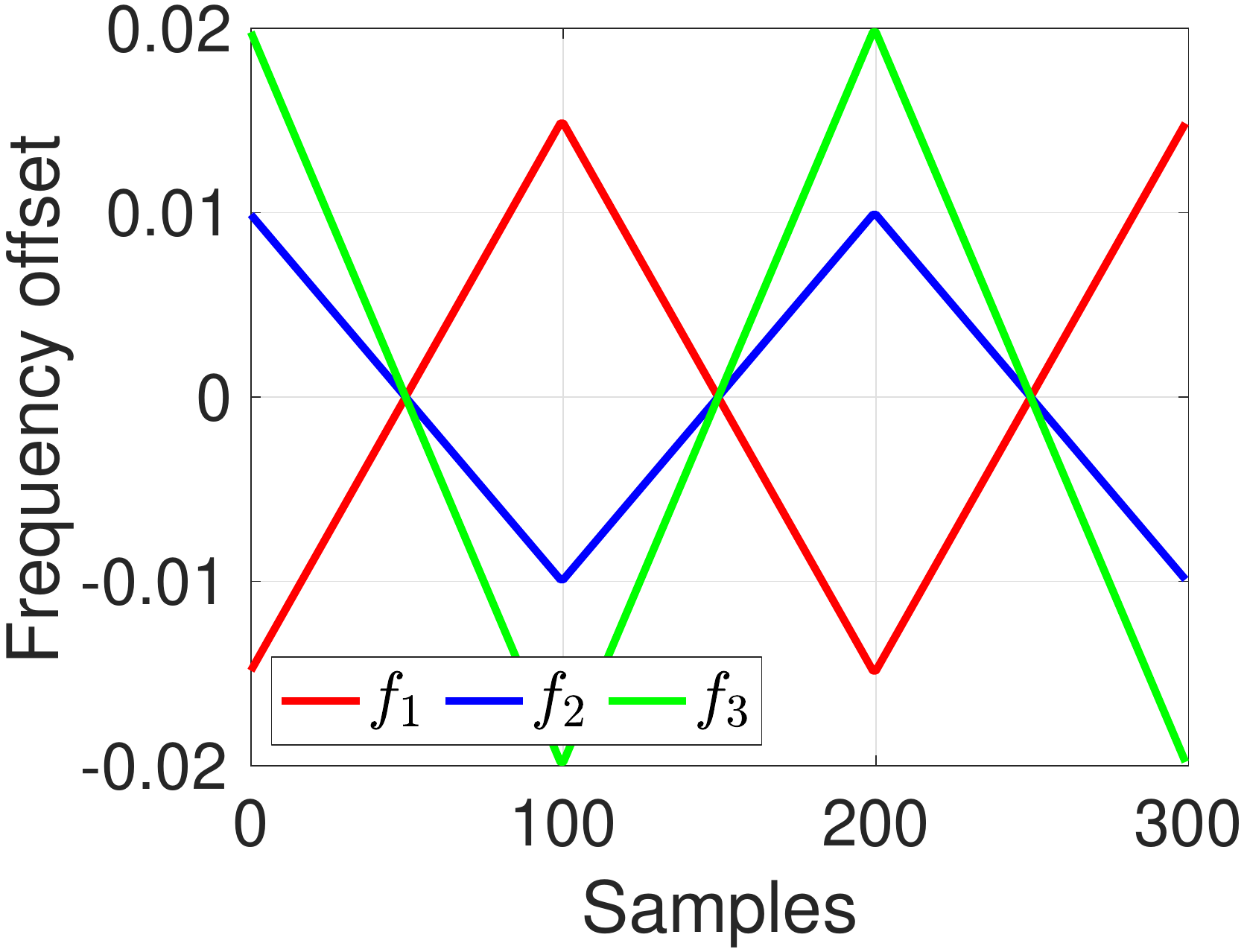}
\centerline{\footnotesize{(c) Zigzag frequency offset}}
\end{minipage}
~
\begin{minipage}{0.48\linewidth}
\centering
\includegraphics[width=1.67in]{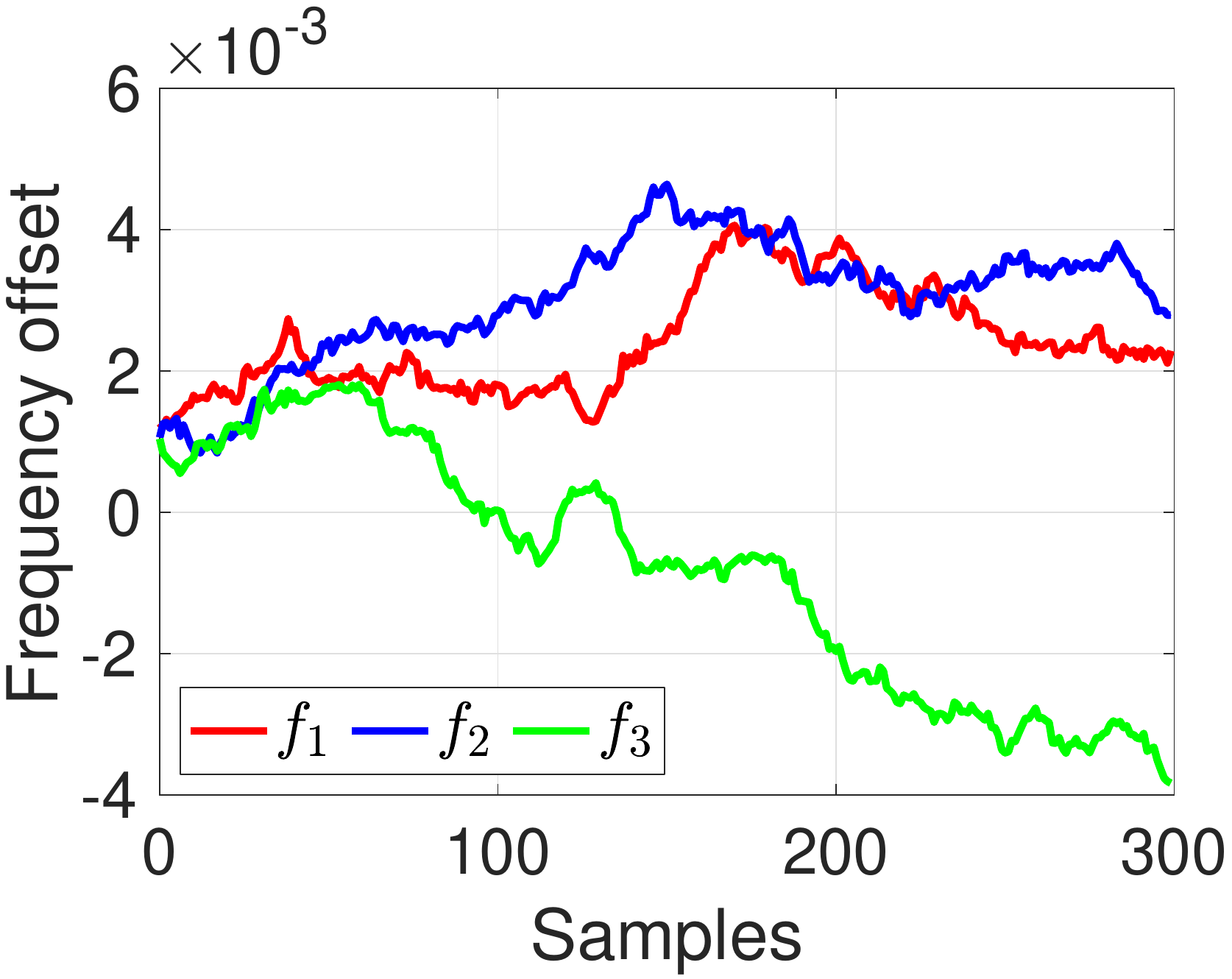}
\centerline{\footnotesize{(d) Random frequency offset}}
\end{minipage}
\caption{Four types of frequency offsets used in the experiments. ($M=300$)}
\label{fig:test_ANM_IVDST_DPSS_SMI_freqdrifth}
\end{figure}

\begin{figure}[ht]
\begin{minipage}{0.48\linewidth}
\centering
\includegraphics[width=1.67in]{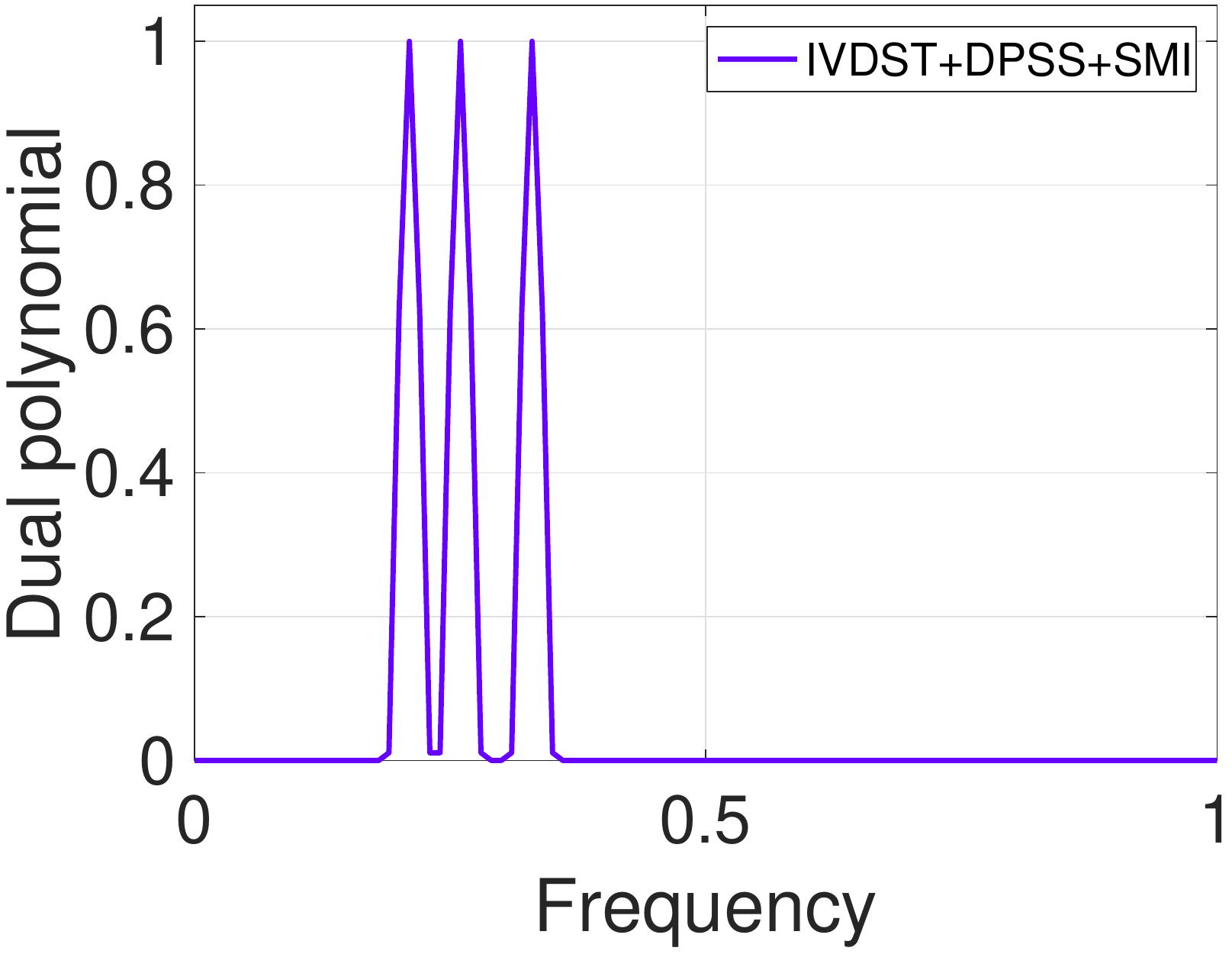}
\centerline{\footnotesize{(a) Static frequency offset}}
\end{minipage}
~
\begin{minipage}{0.48\linewidth}
\centering
\includegraphics[width=1.67in]{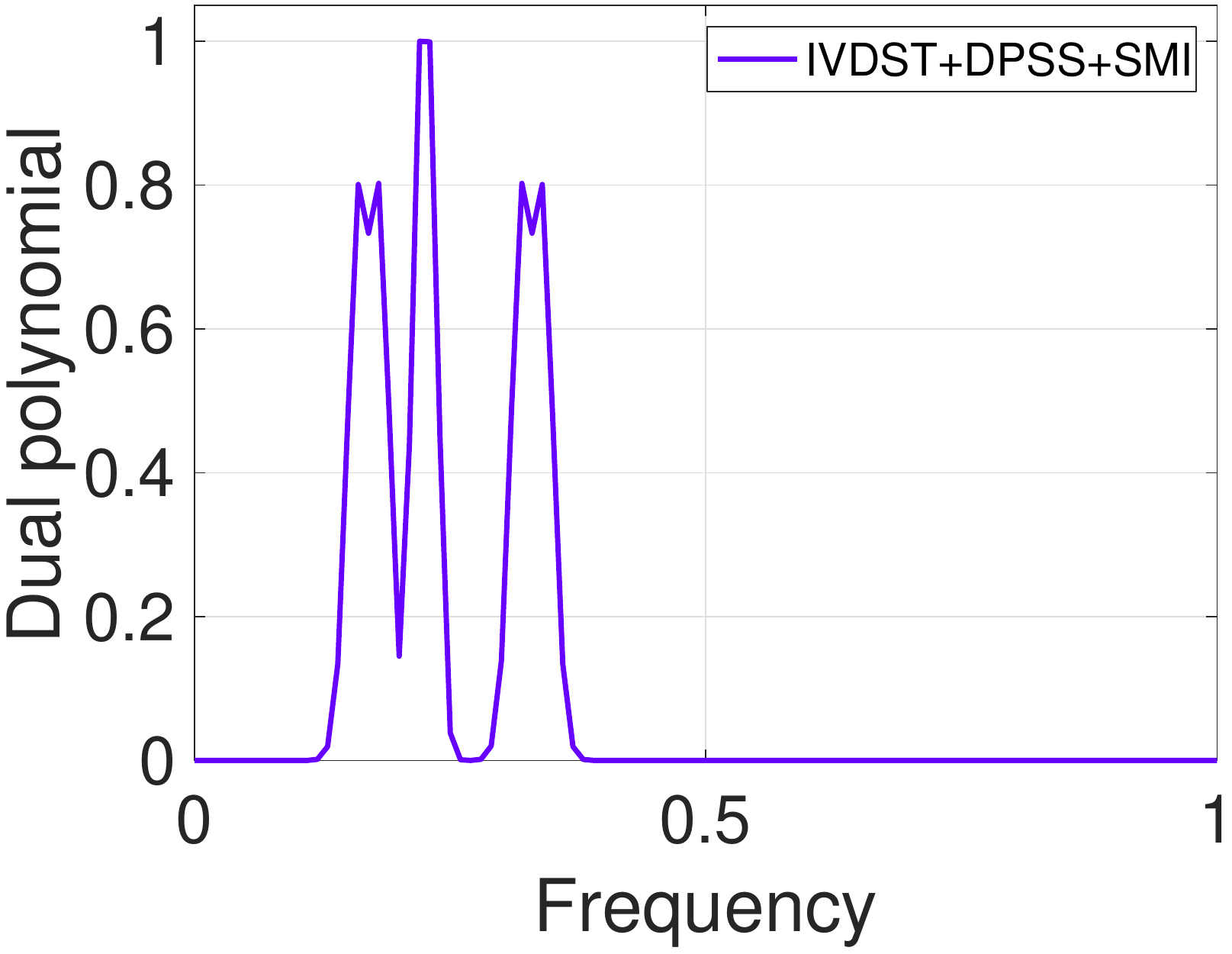}
\centerline{\footnotesize{(b) Linear frequency offset}}
\end{minipage}
\\
\begin{minipage}{0.48\linewidth}
\centering
\includegraphics[width=1.67in]{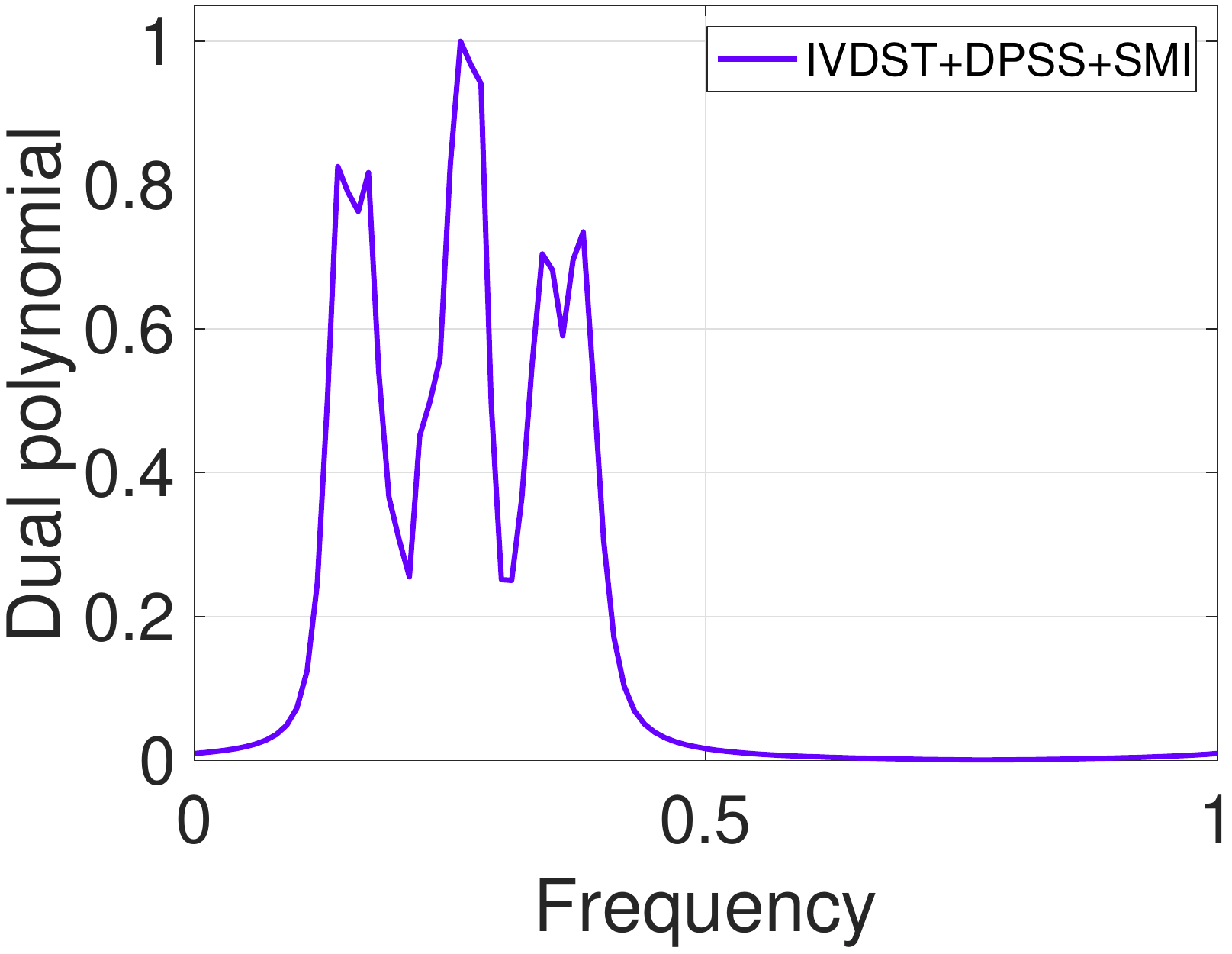}
\centerline{\footnotesize{(c) Zigzag frequency offset}}
\end{minipage}
~
\begin{minipage}{0.48\linewidth}
\centering
\includegraphics[width=1.67in]{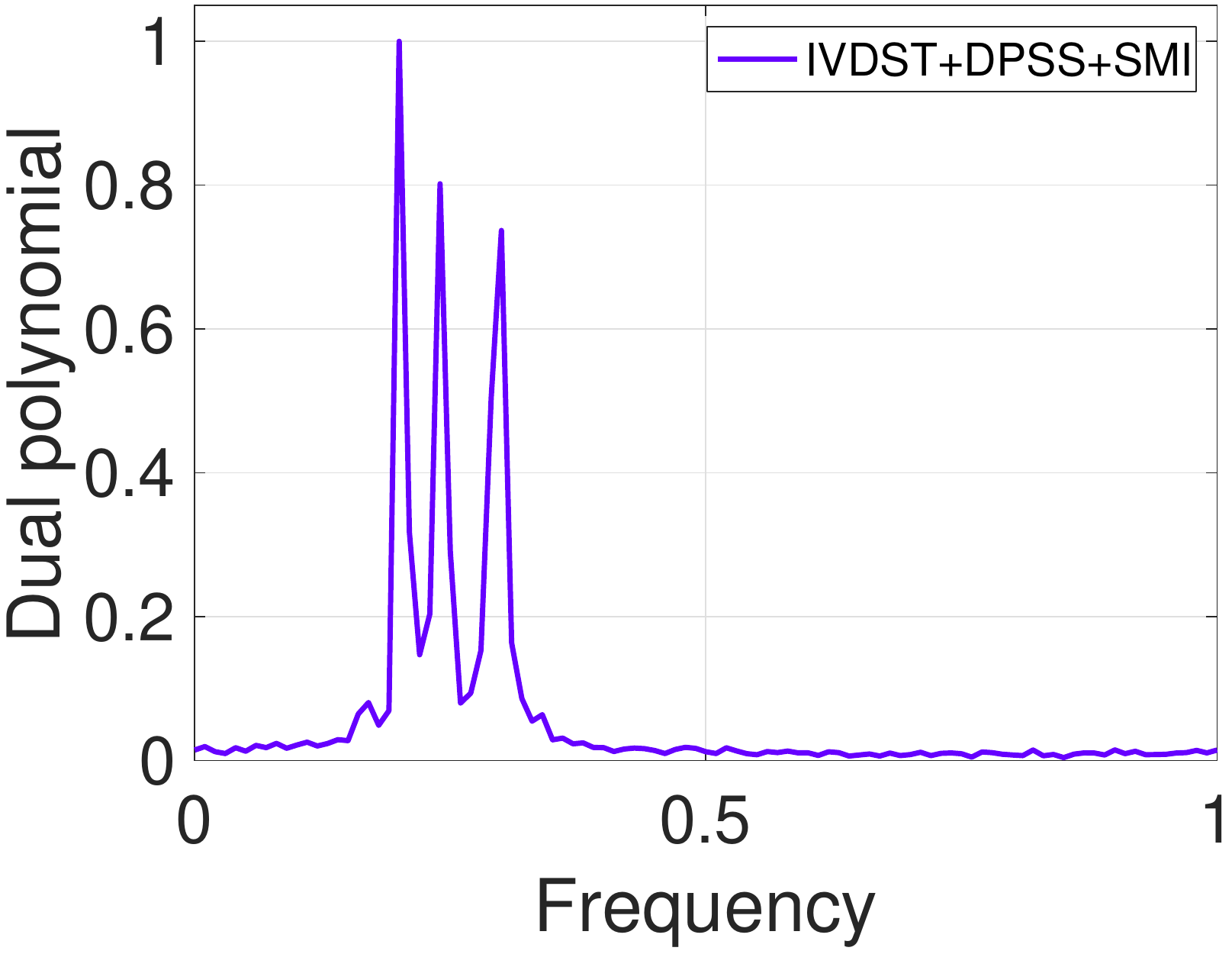}
\centerline{\footnotesize{(d) Random frequency offset}}
\end{minipage}
\caption{Dual polynomial obtained from the three methods in the four types of frequency offsets. ($M=300$)}
\label{fig:test_ANM_IVDST_DPSS_SMI_dualh}
\end{figure}

\begin{figure}[ht]
\begin{minipage}{0.48\linewidth}
\centering
\includegraphics[width=1.67in]{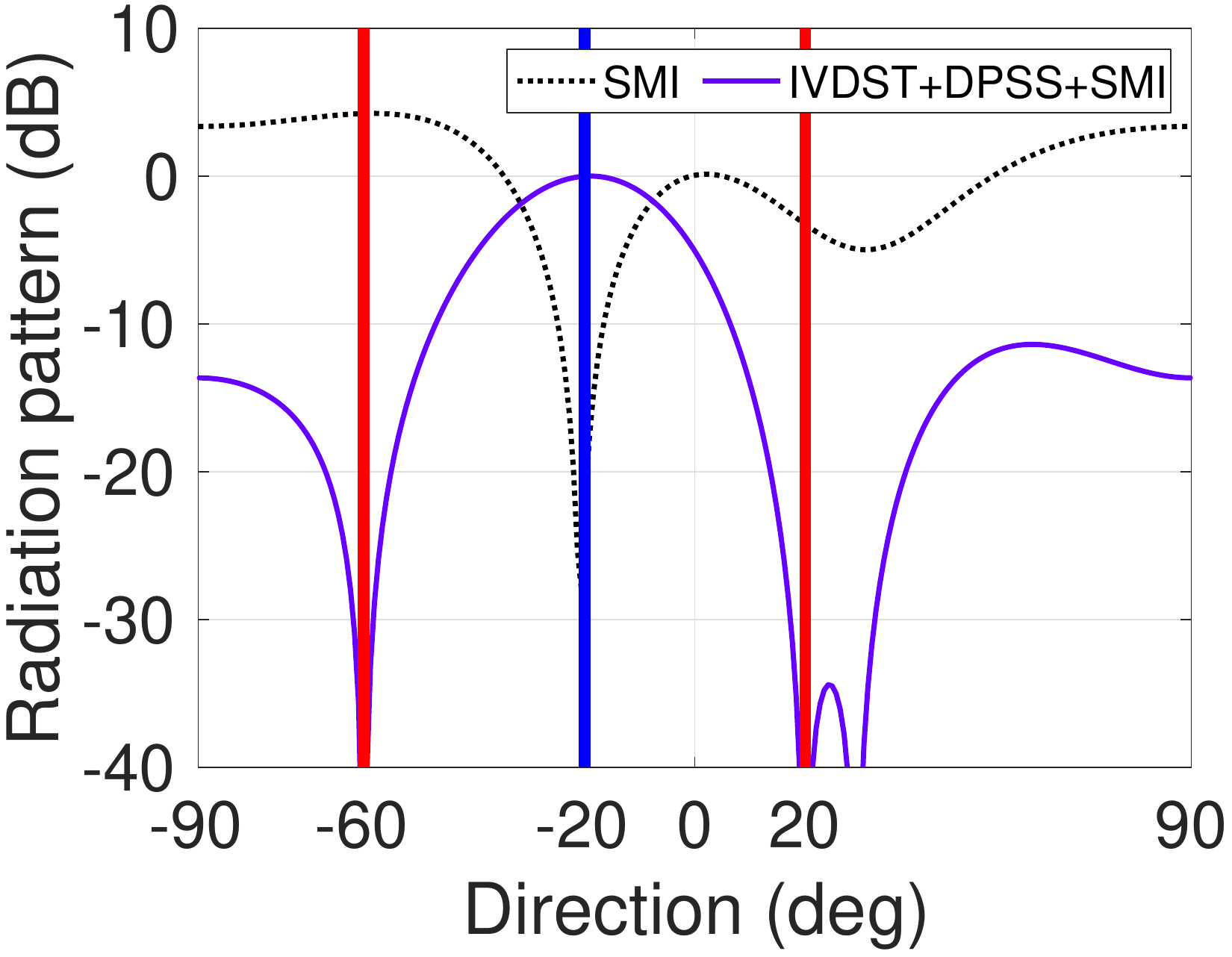}
\centerline{\footnotesize{(a) Static frequency offset}}
\end{minipage}
~
\begin{minipage}{0.48\linewidth}
\centering
\includegraphics[width=1.67in]{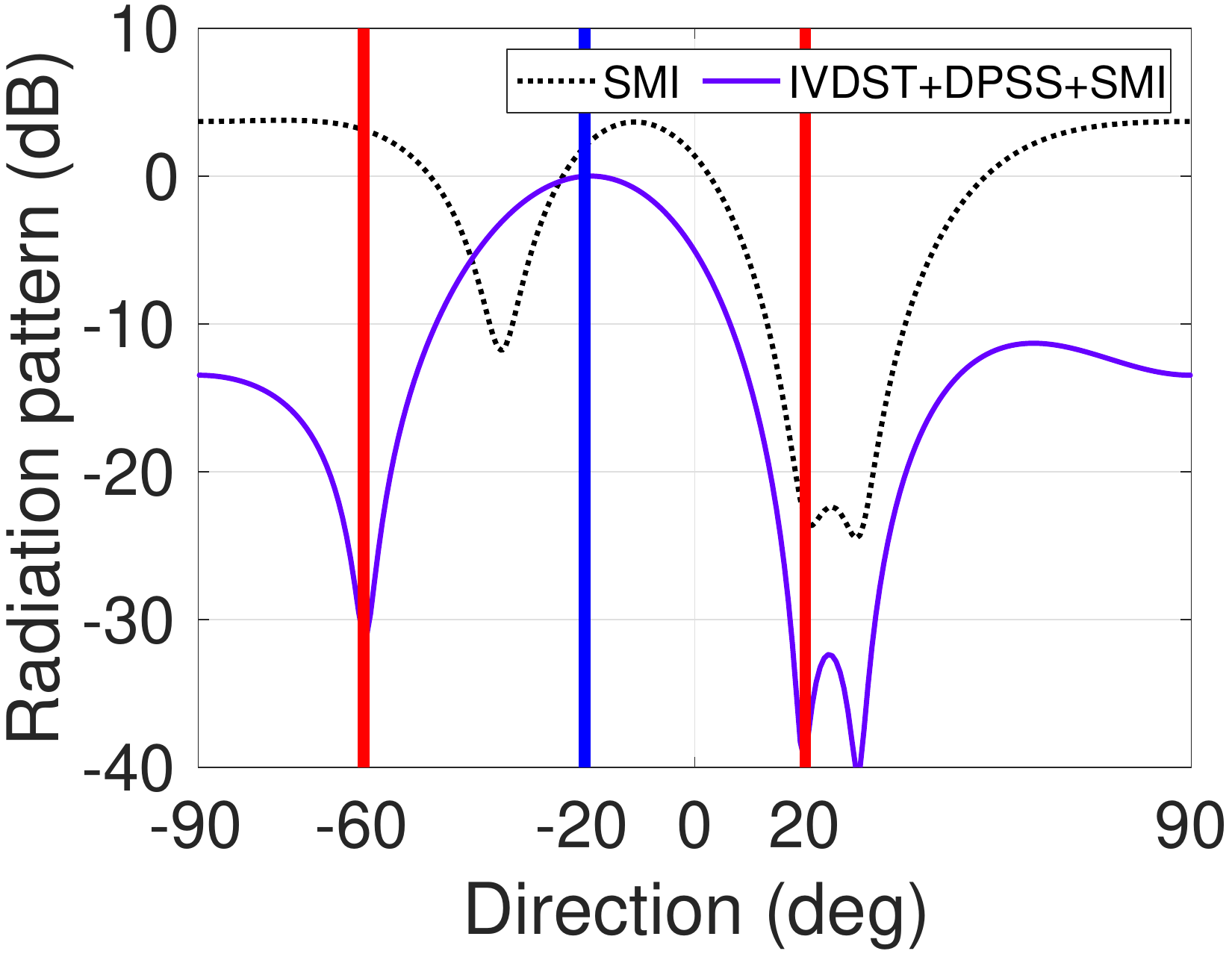}
\centerline{\footnotesize{(b) Linear frequency offset}}
\end{minipage}
\\
\begin{minipage}{0.48\linewidth}
\centering
\includegraphics[width=1.67in]{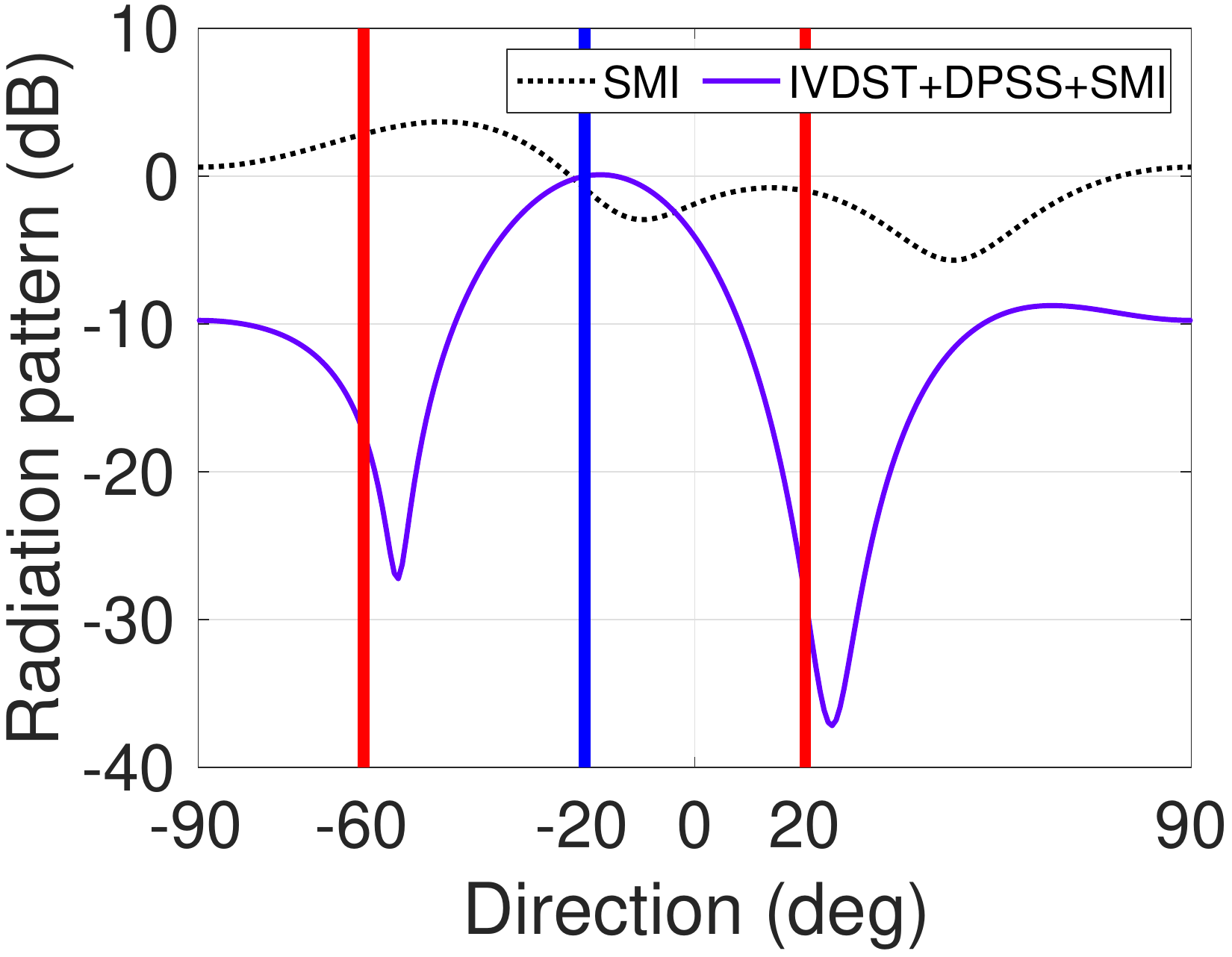}
\centerline{\footnotesize{(c) Zigzag frequency offset}}
\end{minipage}
~
\begin{minipage}{0.48\linewidth}
\centering
\includegraphics[width=1.67in]{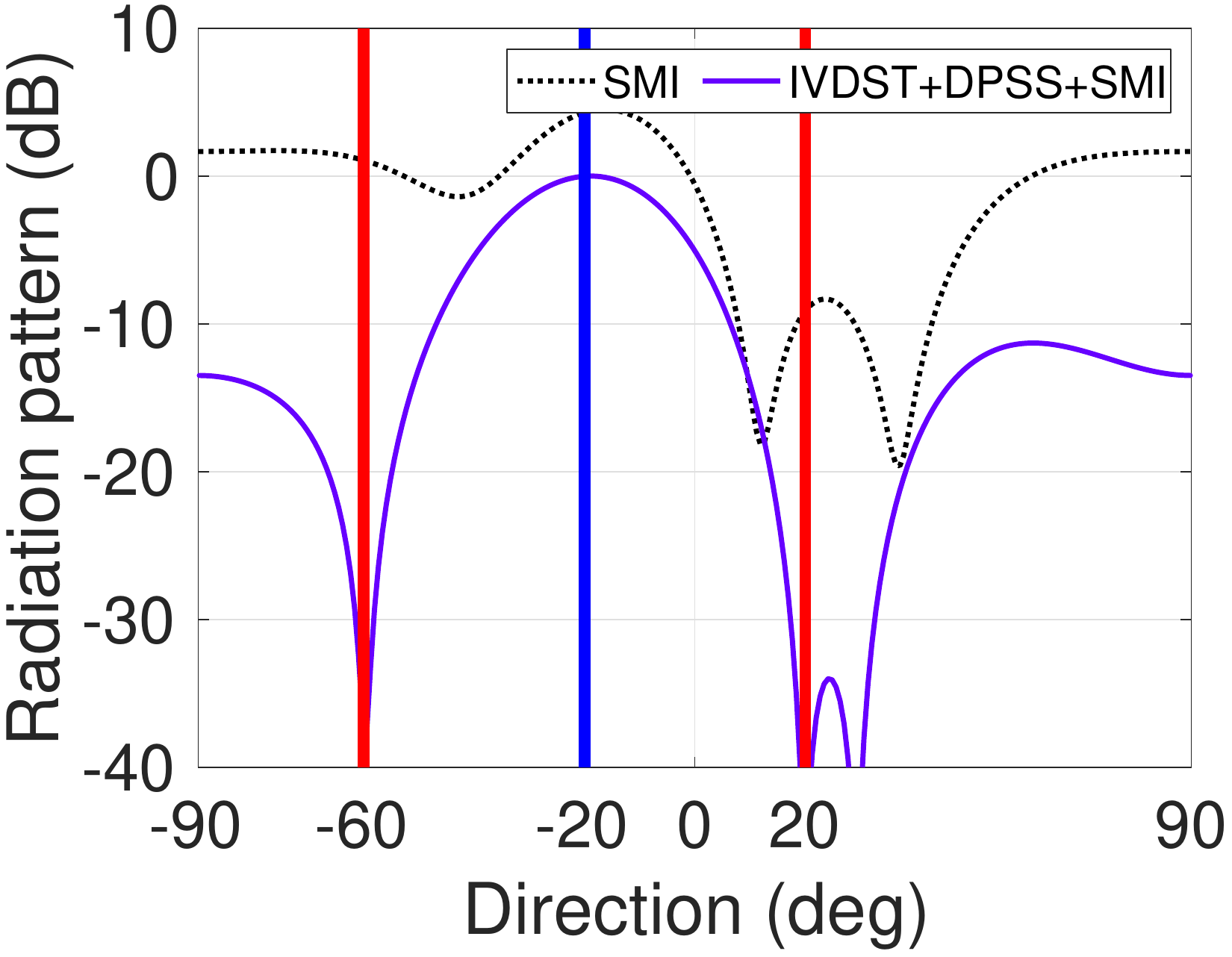}
\centerline{\footnotesize{(d) Random frequency offset}}
\end{minipage}
\caption{Interference cancellation with DBF in the four types of frequency offsets. ($M=300$)}
\label{fig:test_ANM_IVDST_DPSS_SMI_radih}
\end{figure}

Next, we compare the proposed ``2D-ANM+DPSS+SMI'' method with the SMI method. We repeat the first experiment with $M=15$\footnote{For zigzag frequency offset, we set $M=30$.} and $(f_1^o, f_2^o, f_3^o) = (0.2,0.7,0.7)$. Other parameters are set same as the first experiment. We present the four types of frequency offsets in Figure~\ref{fig:test_ANM_DPSS_SMI_2D_freqdrift} and the four values of $L$ in Table~\ref{TB:L}. We present the dual polynomials obtained from the ``2D-ANM+DPSS+SMI'' method in Figure~\ref{fig:test_ANM_DPSS_SMI_2D_dual} and the corresponding radiation pattern in Figure~\ref{fig:test_ANM_DPSS_SMI_2D_radi}. It can be seen that the ``2D-ANM+DPSS+SMI'' method still significantly outperforms the SMI method.

\begin{figure}[ht]
\begin{minipage}{0.48\linewidth}
\centering
\includegraphics[width=1.67in]{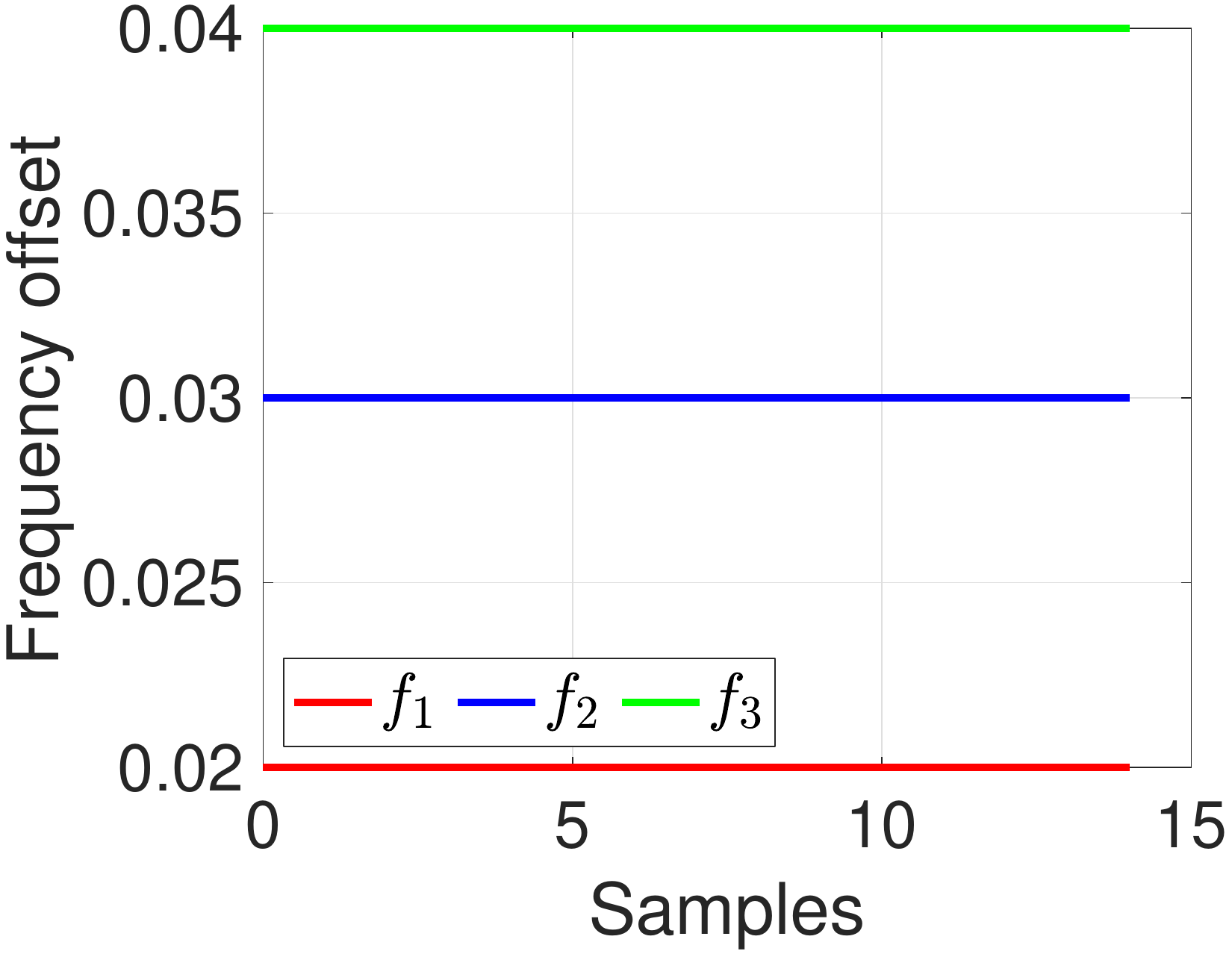}
\centerline{\footnotesize{(a) Static frequency offset}}
\end{minipage}
~
\begin{minipage}{0.48\linewidth}
\centering
\includegraphics[width=1.67in]{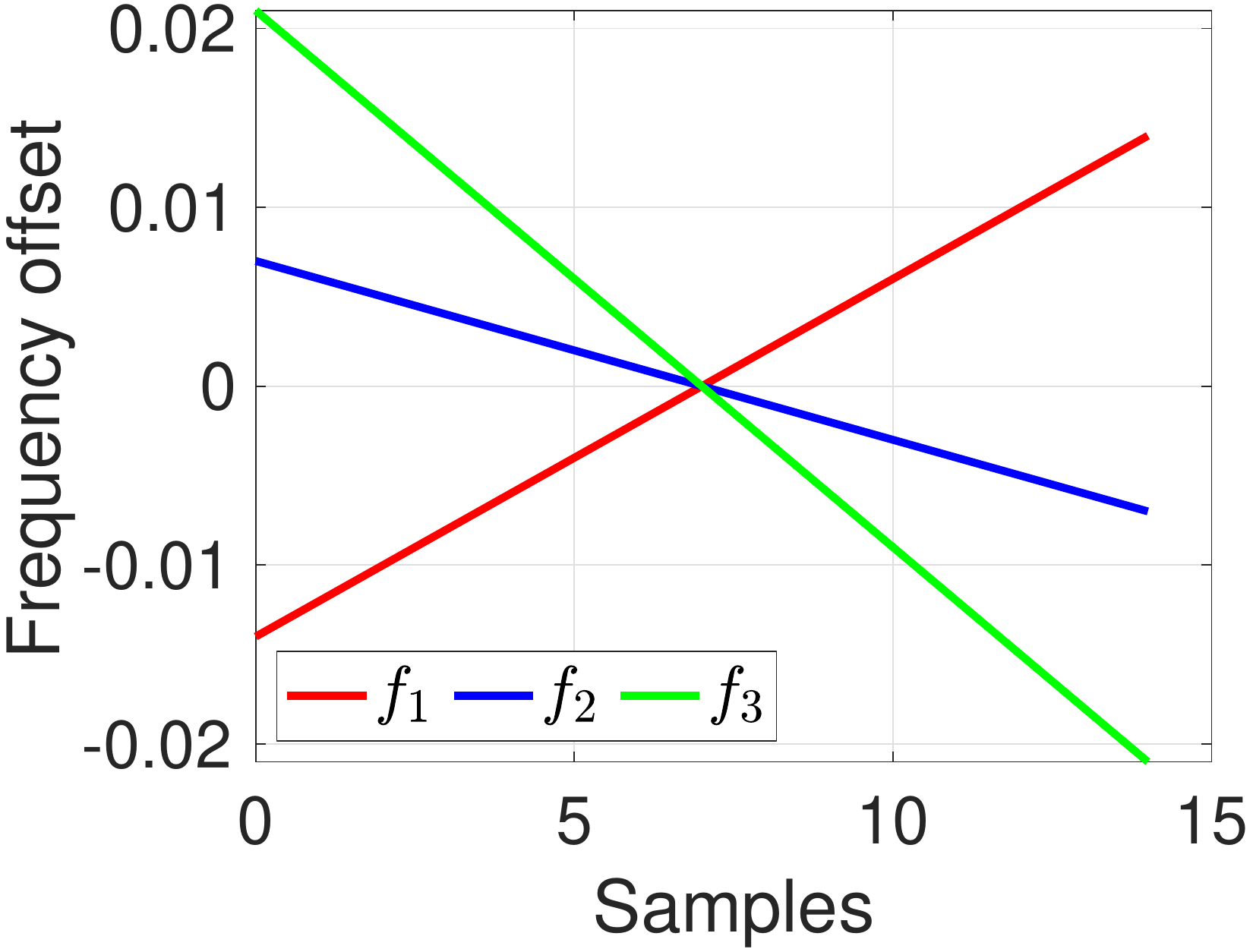}
\centerline{\footnotesize{(b) Linear frequency offset}}
\end{minipage}
\\
\begin{minipage}{0.48\linewidth}
\centering
\includegraphics[width=1.67in]{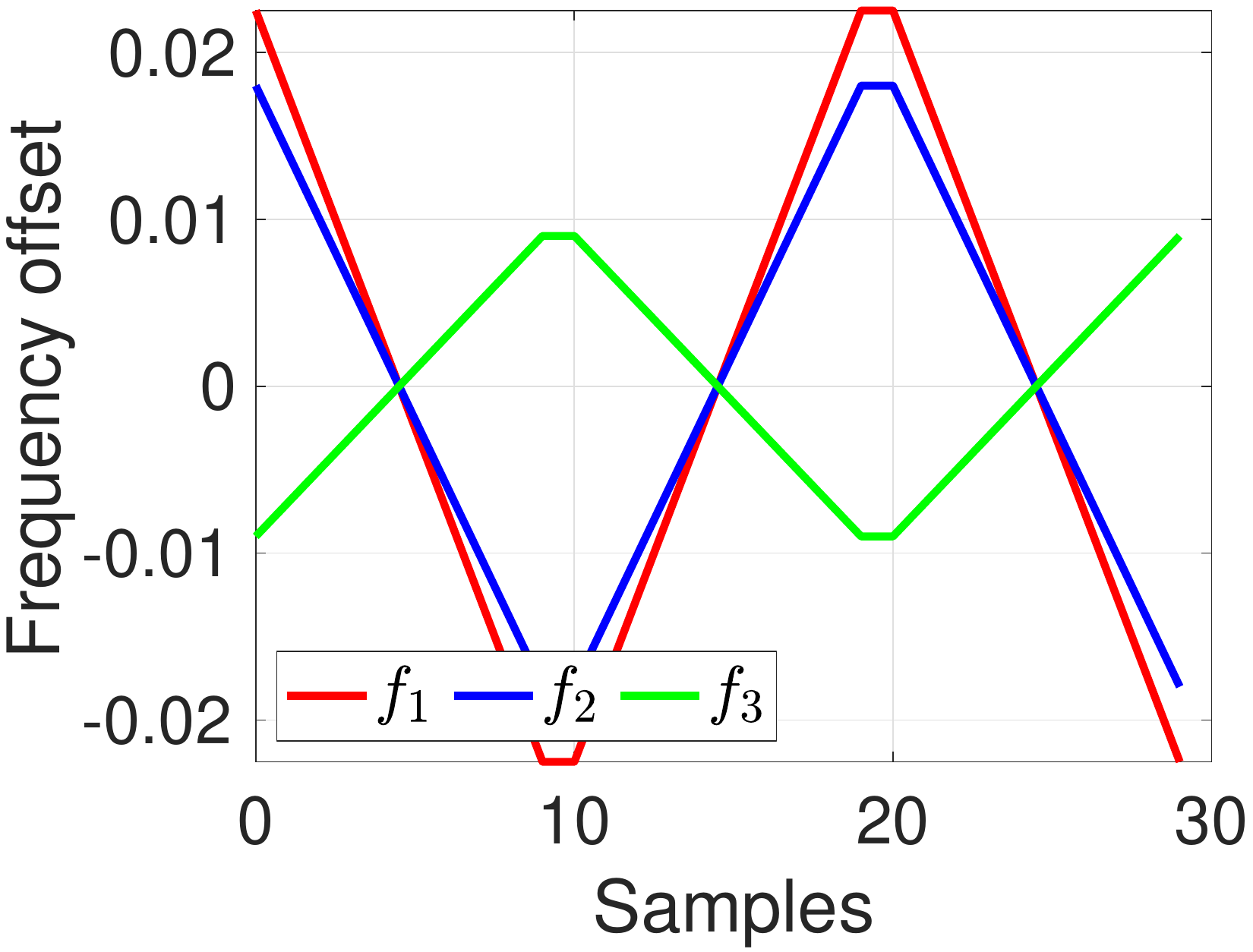}
\centerline{\footnotesize{(c) Zigzag frequency offset}}
\end{minipage}
~
\begin{minipage}{0.48\linewidth}
\centering
\includegraphics[width=1.67in]{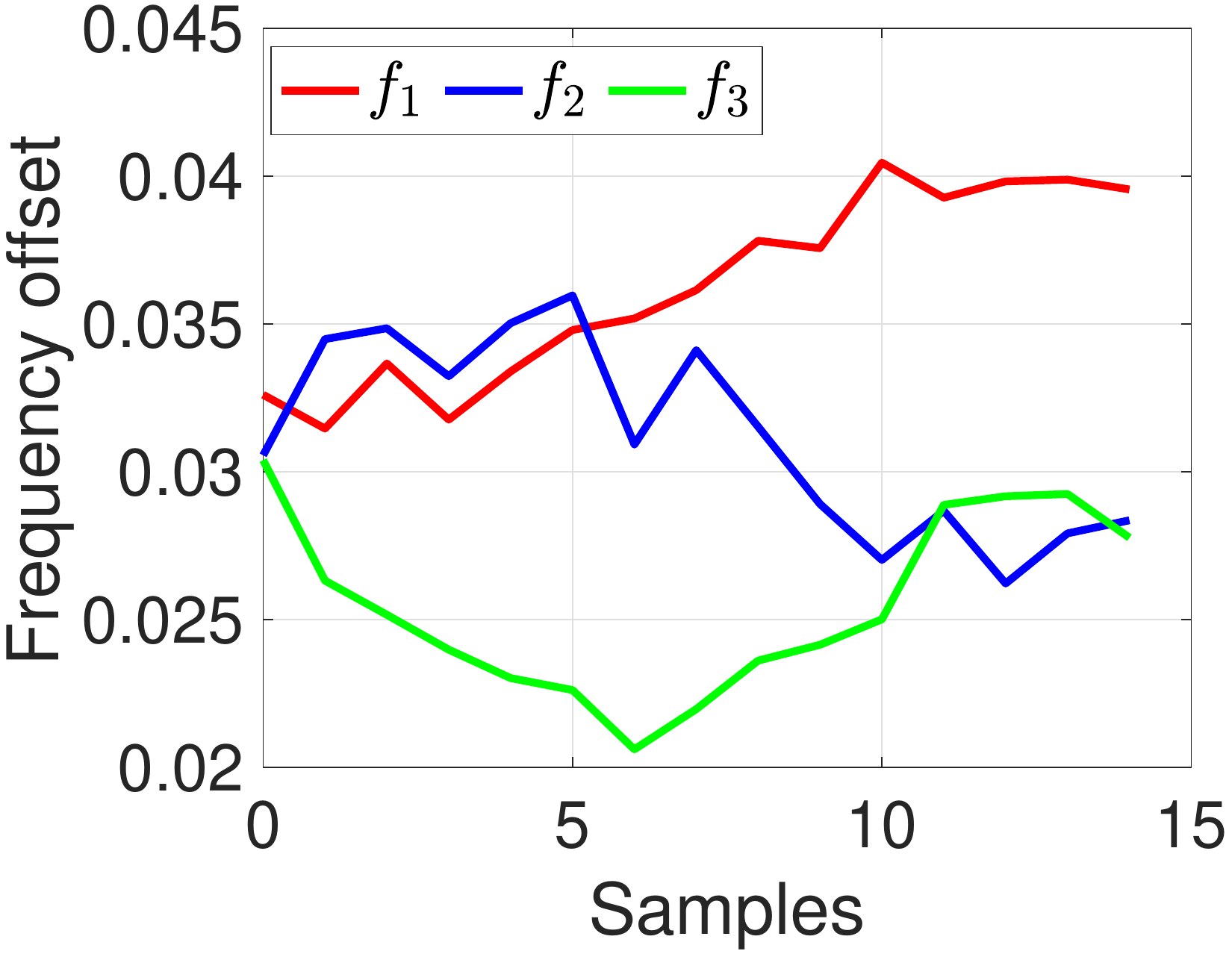}
\centerline{\footnotesize{(d) Random frequency offset}}
\end{minipage}
\caption{Four types of frequency offsets used in the experiments. ($M=15$ or 30)}
\label{fig:test_ANM_DPSS_SMI_2D_freqdrift}
\end{figure}

\begin{figure}[ht]
\begin{minipage}{0.48\linewidth}
\centering
\includegraphics[width=1.67in]{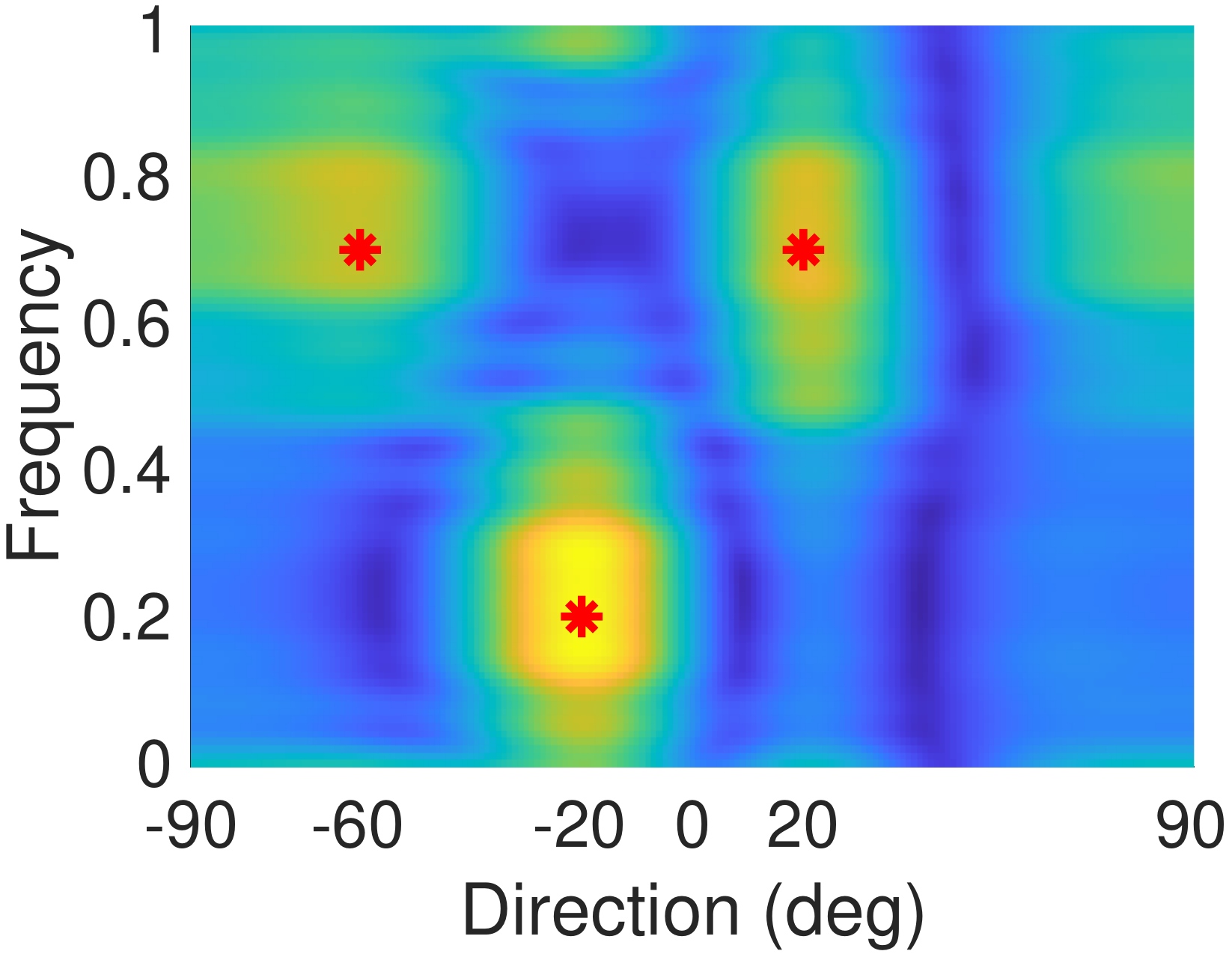}
\centerline{\footnotesize{(a) Static frequency offset}}
\end{minipage}
~
\begin{minipage}{0.48\linewidth}
\centering
\includegraphics[width=1.67in]{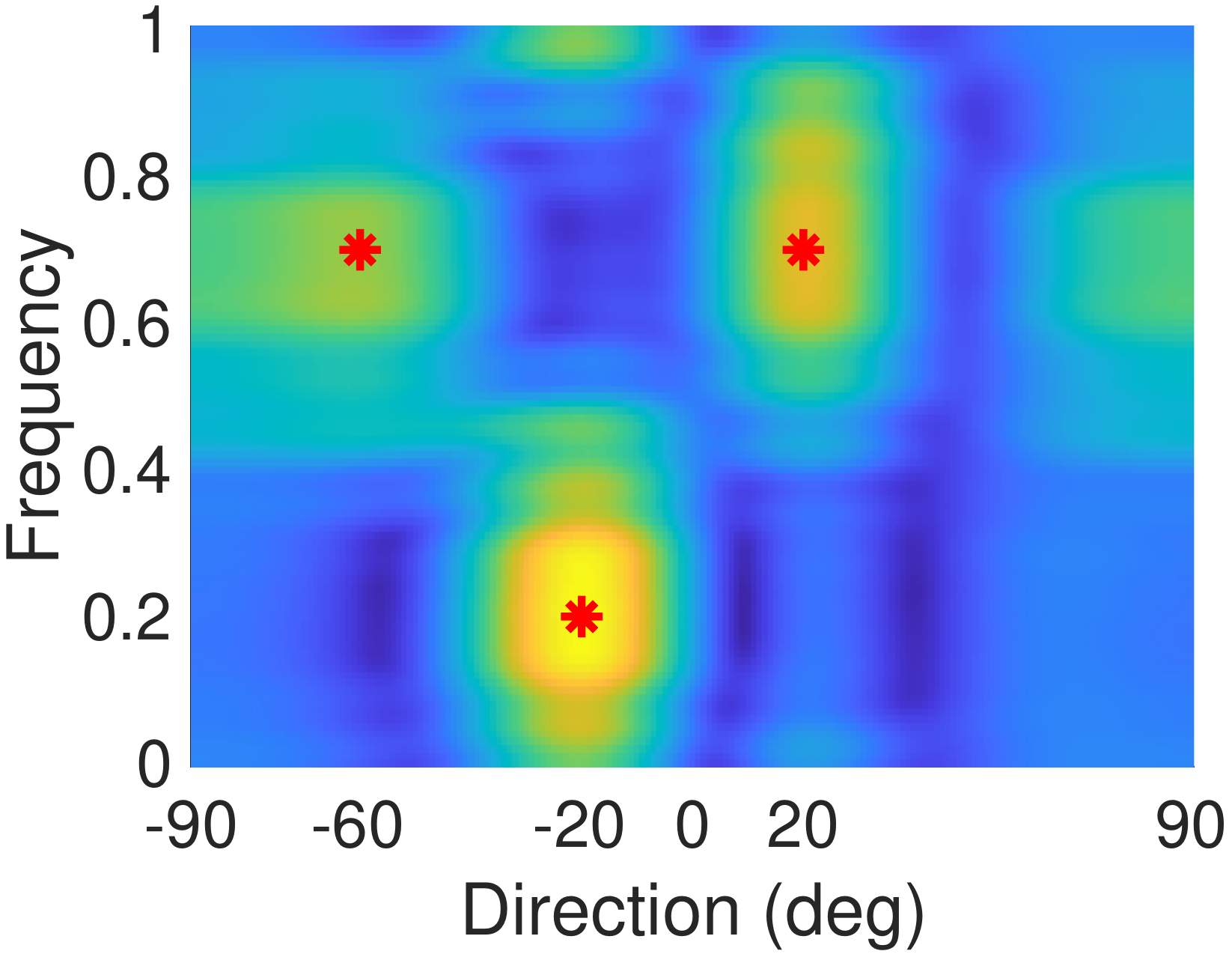}
\centerline{\footnotesize{(b) Linear frequency offset}}
\end{minipage}
\\
\begin{minipage}{0.48\linewidth}
\centering
\includegraphics[width=1.67in]{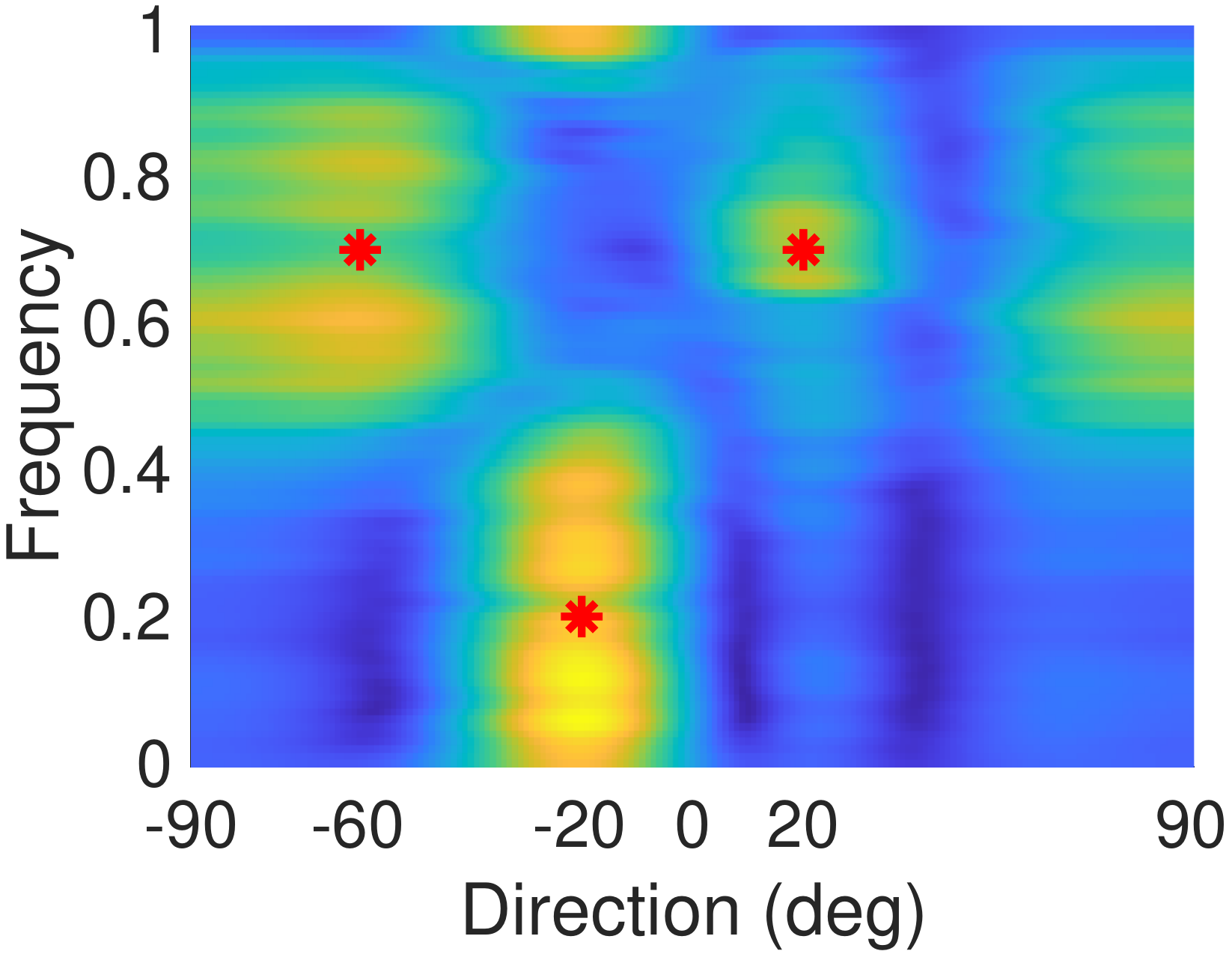}
\centerline{\footnotesize{(c) Zigzag frequency offset}}
\end{minipage}
~
\begin{minipage}{0.48\linewidth}
\centering
\includegraphics[width=1.67in]{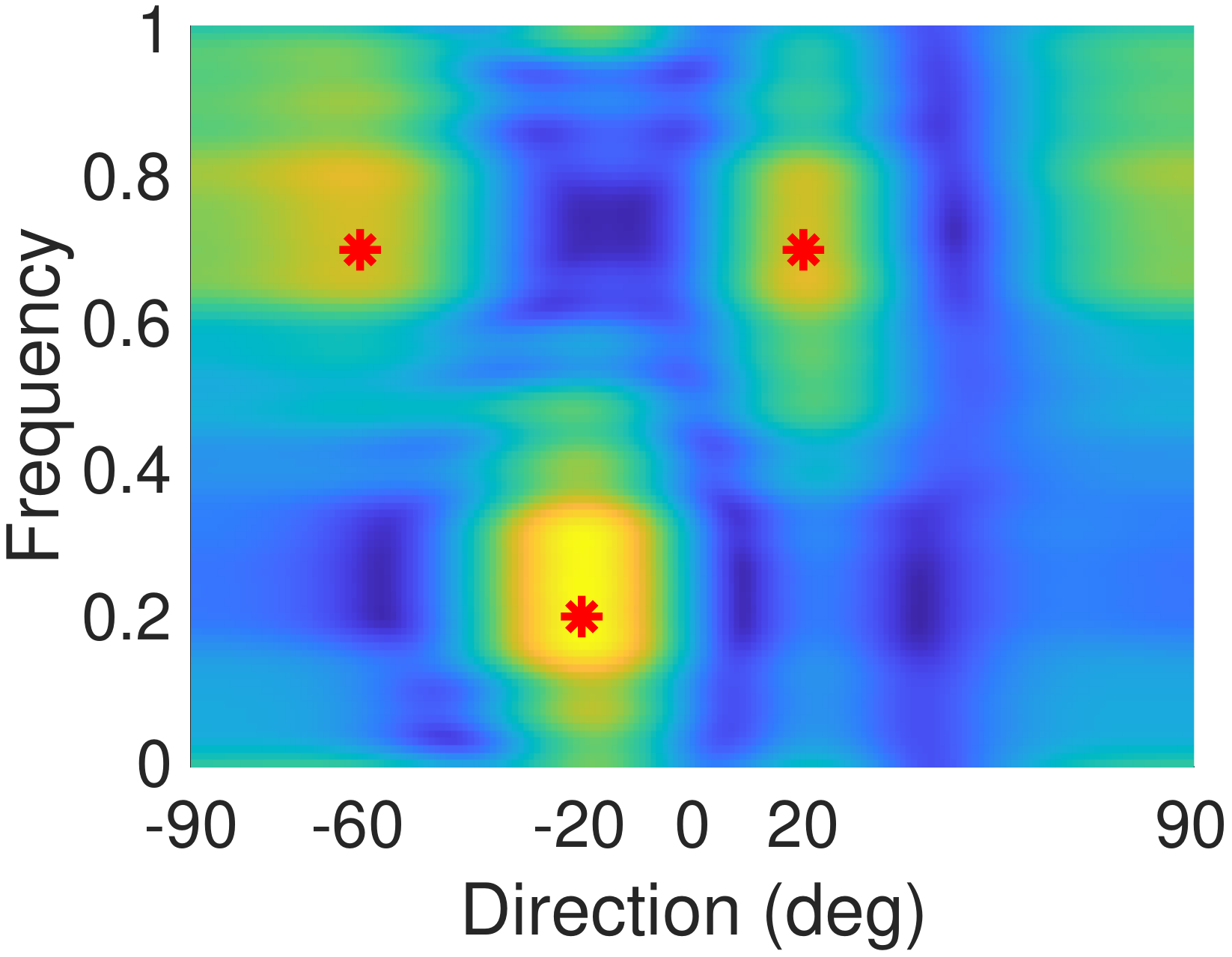}
\centerline{\footnotesize{(d) Random frequency offset}}
\end{minipage}
\caption{Dual polynomials obtained from the ``2D-ANM+DPSS+SMI'' method in the four types of frequency offsets. The red stars denote the true frequencies and angles in the desired signal and interferers.}
\label{fig:test_ANM_DPSS_SMI_2D_dual}
\end{figure}

\begin{figure}[ht]
\begin{minipage}{0.48\linewidth}
\centering
\includegraphics[width=1.67in]{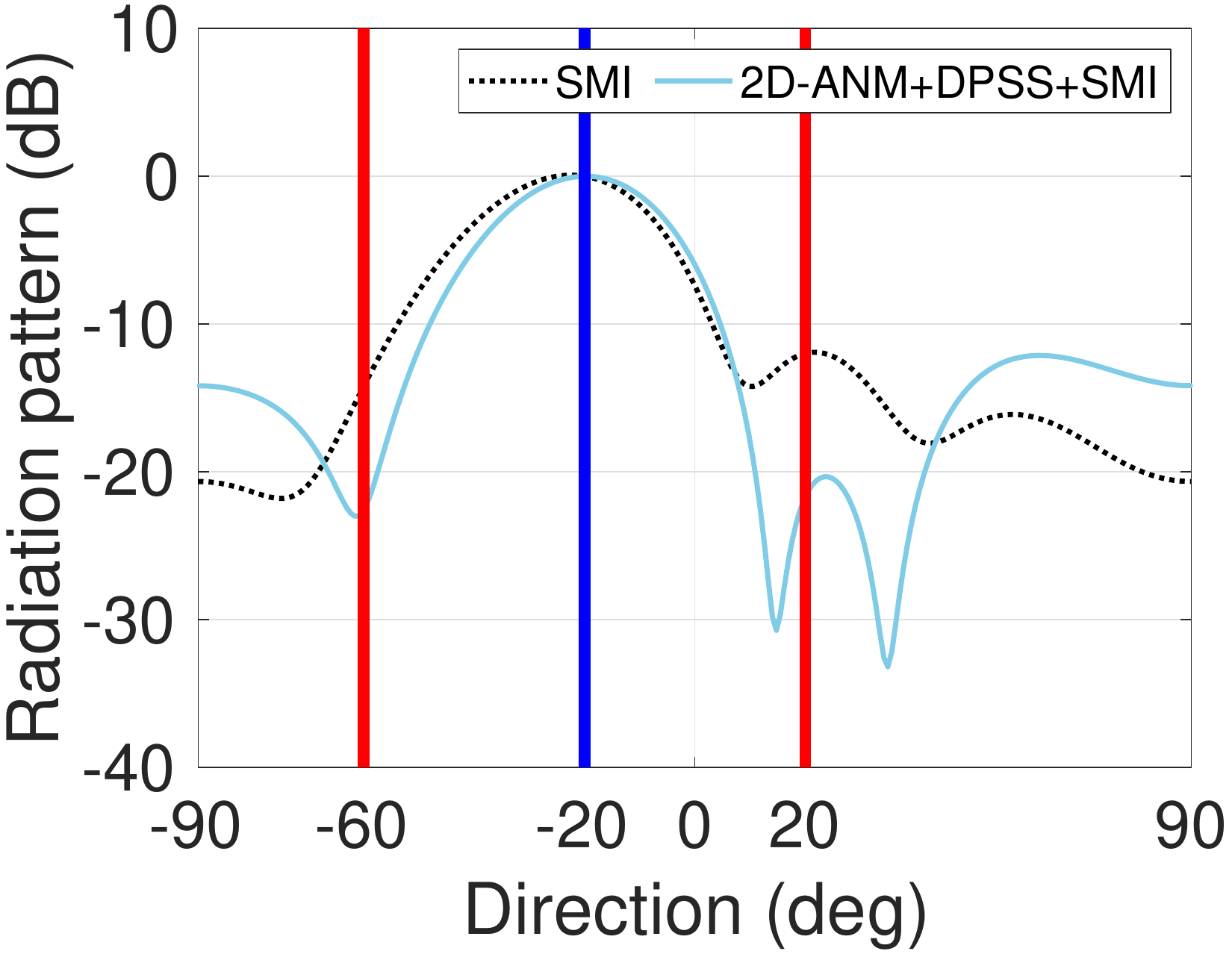}
\centerline{\footnotesize{(a) Static frequency offset}}
\end{minipage}
~
\begin{minipage}{0.48\linewidth}
\centering
\includegraphics[width=1.67in]{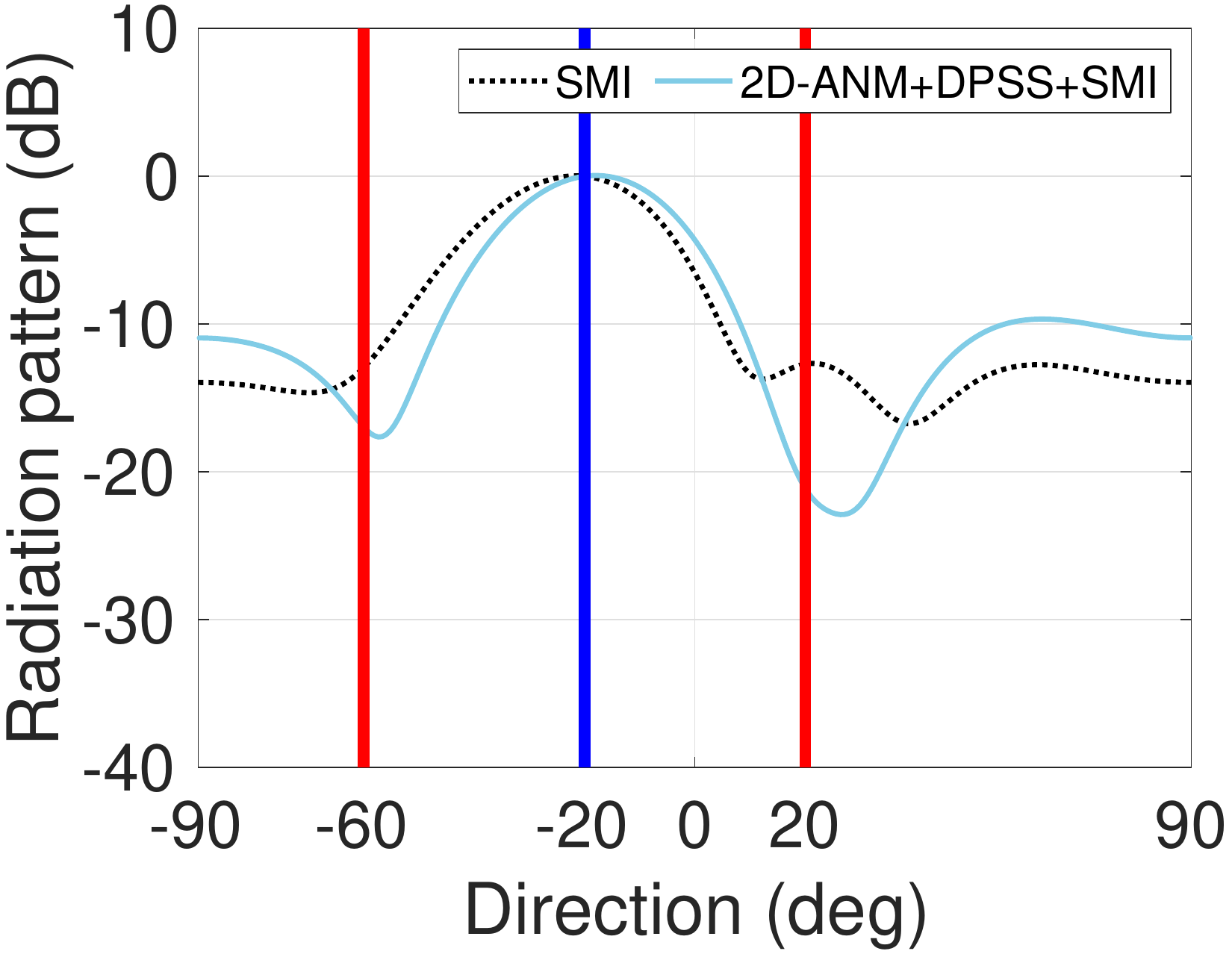}
\centerline{\footnotesize{(b) Linear frequency offset}}
\end{minipage}
\\
\begin{minipage}{0.48\linewidth}
\centering
\includegraphics[width=1.67in]{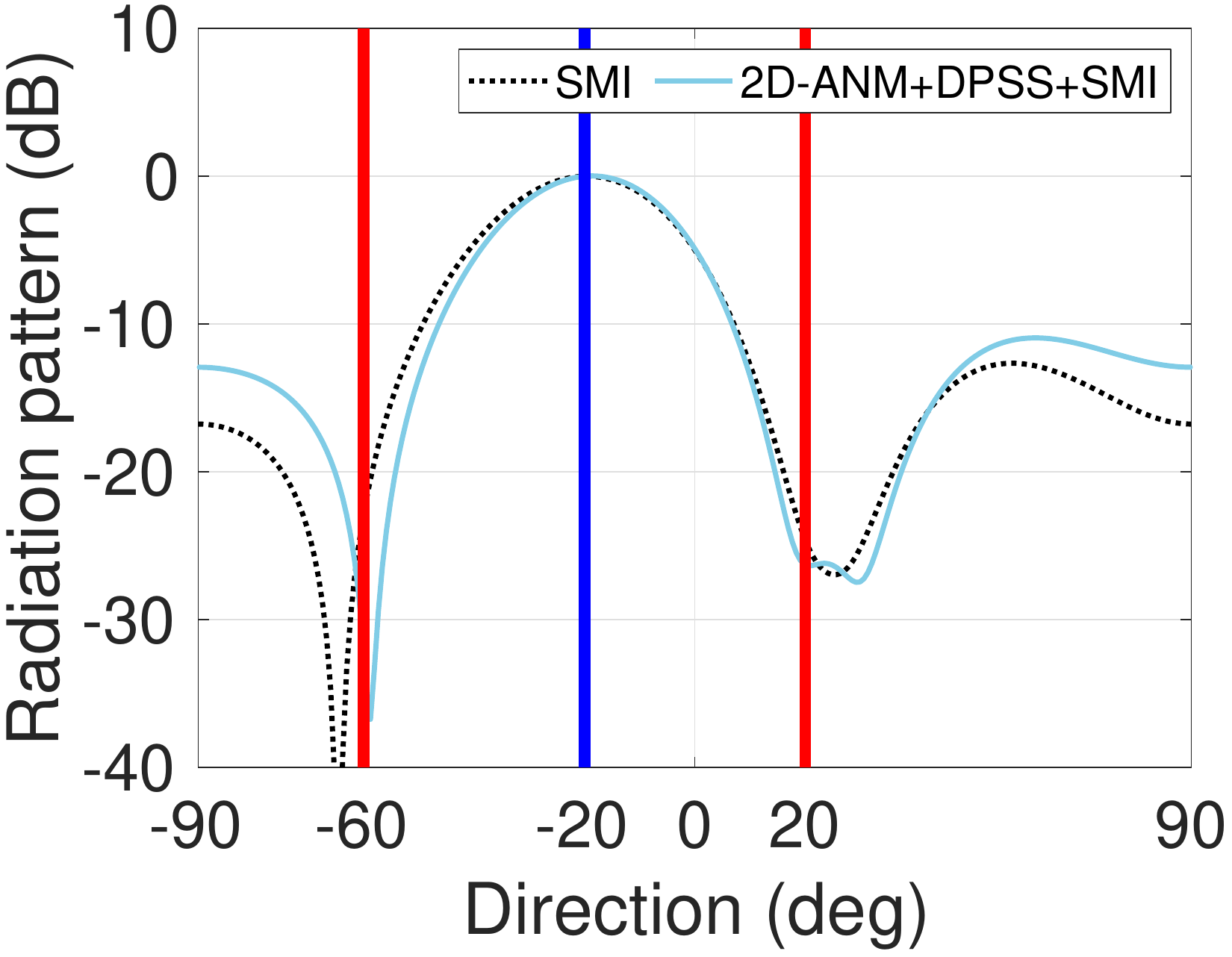}
\centerline{\footnotesize{(c) Zigzag frequency offset}}
\end{minipage}
~
\begin{minipage}{0.48\linewidth}
\centering
\includegraphics[width=1.67in]{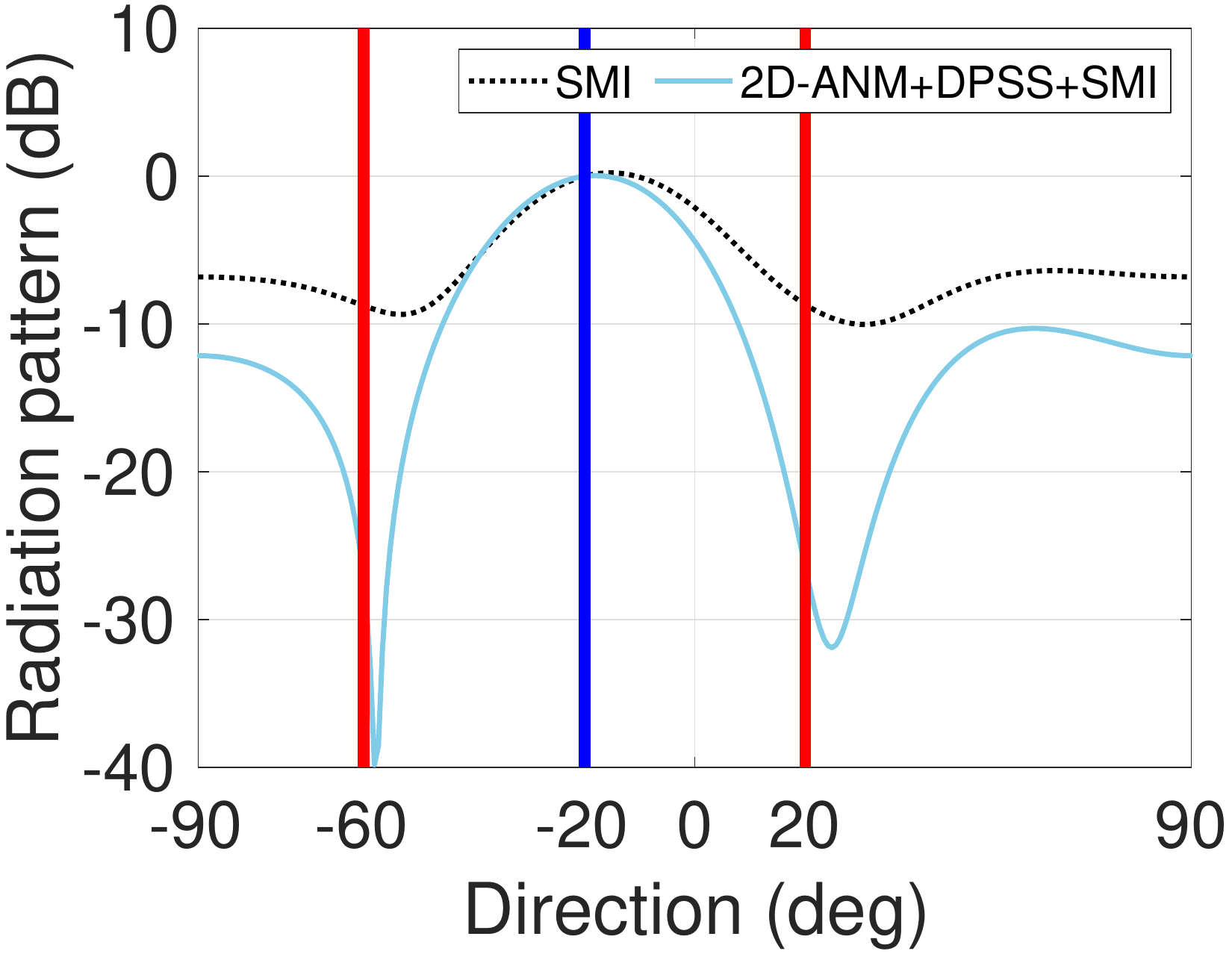}
\centerline{\footnotesize{(d) Random frequency offset}}
\end{minipage}
\caption{Interference cancellation with DBF in the four types of frequency offsets.}
\label{fig:test_ANM_DPSS_SMI_2D_radi}
\end{figure}

Finally, we repeat the experiments from the previous paragraph (comparing ``2D-ANM+DPSS+SMI'' with SMI) for 20 trials each and present the histogram of the radiation pattern (dB) evaluated at the two interferer directions ($-60^\circ$ and $20^\circ$) in Figure~\ref{fig:test_ANM_DPSS_SMI_2D_hg}. In each trial, we generate the four types of frequencies as follows: (a) Static frequency offset: the values are set as three random integers between 1 and 6 scaled by a factor of 0.01 or $-0.01$. (b) Linear frequency offset: the three slopes are set as three random integers between 1 and 6 scaled by a factor of 0.001 or $-0.001$. (c) Zigzag frequency offset: the three slopes are set as three random integers between 1 and 6 scaled by a factor of 0.001 or $-0.001$. (d) Random frequency offset: an independent random copy of the offsets shown in Figure~\ref{fig:test_ANM_DPSS_SMI_2D_freqdrift} (d). The other settings are same as above. It can be seen from Figure~\ref{fig:test_ANM_DPSS_SMI_2D_hg} that in most cases the proposed ``2D-ANM+DPSS+SMI'' method can achieve a lower radiation pattern at both directions $-60^\circ$ and $20^\circ$ with high probability.
%\SLo{Not sure if it is fair to conclude this, especially in plot (c).}

\begin{figure}[ht]
\begin{minipage}{0.48\linewidth}
\centering
\includegraphics[width=1.67in]{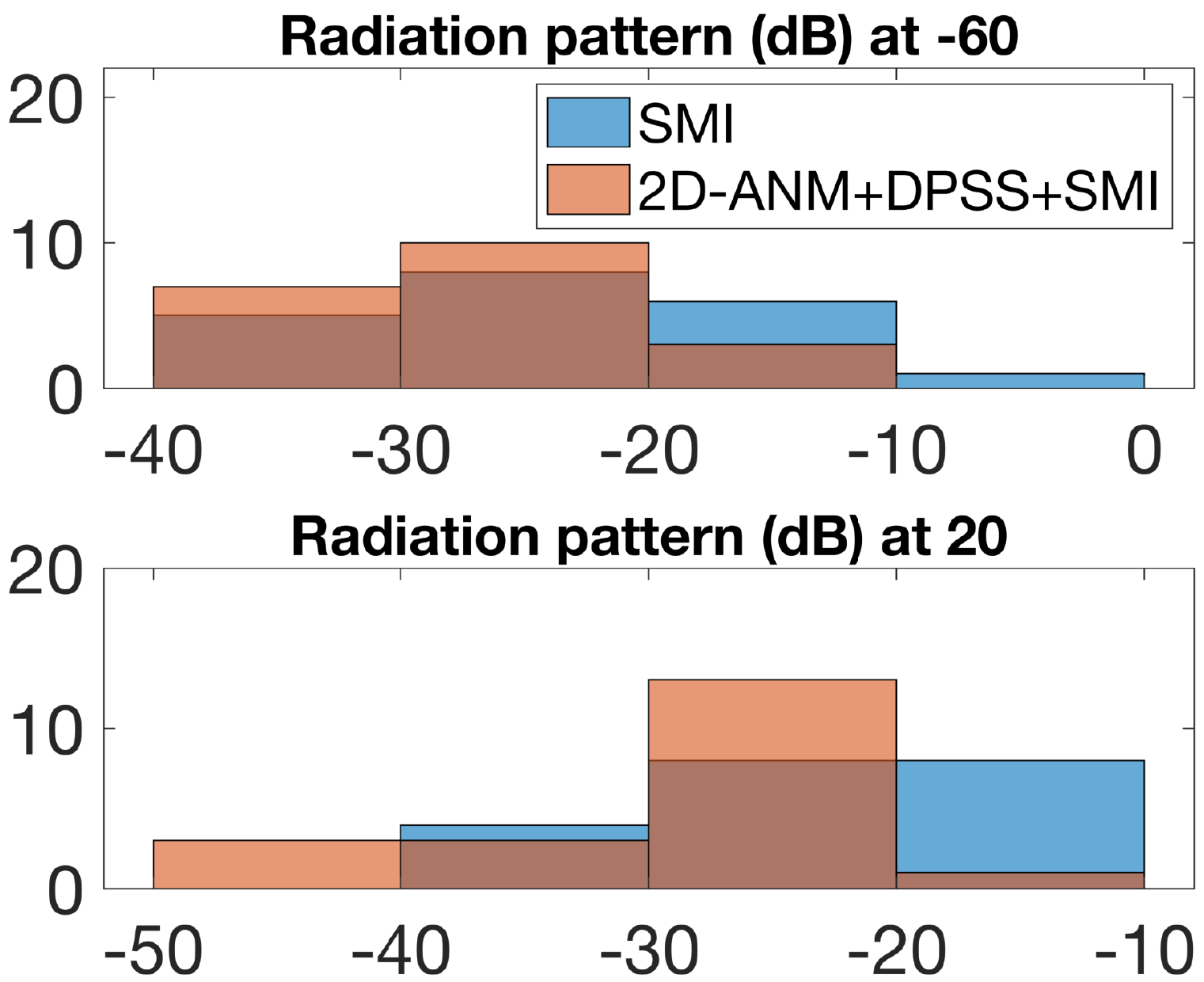}
\centerline{\footnotesize{(a) Static frequency offset}}
\end{minipage}
~
\begin{minipage}{0.48\linewidth}
\centering
\includegraphics[width=1.67in]{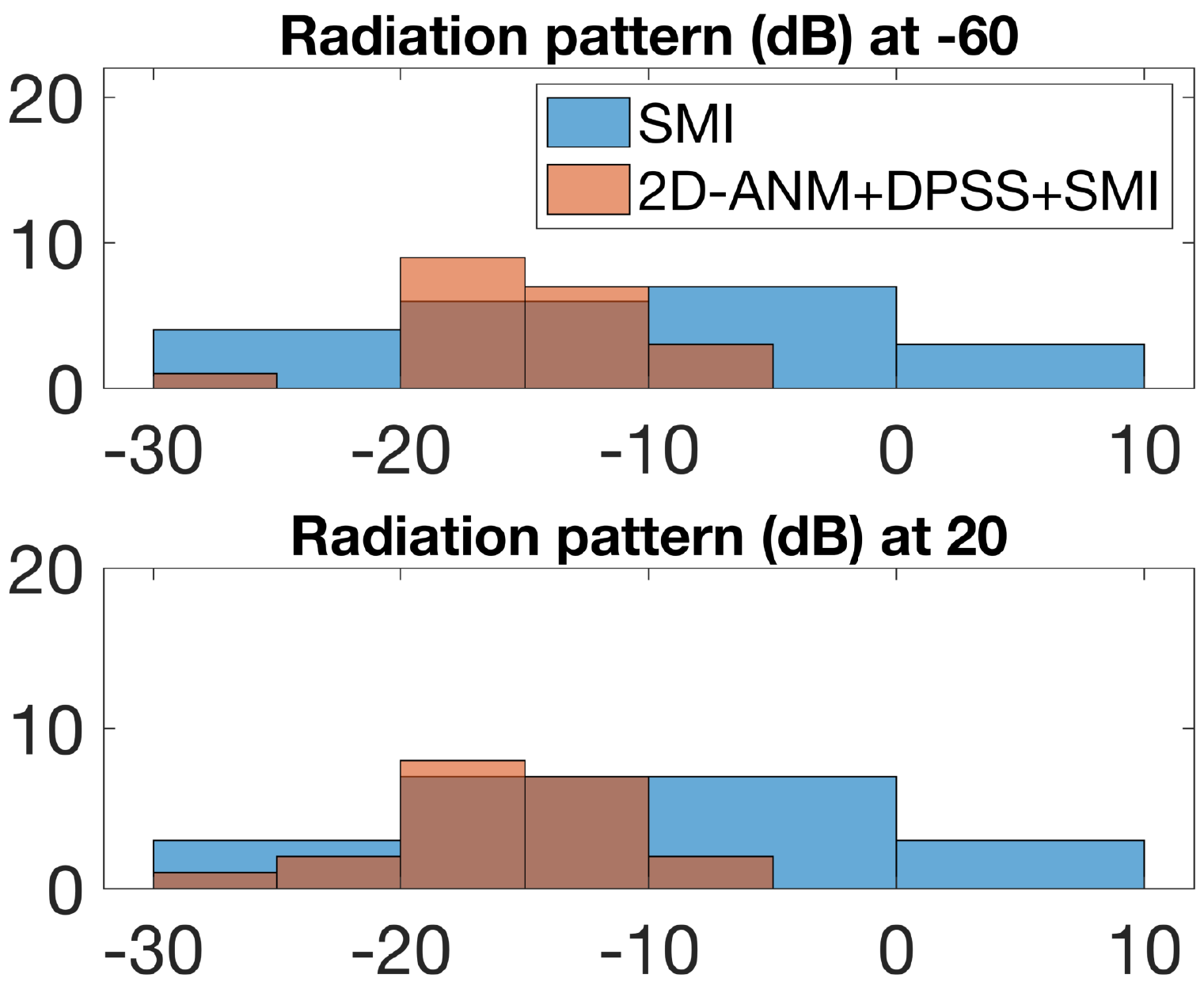}
\centerline{\footnotesize{(b) Linear frequency offset}}
\end{minipage}
\\
\begin{minipage}{0.48\linewidth}
\centering
\includegraphics[width=1.67in]{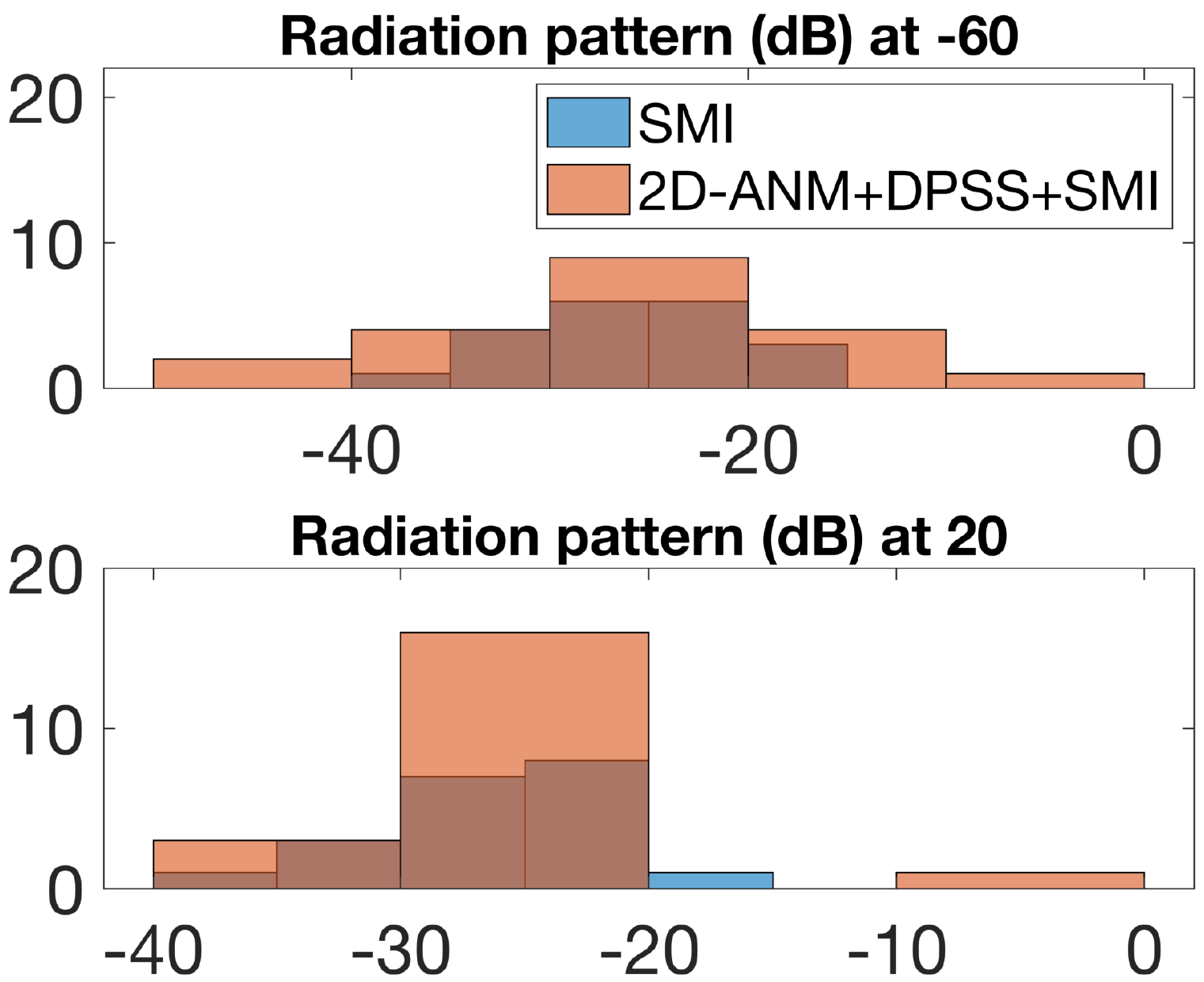}
\centerline{\footnotesize{(c) Zigzag frequency offset}}
\end{minipage}
~
\begin{minipage}{0.48\linewidth}
\centering
\includegraphics[width=1.67in]{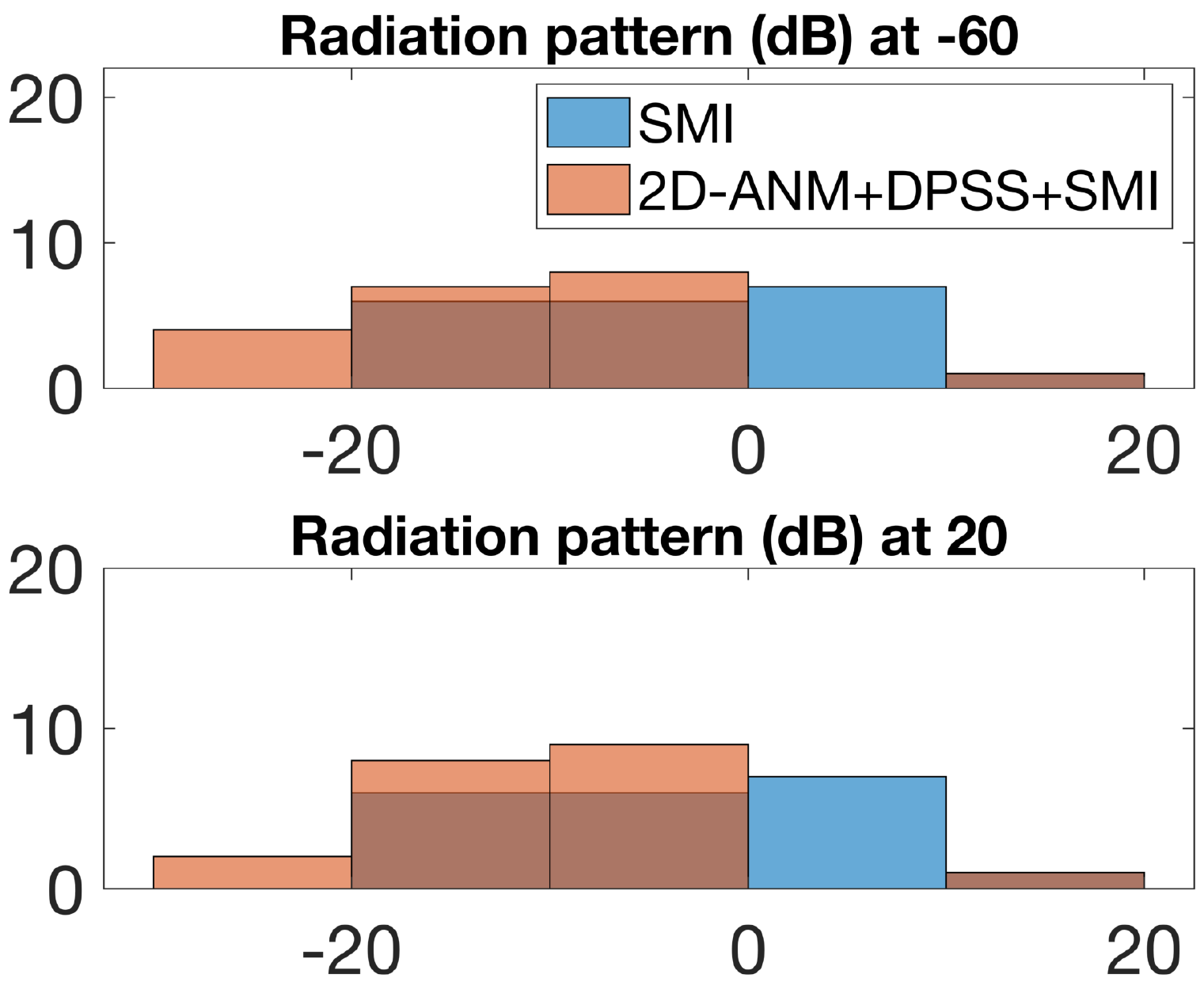}
\centerline{\footnotesize{(d) Random frequency offset}}
\end{minipage}
\caption{The histogram plots of radiation pattern (in dB) evaluated at $-60^\circ$ and $20^\circ$ in the four types of frequency offsets with 20 trials.}
\label{fig:test_ANM_DPSS_SMI_2D_hg}
\end{figure}

%\newpage

\section{Conclusion}
\label{sec:conc}

In this paper, we present two novel ANM-based methods in both 1D and 2D frameworks to address the dynamic problem of interference cancellation with time-varying frequency offset. By noting that solving the corresponding SDP is a computational burden in high-dimensional dataset, we also present a novel fast algorithm to approximately solve the 1D ANM optimization problem. Finally, we conduct a series of experiments to confirm the benefits of our 1D and 2D ANM frameworks compared to the conventional SMI, and we illustrate the computational speedup provided by our fast 1D algorithm. We leave the work of developing a fast algorithm for the 2D ANM method as our future work.

\section*{Acknowledgement}

MW and SL were supported by NSF grant CCF-1704204.

\ifCLASSOPTIONcaptionsoff
  \newpage
\fi

\bibliographystyle{ieeetr}
\bibliography{DBF_dynamics}

%\appendices
%\section{Deriving the Dual SDP}
%\label{apdx:dual_SDP}
%
%[Coming soon...]
%
%\section{xxxxxx}
%
%[Coming soon...]
%

\end{document}